\DeclareMathAlphabet{\mathpzc}{OT1}{pzc}{m}{it}
\newtheorem{cor}{Corollary}
\newtheorem{lem}{Lemma}
\newtheorem{prop}{Proposition}
\newtheorem{defn}{Definition}
\newtheorem{rem}{Remark}
\newtheorem{notn}{Notation}[section]
\newcommand{\C}{\mathcal{C}}
\newcommand{\nustar}{\boldsymbol{\nu^\star}}
\newcommand{\Gn}{\mathds{G}_{\textrm{\sffamily\textsc{Nav}}}}
\newcommand{\Gnm}{\mathds{G}^{\textit{MOD}}_{\textrm{\sffamily\textsc{Nav}}}}
\newcommand{\Gl}{\textrm{\sffamily\textsc{Goal}}}
\newcommand{\mm}{\nu_{\#}}
\newcommand{\obs}{\textrm{\sffamily\textsc{Obstacle}}}
\newcommand{\Crd}{\textrm{\sffamily\textsc{Card}}}
\newcommand{\eg}{\geqq_{\textbf{\texttt{(Elementwise)}}}}
\newcommand{\egg}{>_{\textbf{\texttt{(Elementwise)}}}}
\newcommand{\Q}{\mathscr{P}}
\newcommand{\myar}{\ar@[|(2.5)]}
\newcommand{\myarT}{\ar@[|(3.5)]}
\newcommand{\myarL}{\ar@[|(1.5)]}
\newcommand{\red}{\color{red}}
\newcommand{\Mblue}{\color{DeepSkyBlue4}}
\newcommand{\DGreen}{\color{Green4}}
\newcommand{\BRed}{\color{OrangeRed3}}
\newcommand{\DST}{\mathds{D}}
\begin{document}
\begin{frontmatter}
 \title{Formal-language-theoretic Optimal Path Planning  \\
For Accommodation of  Amortized  Uncertainties and  Dynamic Effects\tnoteref{thnks}}
\tnotetext[thanks]{This work has been supported in part by the U.S. Army Research Laboratory 
and the U.S. Army Research Office under Grant No. W911NF-07-1-0376 and by the Office of Naval Research under Grant No. N00014-08-1-380}
\author[psu]{I. Chattopadhyay \corref{cor1}}
\author[psu]{A. Cascone}
\author[psu]{ A. Ray}

\address[psu]{Department of Mechanical Engineering, The Pennsylvnaia State University, University Park, 16802}
\cortext[cor1]{Corresponding Author}
\begin{abstract}
We report  a 
globally-optimal approach to robotic path planning under uncertainty, based 
on the theory of quantitative measures of formal languages. 
A significant generalization to the language-measure-theoretic path planning algorithm $\nustar$ is presented that explicitly
accounts for average dynamic uncertainties and estimation errors in plan execution. 
The notion of the navigation automaton is generalized to include probabilistic  uncontrollable transitions, which account for
 uncertainties by modeling and planning for   probabilistic  deviations from the computed policy in the course of execution. 
The planning problem is solved by casting it in the form of a performance maximization problem for 
probabilistic finite state automata. In essence we  solve the following optimization problem: Compute the navigation policy which maximizes the probability of reaching the goal, while simultaneously
minimizing the probability of hitting an obstacle.
Key novelties of the proposed approach include the modeling of uncertainties using the concept of 
uncontrollable transitions, and the solution of the ensuing optimization problem using a highly efficient search-free combinatorial approach to 
maximize quantitative measures of probabilistic regular languages. 
Applicability of the algorithm in various models of robot navigation has been shown with experimental validation  on a two-wheeled mobile robotic platform (SEGWAY RMP 200) in a
laboratory environment.
\end{abstract}

\begin{keyword}


Language Measure  \sep
Probabilistic Finite State Machines  \sep
Robotics  \sep
Path Planning  \sep
Supervisory Control 
\end{keyword}

\end{frontmatter}


\allowdisplaybreaks{

%
\section{Introduction \& Motivation}
\vspace{0pt}  
The objective of this paper is to report  a 
globally-optimal approach to path planning under uncertainty, based 
on the theory of quantitative measures of formal languages.
The field of trajectory and
motion planning is enormous, with applications in such diverse
areas as industrial robots, mobile robot navigation,
spacecraft reentry, video games and even drug design. Many of
the basic concepts are presented in ~\cite{Lat91} and in  recent comprehensive surveys~\cite{Lav06}. In the context of planning for mobile
robots and manipulators much of the literature on path and
motion planning is concerned with finding collision-free
trajectories~\cite{KK92}. A great deal of the complexity in
these problems arises from the topology of the robot's
configuration space, called the $\C$-Space. Various analytical techniques, such as wavefront expansion~\cite{BK02}
and cellular decomposition, have been reported in recent literature~\cite{Lp87}, which partition the $\C$-Space into a finite number
of regions with the objective of
reducing the motion planning problem as identification of a
sequence of neighboring cells between the initial and final (i.e., goal)
regions. Graph-theoretic search-based techniques have been used somewhat
successfully in many wheeled ground robot path planning
problems and have been used for some UAV planning problems,
typically radar evasion~\cite{AniHamHu03}. These approaches typically
suffer from complexity issues arising from expensive searches, particularly in complicated configuration spaces.  To circumvent the complexity associated
with graph-based planning, sampling based planning
methods~\cite{BLL90} such as probabilistic roadmaps have been proposed.
However, sampling based  approaches are only probabilistically
complete (i.e. if a feasible solution exists it will be found,
given enough time) but there is no guarantee of finding a
solution within a specified time, and more often than not, global route optimality is not guaranteed. Distinct from 
these general approaches, there exist reported techniques that explicitly make use of  physical aspects of specific  problems for planning $e.g.$ use of vertical wind 
component for generating optimal trajectories for UAVs~\cite{Lan08}, feasible collision-free trajectory generation for cable driven platforms~\cite{LOZRC09}, and the recently reported approach employing angular processing~\cite{Ort09}.
\subsection{Potential Field-based Planning Methodology} 
Among reported deterministic approaches, methods based on 
artificial potential fields have been extensively investigated, often referred to cumulatively as potential field methods (PFM).
The idea of imaginary forces acting
on a robot were suggested by several authors inclding \cite{AH83} and \cite{Kh85}. In these approaches
obstacles exert repulsive forces onto the robot, while the target applies an attractive force to the robot. The
resultant of all forces determines the subsequent direction and speed of travel. One of the
reasons for the popularity of this method is its simplicity and elegance. Simple PFMs can be implemented
quickly and initially provide acceptable results without requiring many refinements. \cite{Kr84} has suggested a
generalized potential field method that combines global and local path planning. Potential field based techniques have been also successfully employed in 
multi-robot co-operative planning scenarios~\cite{KGZ07,SHK08}, where other techniques prove to be inefficient and impractical.

While the potential field principle is particularly attractive because of its
elegance and simplicity, substantial shortcomings have been identified  that are inherent
to this principle. The interested reader is referred to \cite{BK91-1} for a systematic criticism of PFM-based planning,
 where the authors cite the underlying differential equation based analysis as the source of the problems, and the fact that it combines the robot and
 the environment into one unified system. 
Key problems  inherent to PFMs, independent of the particular implementation, are:
\begin{enumerate}
 \item \textbf{Trap situations due to local minima:} Perhaps the best-known  problem with PFMs are possible trap-situations \cite{AH83,Ti90}, which occur when when the robot runs into
a dead end, due to the existence of a local extrema in the potential field. 
Trap-situations can be remedied with heuristic recovery rules, which are likely to result in  non-optimal paths. 
\item \textbf{No passage between closely spaced obstacles:} A  severe problem with PFMs
occurs when the robot attempts to travel through narrow corridors thereby  experiencing repulsive
forces simultaneously from opposite sides, leading to wavy trajectories, no passage etc.
\item \textbf{Oscillations in the presence of obstacles:} Presence of high obstacle clutter often leads to unstable motion, due to the 
complexity of the resultant potential.
\item \textbf{Effect of past obstacles:} Even after the robot has already passed an obstacle, the latter keeps affecting the 
robot motion for a significant period of time (until the repulsive potential dies down).
\end{enumerate}
These disadvantages become more apparent when the PFM-based methods are implemented in high-speed real-time systems; simulations and slow speed experiments
often conceal the issues; probably contributing to the widespread popularity of potential planners.
\subsection{The $\nustar$ Planning Algorithm}
Recently, the authors reported 
a novel path planning algorithm $\nustar$~\cite{CMR08}, 
 that models the navigation
problem in the framework of Probabilistic Finite State
Automata (PFSA) and computes optimal plans via optimization of
the PFSA from a strictly control-theoretic viewpoint. 
$\nustar$ uses cellular decomposition of the workspace, and assumes 
that the blocked grid locations can be easily estimated, upon which the planner
computes an optimal navigation gradient that is used to obtain the routes.
This navigation gradient is computed by optimizing the quantitative measure of the probabilistic formal language
generated by the associated navigation automaton.
The key advantages can be enumerated as:
\begin{enumerate}
\item \textbf{$\nustar$ is fundamentally distinct
from a search:} The search problem is replaced by a sequential
solution of sparse linear systems. On completion of cellular decomposition,
$\nustar$ optimizes the resultant PFSA via a iterative
sequence of combinatorial operations which elementwise
maximizes the language measure vector~\cite{CR06}\cite{CR07}.
Note that although $\nustar$ involves probabilistic reasoning,
the final waypoint sequence obtained is deterministic.
\item \textbf{Computational efficiency:} The
intensive step in $\nustar$ is a special sparse matrix inversion to compute the language
measure. The time complexity of each iteration step can be
shown to be linear in problem size implying significant numerical advantage over search-based
methods for high-dimensional problems.
\item  \textbf{Global monotonicity:} The solution
iterations are globally monotonic, $i.e$, each iteration yields a better approximation to the final optimal solution.
The final waypoint sequence is generated essentially by
following the measure gradient which has a unique  maxima  at the
goal. %
\item \textbf{Global Optimality:} It can be shown that trap-situations are a 
mathematical impossibility for $\nustar$.
\end{enumerate}
The optimal navigation gradient produced by $\nustar$ is reminiscent  of potential field methods
\cite{BLL90}. However, $\nustar$ automatically generates, \textit{and optimizes} this
 gradient; no ad-hoc potential function is necessary.
\subsection{Focus of Current Work \& Key Contributions}
The key focus of this paper is extension of the $\nustar$ planning algorithm to optimally handle 
execution uncertainties. It is well recognized by domain experts that merely coming up with a navigation plan is not sufficient; the computed plan 
must be executed in the real world by the mobile robot, which often cannot be done exactly and precisely due to measurement noise in 
the exteroceptive sensors,  imperfect actuations, and external disturbances.
The idea of planning under uncertainties is not particularly new, and good surveys of reported methodologies exist~\cite{JTCL05}. 
   In chronological order, the main family of reported approaches can be enumerated  as follows:
\begin{itemize}
 \item Pre-image Back-chaining \cite{LP84,ll92,fm98} where the plan is synthesized by computing a set of configurations 
from which the robot can possibly reach the goal, and then propgating this \textit{preimage} recursively backward or \textit{back-chaining}, a problem solving approach
originally proposed in \cite{Ni80}.
\item Approach based on sensory uncertainty fields (SUF) \cite{tl92,tk96,vt98,rt99}
 computed over the collision-free subset of the robot's configuration space, which reflects expected uncertainty (distribution of possible errors)
in the sensed configuration that would be computed
by matching the sensory data against a known  environment
model ($e.g.$ landmark locations). A planner
then makes use of the computed SUF to generate paths
that minimize expected errors.
\item  Sensor-based planning approaches \cite{as94,bsa95,kbsc97}, which consider explicit uncertainty models of various motion primitives to compute a feasible robust plan
composed of sensor-based motion commands  in polygonal environments, with significant emphasis on wall-following schemes.
\item  Information space based approach  using the Bellman principle of stochastic dynamic programming \cite{BF95}, which introduced key concepts such as 
setting up the problem in a probabilistic framework, and demanding that the optimal plan maximize the probability of reaching the goal. However, the 
main drawback was the exponential dependence on the dimension of the computed information space.
\item  The set-membership approach \cite{PS95}
which performs a local
search,  trying to deform a path into one that respects uncertainty
constraints imposed by arbitrarily shaped uncertainty sets. Each hard constraint is turned into a soft penalty function,
and the gradient descent algorithm is employed, hoping convergence to an admissible solution.
\item  Probabilistic approaches based on disjunctive linear programming \cite{blw06,b06}, with emphasis on UAV applications.
The key limitation is the inability to take into account exteroceptive sensors, and also the assumption that dead-reckoning is independent of the 
path executed. Later extensions of this approach use  particle representations of the distributions, implying wider applicability.
\item  Adaptation of  search strategies in extended spaces \cite{lf00,LG03,gs05,gs07}, which consider 
the classical search problem in configuration spaces augmented with uncertainty information.

\item Approach based on Stochastic Motion Roadmaps (SRM)~\cite{ASG07}, which combines  sampling-based roadmap representation of the configuration
space, with  the theory of Markov Decision Processes, to yield models that can be subsequently 
optimized via value-iteration based infinite horizon dynamic programming, leading to plans that maximize the probability of reaching the goal.
\end{itemize}

The current work adds a new member to the family of existing approaches  to address globally  optimal path planning under uncertainties.
The key novelty of this paper is the association of \textit{uncertainty with the notion of uncontrollability in a controlled system}. The navigation automaton introduced in ~\cite{CMR08} is augmented with
uncontrollable transitions which essentially captures the possibility that the agent may execute actuation sequences (or motion primitives) that 
are not coincident with the planned moves. The planning objective is simple: \textit{Maximize the probability of reaching the goal, while simultaneously minimizing the probability of hitting any obstacle.} Note that, in this respect, we are essentially solving the exact same problem investigated by ~\cite{ASG07}. However our solution approach is very different.
Instead of using value iteration based dynamic programming, we use the theory of language-measure-theoretic optimization of probabilistic finite state automata~\cite{CR07}.
Unlike the SRM approach, the proposed algorithm does not require the use of local auxiliary planners, and also needs to make no assumptions on the structure of the configuration space
to guarantee iterative convergence. The use of arbitrary penalties for reducing the weight on longer paths is also unnecessary, which makes the proposed $\nustar$ under uncertainties completely free 
from heuristics. We show that all the key advantages that $\nustar$ has over the state-of-art carries over to this more general case; namely that of significantly better
computational efficiency, simplicity of implementation, and achieving global optimality via monotonic sequence of \textit{search-free combinatorial iterative improvements}, with guaranteed polynomial convergence. 
The proposed approach thus solves the inherently non-convex optimization~\cite{SS08} by mapping the physical specification to  an optimal control problem for probabilistic finite state machines (the navigation automata), which admits efficient combinatorial solutions via the language-measure-theoretic approach. 
The source of  many uncertainties, namely modeling uncertainty,
disturbances, and uncertain localization, is averaged over (or amortized) for adequate representation in the automaton framework. This may be viewed as a source of approximation in the proposed approach; however we show in simulation and in actual experimentation that the amortization is indeed a good approach to reduce planning complexity and results in highly robust planning decisions.
%
%
%
%
%
%
%
%
Thus the modified language-measure-theoretic approach presented in this paper, potentially lays the framework  for seamless integration of 
 data-driven and  physics-based models with the high-level decision processes; this is a crucial advantage, and goes to address a key issue in autonomous robotics, $e.g.$, 
in a path-planning scenario with mobile robots, the optimal
path may be very different for different speeds, platform capabilities and mission specifications.
Previously reported approaches to handle these effects using exact differential models of platform dynamics results in overtly complex solutions that do not respond well
to modeling uncertainties,  and more importantly to possibly non-stationary environmental dynamics and evolving mission contexts. Thus the measure-theoretic approach enables the development of true \textbf{Cyber-Physical algorithms} for
control of autonomous systems; algorithms that operate in the logical domain while optimally integrating, and responding to, physical information in the planning process.

The rest of the paper is organized in seven sections. Section~\ref{BriefReview} briefly explains the
language-theoretic models considered in this paper, reviews
the language-measure-theoretic optimal control of
probabilistic finite state machines and presents the necessary details
of the reported $\nustar$ algorithm.
Section~\ref{secmodel} presents the modifications to the 
navigation model to incorporate the effects of dynamic uncertainties within
the framework of probabilistic automata. Section~\ref{secoptplan} presents the pertinent
theoretical results and establishes the main planning algorithm.
Section~\ref{secuncertain} develops a formulation to identify the key amortized uncertainty parameters of the PFSA-based navigation model
from an  observed dynamical response of a given platform. The proposed algorithm is summarized with pertinent comments in 
Section~\ref{secsummalgo}. The theoretical development is verified in high-fidelity simulations on different navigation models and 
validated in experimental runs
on the SEGWAY RMP 200 in section~\ref{secvv}.
The paper is summarized and concluded in Section~\ref{secsummary}
with recommendations for future work. 
%
\section{Preliminaries: Language Measure-theoretic Optimization Of Probabilistic Automata}\label{BriefReview}
This section summarizes the signed real measure of regular
languages; the details are reported in~\cite{R05}.
Let $G_i \equiv \langle Q,\Sigma,\delta,q_{i},Q_{m}\rangle$ be a
trim (i.e., accessible and co-accessible) finite-state automaton
model that represents the discrete-event dynamics of a physical
plant, where $Q=\{q_k: k\in \mathcal{I}_Q\}$ is the set\index{set}
of states and $\mathcal{I}_Q\equiv\{1, 2, \cdots, n\}$ is the
index set of states; the automaton starts with the initial state
$q_{i}$; the alphabet of events is $\Sigma=\{\sigma_k: k\in
\mathcal{I}_\Sigma\}$, having $\Sigma \bigcap \mathcal{I}_Q =
\emptyset$ and $\mathcal{I}_\Sigma\equiv\{1, 2, \cdots, \ell\}$ is
the index set of events; $\delta:Q\times\Sigma \rightarrow Q$ is
the (possibly partial) function of state transitions; and
$Q_{m}\equiv \{q_{m_1},q_{m_2},\cdots ,q_{m_l}\} \subseteq Q$ is
the set of marked (i.e., accepted) states with $q_{m_k}=q_j$ for
some $j \in \mathcal{I}_Q$. Let $\Sigma^{*}$  be the Kleene closure of $\Sigma$, i.e., the set\index{set} of all finite-length strings made of the events
belonging to $\Sigma$ as well as the empty string $\epsilon$ that
is viewed as the identity  of the monoid $\Sigma^{*}$ under the
operation of string concatenation, i.e., $\epsilon s=s=s\epsilon$.
The state transition map $\delta$ is recursively extended to its reflexive and transitive closure  $\delta:Q \times \Sigma^* \rightarrow Q$ by defining
$\forall q_j \in Q, \ \delta(q_j,\epsilon) = q_j $ and $
\forall q_j \in Q, \sigma \in \Sigma,  s \in \Sigma^\star, \ \delta(q_i,\sigma s) = \delta(\delta(q_i,\sigma),s)
$
%
\begin{defn}\label{Lgen}
The language $L(q_i)$ generated by a DFSA $G$ initialized at the
state $q_i\in Q$ is defined as:
$
    L(q_i) = \{s \in \Sigma^* \ | \ \delta^*(q_i, s) \in Q \}
$
The language $L_m(q_i)$ marked by the DFSA $G$ initialized at the
state $q_i\in Q$ is defined as:
$
    L_m(q_i) = \{s \in \Sigma^* \ | \ \delta^*(q_i, s) \in Q_m \}
$
\end{defn}
\begin{defn}
For every $q_j\in Q$, let $L(q_i, q_j)$ denote the set of all
strings that, starting from the state $q_i$, terminate at the
state $q_j$, i.e.,
$
L_{i,j} = \{ s \in \Sigma^* \ | \ \delta^*(q_i, s) = q_j \in Q \}
$
\end{defn}
The formal language measure is first defined for terminating
plants~\cite{G92,G92-2} with sub-stochastic event generation
probabilities $i.e.$ the event generation probabilities at
each state summing to strictly less than unity.
\begin{defn} \label{pitildefn}
The event generation
probabilities are specified by the function $\tilde {\pi }:\Sigma^\star
\times \,Q\to [0,\,1]$ such that $\forall q_j \in Q,\forall
\sigma _k \in \Sigma , \forall s\in \Sigma^\star ,$
\begin{enumerate}
\item[(1)] $\tilde{\pi}({\sigma_k},q_j) \triangleq
\tilde{\pi}_{jk} \in [0, 1)$; \ $\sum_{k} \tilde{\pi}_{jk} = 1
 - \theta, \ \mathrm{with} \  \theta \in (0,1) $; \item[(2)] $\tilde{\pi}(\sigma,q_j) = 0$ if
$\delta(q_j,\sigma)$ is undefined; $\
\tilde{\pi}(\epsilon,q_j) = 1$; \item[(3)]
$\tilde{\pi}({\sigma_k s},q_j)= \tilde{\pi}({\sigma_k}, q_j)\
\tilde{\pi}(s, \delta(q_j,\sigma_k))$.
\end{enumerate}
The $n\times \ell$ event cost matrix is defined as:
$
\mathbf{\widetilde{\Pi}}\vert_{ij}=\tilde{\pi}(q_i,\sigma_j)
$
\end{defn}\vspace{0pt}
\begin{defn} \label{pifn}
The state transition probability $\pi: Q \times Q \rightarrow [0, 1)$,
of the DFSA\index{DFSA} $G_i$ is defined as follows:
$
\forall q_i, q_j \in Q, \pi_{ij} =
    \displaystyle \sum_{\sigma\in\Sigma \ \mathrm{s.t.} \  \delta(q_i,\sigma)=q_j } \tilde{\pi}(\sigma, q_i)
$
The $n\times n$ state transition probability matrix is
defined as
$
\mathbf{\Pi}\vert_{jk} = \pi(q_i,q_j)
$
\end{defn}\vspace{0pt}
The set $Q_m$ of  marked states is partitioned into $Q_m^+$ and
$Q_m^-$, i.e.,  $Q_m = Q_m^+ \cup Q_m^-$ and $Q_m^+ \cap Q_m^- =
\emptyset$, where $Q_m^+$ contains all \textit{good} marked states
that we desire to reach, and $Q_m^-$ contains all \textit{bad}
marked states that we want to avoid, although it may not always be
possible to completely avoid the \textit{bad} states while
attempting to reach the \textit{good} states. To characterize
this, each marked state is assigned a real value based on the
designer's perception of its impact on the system performance.
\begin{defn} \label{charfn}
The characteristic function $\chi:Q \rightarrow [-1, 1]$ that
assigns a signed real weight to state-based sublanguages
$L(q_i,q)$ is defined as:
\begin{equation}\label{chi}
    \forall q \in Q, \quad \chi(q) \in \left\lbrace
        \begin{array}{cc}
            [-1, 0), & q \in Q_m^-\\
            \{ 0 \}, & q \notin Q_m\\
            \rm{(0, 1]}, & \it{q} \in Q_m^+
        \end{array}
    \right.
\end{equation}
The state weighting vector, denoted by $\boldsymbol{\chi} =
[\chi_1 \ \chi_2 \ \cdots \ \chi_{n}]^T$, where $\chi_j\equiv
\chi(q_j)$ $\forall j \in \mathcal{I}_Q$, is called the
$\boldsymbol{\chi}$-vector. The $j$-th element $\chi_j$ of
$\boldsymbol{\chi}$-vector is the weight assigned to the
corresponding terminal state $q_j$.
\end{defn}
In general, the marked language $L_m(q_i)$ consists of both good
and bad event strings that, starting from the initial state $q_i$,
lead to $Q_m^+$ and $Q_m^-$ respectively. Any event string
belonging to the language $L^0 = L(q_i) - L_m(q_i)$ leads to one
of the non-marked states belonging to $Q - Q_m$ and $L^0$ does not
contain any one of the good or bad strings. Based on the
equivalence classes defined in the Myhill-Nerode Theorem, the
regular languages $L(q_i)$ and $L_m(q_i)$ can be expressed as:
$
L(q_i) = \bigcup_{q_k \in Q} L_{i,k}
$ and
$
L_m(q_i) = \bigcup_{q_k \in Q_m} L_{i,k} =  L_m^+ \cup L_m^-
$ where the sublanguage $L_{i,k}\subseteq G_i$ having the initial
state $q_i$ is uniquely labelled by the terminal state $q_k, k \in
\mathcal{I}_Q$ and $L_{i,j} \cap L_{i,k} = \emptyset$ $\forall j
\neq k$; and $L_m^+\equiv\bigcup_{q_k \in Q_m^+} L_{i,k}$ and
$L_m^-\equiv\bigcup_{q_k \in Q_m^-} L_{i,k}$ are good and bad
sublanguages of $L_m(q_i)$, respectively. Then, $L^0 =
\bigcup_{q_k \notin Q_m} L_{i,k}$ and $L(q_i) = L^0 \cup L_m^+
\cup L_m^-$.\vspace{0pt}

A signed real measure $\mu^i:{2^{L(q_i)}} \rightarrow
\mathbb{R}\equiv(-\infty,+\infty)$ is constructed on the
$\sigma$-algebra $2^{L(q_i)}$ for any $i \in  \mathcal{I}_Q$;
interested readers are referred to~\cite{R05} for the
details of measure-theoretic definitions and results. With the
choice of this $\sigma$-algebra, every singleton set made of an
event string $s \in L(q_i)$ is a measurable set. By Hahn
Decomposition Theorem~\cite{R88}, each of these measurable sets
qualifies itself to have a numerical value based on the above
state-based decomposition of $L(q_i)$ into $L^0$(null),
$L^{+}$(positive), and $L^-$(negative) sublanguages.\vspace{-0pt}
\begin{defn} \label{measurefn}
Let $\omega \in L(q_i, q_j)\subseteq 2^{L(q_i)}$. The signed
real measure $\mu^i$ of every singleton string set $ \{ \omega
\} $ is defined as:
$
\mu^i(\{\omega \})\coloneqq\tilde{\pi}(q_i,\omega)\chi(q_j)
$.
The signed real measure of a sublanguage $L_{i,j} \subseteq
L(q_i)$ is defined as:
$\mu_{i,j} \coloneqq \mu^i(L(q_i, q_j)) = \left( \sum_{\omega\in
L(q_i, q_j)} \tilde{\pi}( q_i,\omega)\right)\chi_j
$
\end{defn}
Therefore, the signed real measure of the language  of a DFSA
$G_i$ initialized at $q_i \in Q$, is defined as
$
\mu_i\coloneqq \mu^i(L(q_i)) = \sum_{j\in \mathcal{I}_Q}
\mu^i(L_{i,j})
$. It is shown in \cite{R05} that the language measure
can be expressed as
$
\mu_i = \sum_{j\in \mathcal{I}_Q} \mathcal{\pi}_{ij} \mu_j +
\chi_i
$. The language measure vector, denoted as \mbox{\boldmath $\mu$} =
$[\mu_1 \ \mu_2 \ \cdots \ \mu_n]^{T}$, is called the
\mbox{\boldmath $\mu$}-vector.  In vector form, we have
$
\mbox{\boldmath$\mu$} = \mathbf{\Pi}\mathbf{\mu}+\boldsymbol{\chi}
$
whose solution is given by
$
\mbox{\boldmath$\mu$} = (\mathbf{I} - \mathbf{\Pi})^{-1}
\boldsymbol{\chi}
$
The inverse  exists for
terminating plant models~\cite{G92} because
$\boldsymbol{\Pi}$ is a contraction
operator~\cite{R05} due to the strict inequality
$ \sum_j \Pi_{ij} < 1$. The residual $\theta_i = 1 - \sum_j
\Pi_{ij}$ is  referred to as the termination probability for
state $q_i \in Q$. We extend the analysis to non-terminating
plants with stochastic transition probability matrices ($i.e.$
with $\theta_i = 0,\ \forall q_i \in Q$) by renormalizing the
language measure~\cite{CR06} with respect to the uniform
termination probability of a limiting terminating model as
described next.

Let $\widetilde{\Pi}$ and $\Pi$ be the stochastic event
generation and transition probability matrices for a
non-terminating plant $G_i= \langle
Q,\Sigma,\delta,q_{i},Q_{m}\rangle$. We consider the
terminating plant $G_i(\theta)$ with the same DFSA structure $
\langle Q,\Sigma,\delta,q_{i},Q_{m}\rangle$ such that the
event generation probability matrix is given by
$(1-\theta)\widetilde{\Pi}$ with $\theta \in (0,1)$ implying
that the state transition probability matrix is  $(1 -
\theta)\Pi$.
\begin{defn}[Renormalized Measure] \label{defrenormmeas}
The renormalized measure {\small $\nu^i_\theta :
2^{L(q_i(\theta))} \rightarrow [-1,1]$} for the
$\theta$-parametrized terminating plant $G_i(\theta)$ is
defined as: {\small \begin{gather} \forall \omega \in
L(q_i(\theta)), \ \nu^i_\theta(\{\omega\}) = \theta
\mu^i(\{\omega\})
\end{gather}}
The corresponding matrix form is given by
$
\boldsymbol{\nu_\theta} = \theta \ \boldsymbol{\mu}=
\theta \ [I-(1-\theta)\Pi]^{-1} \boldsymbol{\chi} \
\mathrm{with} \ \theta \in (0,1)
$. We note that the vector representation allows for the following notational simplification
$
  \nu_\theta^i(L(q_i(\theta))) = \boldsymbol{\nu_\theta} \big \vert_i
$
The renormalized measure for the non-terminating plant $G_i$ is defined to be $\lim_{\theta \rightarrow o^+}\nu_\theta^i$.
\end{defn}
\vspace{-12pt}
\subsection{Event-driven Supervision of  PFSA}\label{formulationOptimal}
Plant models considered in this paper are
\textit{deterministic} finite state automata (plant) with
well-defined event occurrence \textit{probabilities}. In other
words, the occurrence of events is probabilistic, but the
state at which the plant ends up, \textit{given a particular
event has occurred}, is deterministic. Since no emphasis is placed on the initial state and marked states are completely determined by $\chi$, the models can be completely specified by a
sextuple as:
$G=(Q,\Sigma,\delta,\widetilde{\boldsymbol{\Pi}},\chi,\mathscr{C})
$
\vspace{-1pt}
\begin{defn}[Control Philosophy]\label{contapp}
If $q_i \xrightarrow[\sigma]{} q_k$,
and the  event $\sigma$ is disabled at state
$q_i$, then the  supervisory action is to
prevent the plant from making a transition to the state $q_k$, by
 forcing it to stay at the original state
$q_i$.  Thus disabling any transition $\sigma$ at a given state $q$ results
in   deletion of the original transition and appearance of the self-loop $\delta(q,\sigma) = q$ with the occurrence
probability of $\sigma$ from the state $q$ remaining unchanged
in the supervised and unsupervised plants. For a given plant, transitions that can be disabled in the
sense of Definition~\ref{contapp} are defined to be
\textit{controllable} transitions. The set of
controllable transitions in a plant is denoted $\mathscr{C}$.
\textit{Note controllability is state-based.}
\end{defn}\vspace{0pt}
%
%
\vspace{-10pt}
\subsection[Optimal Supervision Problem]{Optimal Supervision Problem: Formulation \& Solution}
A supervisor disables a subset of the set $\mathscr{C}$ of
controllable transitions and hence there is a bijection
between the set of all possible supervision policies and the
power set $2^{\mathscr{C}}$. That is, there exists $2^{\vert
\mathscr{C} \vert}$ possible supervisors and each supervisor
is uniquely identifiable with a subset of $\mathscr{C}$ and
the language measure $\nu$ allows a quantitative comparison of
different policies.
\vspace{-3pt}
\begin{defn}\label{superior}
For an unsupervised  plant
$G=(Q,\Sigma,\delta,\widetilde{\Pi},\chi,\mathscr{C})$, let
$G^{\dag}$ and $G^{\ddag}$ be the supervised plants with sets
of disabled transitions, $\mathscr{D}^{\dag}\subseteq
\mathscr{C}$ and $\mathscr{D}^{\ddag}\subseteq \mathscr{C}$,
respectively, whose measures are $\boldsymbol{\nu}^{\dag}$ and
$\boldsymbol{\nu}^{\ddag}$. Then, the supervisor that disables
$\mathscr{D}^{\dag}$ is defined to be superior to the
supervisor that disables $\mathscr{D}^{\ddag}$  if
$\boldsymbol{\nu}^{\dag} \eg \boldsymbol{\nu}^{\ddag}$ and
strictly superior if $\boldsymbol{\nu}^{\dag} \egg
\boldsymbol{\nu}^{\ddag}$.
\end{defn}
\vspace{-1pt}
\begin{defn}[Optimal Supervision Problem]\label{pdef}
Given a (non-terminating) plant
$G=(Q,\Sigma,\delta,\widetilde{\Pi},\chi,\mathscr{C})$, the
problem is to compute a supervisor that disables a subset
$\mathscr{D}^\star \subseteq \mathscr{C}$, such that $
\boldsymbol{\nu}^{\star} \eg \boldsymbol{\nu}^{\dag} \ \
\forall \mathscr{D}^{\dag} \subseteq \mathscr{C} $ where
$\boldsymbol{\nu}^{\star}$ and $\boldsymbol{\nu}^{\dag}$ are
the measure vectors of the supervised plants $G^{\star}$ and
$G^{\dag}$ under $\mathscr{D}^{\star}$ and $\mathscr{D}^\dag$,
respectively.
\end{defn}
\begin{rem}\label{remtheta}
The solution to the optimal supervision problem  is obtained
in \cite{CR07,C-PhD} by designing an optimal policy for a
\textit{terminating} plant~\cite{G92-2} with a
sub-stochastic transition probability matrix
$(1-\theta)\widetilde{\Pi}$ with $\theta \in (0,1)$. To ensure
that the computed optimal policy coincides with the one for
$\theta = 0$, the suggested algorithm chooses a \textit{small}
value for $\theta$ in each iteration step of the design
algorithm. However, choosing $\theta$ too small may cause
numerical problems in convergence. Algorithms reported in \cite{CR07,C-PhD}
computes how small a $\theta$ is actually required, $i.e.$,
computes the critical lower bound $\theta_\star$, thus
solving the optimal supervision
problem for a generic PFSA.
It is further shown that the solution obtained is optimal and unique and can be computed by an effective algorithm.
\end{rem}
\begin{defn}\label{defthetamin}
Following Remark~\ref{remtheta}, we note that
algorithms reported in \cite{CR07,C-PhD} compute a lower bound for the
critical termination probability for each iteration of
 such that the disabling/enabling
decisions for the terminating plant coincide with the given
non-terminating  model. We define
$
\theta_{min} = \min_{k} \theta^{[k]}_\star
$
where $\theta^{[k]}_\star$ is the termination probability
computed  in the $k^{th}$
iteration.
\end{defn}
\vspace{-3pt}
\begin{defn}\label{defnustarmeas}
If $G$ and $G^\star$ are the unsupervised and supervised PFSA
respectively then we denote the renormalized measure of the
terminating plant $G^\star(\theta_{min})$ as
$\nu_{\#}^i:2^{L(q_i)} \rightarrow [-1,1]$ (See
Definition~\ref{defrenormmeas}). Hence, in vector notation we
have:
$
 \boldsymbol{\nu_{\#}} = \theta_{min}  [I-(1-\theta_{min})\Pi^\#]^{-1} \boldsymbol{\chi}
$
where $\Pi^\#$ is the transition probability matrix of the supervised plant $G^\star$,  we note that
$\boldsymbol{\nu_{\#}} = \nu^{[K]}$ where $K$ is the total
number of iterations required for convergence.
\end{defn}
For the sake of completeness, the algorithmic approach is shown in Algorithms~\ref{Algorithm02} and \ref{Algorithm01}.
%
\begin{algorithm}[!ht]
 \footnotesize 
  \SetKwData{Left}{left}
  \SetKwData{This}{this}
  \SetKwData{Up}{up}
  \SetKwFunction{Union}{Union}
  \SetKwFunction{FindCompress}{FindCompress}
  \SetKwInOut{Input}{input}
  \SetKwInOut{Output}{output}
  \SetKw{Tr}{true}
   \SetKw{Tf}{false}
  \caption{Computation of Optimal Supervisor}\label{Algorithm02}
\Input{$\mathbf{P}, \ \boldsymbol{\chi}, \ \mathscr{C}$}
\Output{Optimal set of disabled transitions
$\mathscr{D}^{\star}$} \Begin{ Set
$\mathscr{D}^{[0]}=\emptyset$ \tcc*[r]{Initial disabling set}
Set $\widetilde{\Pi}^{[0]}=\widetilde{\Pi} $ \tcc*[r]{Initial
event prob. matrix}
Set $\theta^{[0]}_{\star} = 0.99$, Set $k \ = \ 1$ , Set $
\texttt{Terminate} = $ \Tf \;
 \While{($ \texttt{Terminate}$ == \Tf)}{
 Compute $\theta_{\star}^{[k]}$\tcc*[r]{Algorithm~\ref{Algorithm01}}
 Set $\widetilde{\Pi}^{[k]} = \frac{1-\theta_{\star}^{[k]}}{1-\theta_{\star}^{[k-1]}}\widetilde{\Pi}^{[k-1]}$\;
 Compute $\boldsymbol{\nu}^{[k]}$ \;
 \For{$j = 1$ \textbf{to} $n$}{
\For{$i = 1$ \textbf{to} $n$}{
 Disable  all controllable
  $q_i \xrightarrow[]{\sigma} q_j$ s.t.
$\boldsymbol{\nu}^{[k]}_j < \boldsymbol{\nu}^{[k]}_i $ \;
 Enable all controllable   $q_i \xrightarrow[]{\sigma} q_j$
 s.t.
$\boldsymbol{\nu}^{[k]}_j \geqq \boldsymbol{\nu}^{[k]}_i $ \;
} } Collect all disabled transitions in $\mathscr{D}^{[k]}$\;
\eIf{$\mathscr{D}^{[k]} ==
\mathscr{D}^{[k-1]}$}{$\texttt{Terminate} =$ \Tr \;}{$k \ = \
k \ + \ 1$ \;}
 }
 $\mathscr{D}^{\star} \ = \ \mathscr{D}^{[k]}$
 \tcc*[r]{Optimal disabling set}
  }\vspace{0pt}
\end{algorithm}
\begin{algorithm}[!hb]
 \footnotesize 
  \SetKwInOut{Input}{input}
  \SetKwInOut{Output}{output}
  \SetKw{Tr}{true}
   \SetKw{Tf}{false}
  \caption{Computation of the Critical Lower Bound $\theta_{\star}$ }\label{Algorithm01}
\Input{$\mathbf{P}, \ \boldsymbol{\chi}$}
\Output{$\theta_{\star}$} \Begin{ Set $\theta_{\star} =
1$, Set $\theta_{curr} = 0$\; Compute $\Q$ , $M_0$ ,
 $M_1 $,
  $M_2 $\;
 \For{$j = 1$ \textbf{to} $n$}{
\For{$i = 1$ \textbf{to} $n$}{ \eIf{$\left (\Q
\boldsymbol{\chi} \right )_i - \left (\Q \boldsymbol{\chi}
\right )_j \neq 0$}{$\theta_{curr} = \frac{1}{8M_2}\big \vert
\left (\Q \boldsymbol{\chi} \right )_i - \left (\Q
\boldsymbol{\chi} \right )_j \big \vert $}{
\For {$r=0$ \textbf{to} $n$}{ \eIf{$\left
(M_0\boldsymbol{\chi} \right )_i \neq \left
(M_0\boldsymbol{\chi} \right )_j$}{\textbf{Break}\;}{\If
{$\left ( M_0 M_1^r \boldsymbol{\chi} \right )_i \neq \left (
M_0 M_1^r \boldsymbol{\chi} \right )_j$}{\textbf{Break}\;} } }
\eIf{$r==0$}{$\theta_{curr} = \frac{\vert \left \{ (M_0 -\Q
)\boldsymbol{\chi} \right \}_i - \left \{ (M_0 -\Q
)\boldsymbol{\chi} \right \}_j \vert}{8M_2} $\;} {\eIf { $r >
0$
 \textbf{AND} $ r \leq n $ }{ $ \theta_{curr} = \frac{\vert
\left (M_0 M_1\boldsymbol{\chi} \right )_i - \left (M_0 M_1
\boldsymbol{\chi} \right )_j \vert }{2^{r+3}M_2} $ \;}{ $
\theta_{curr} = 1 $ \;}}}
$\theta_{\star}$ = $\mathrm{min} (
\theta_{\star} , \theta_{curr} ) $ \;
 } } }
\end{algorithm}
%
%
%
\vspace{-10pt}
\subsection{Problem Formulation: A PFSA Model of Autonomous Navigation}\label{sectionformulation}
%
\begin{figure}[!ht]
\center
\begin{minipage}{3.5in}
\center
\includegraphics[width=3.5in]{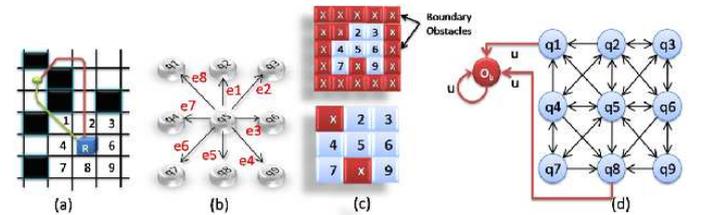}
\end{minipage}
\caption{\textbf{(a)} shows the vehicle (marked "R")  with the
obstacle positions shown as black squares. The Green4 dot
identifies the goal \textbf{(b)} shows the finite state
representation of the possible one-step moves from the current
position. \textbf{(d)} shows uncontrollable transitions "u"
from states corresponding to blocked grid locations to
"$q_{\circleddash}$"} \label{Fig02} \vspace{-10pt}
\end{figure}
%
We consider a 2D workspace for the mobile agents. This
restriction on workspace dimensionality  serves to simplify
the exposition and can be easily relaxed. To set up the
problem, the workspace is first discretized into a finite grid
and hence the approach developed in this paper falls under the
generic category of discrete planning. The underlying  theory
does not require the grid to be regular; however for the sake
of clarity we shall present the formulation under the
assumption of a regular grid. The obstacles are represented as
blocked-off grid locations in the discretized workspace. We
specify a particular location as the fixed goal and consider
the problem of finding optimal and feasible paths from
arbitrary initial grid locations in the workspace.
Figure~\ref{Fig02}(a) illustrates the basic problem setup.
We further assume that at any given time instant the robot
occupies one particular location ($i.e.$ a particular square
in Figure~\ref{Fig02}(a)). As shown in Figure~\ref{Fig02}, the
robot has eight possible moves from any interior location. The
boundaries are handled by removing the  moves that take the
robot out of the workspace. The possible moves  are modeled as
controllable transitions between grid locations since the
robot can "choose" to execute a particular move from the
available set. We note that the number of possible moves ($8$
in this case) depends on the chosen fidelity of discretization
of the robot motion and also on the intrinsic vehicle
dynamics. The complexity results presented in this paper only
assumes that the number of available moves is significantly
smaller compared to the number of grid squares, $i.e.$, the
discretized position states.  Specification of inter-grid
transitions in this manner allows us to generate a finite
state automaton (FSA) description of the navigation problem.
Each square in the discretized workspace is modeled as a FSA
state with the controllable transitions defining the
corresponding state transition map. The formal description of
the model is as follows:

Let $\Gn =(Q,\Sigma,\delta,\tilde{\Pi},\chi)$ be a Probabilistic Finite State
Automaton (PFSA).
The state set $Q$ consists of states that correspond to grid locations and one extra state denoted by $q_\circleddash$. The necessity of this special state $q_\circleddash$ is explained in the sequel. The grid squares are numbered in a pre-determined scheme such
that each $q_i \in Q\setminus \{q_\circleddash\}$ denotes a specific square in the
discretized workspace. The particular numbering scheme chosen
is irrelevant. In the absence of dynamic uncertainties and
state estimation errors, the alphabet contains one
uncontrollable event $i.e.$ $\Sigma = \Sigma_C \bigcup \{ u\}$
such that  $\Sigma_C$ is the set of controllable events
corresponding to the possible moves of the robot. The
uncontrollable event $u$ is defined from each of the blocked
states and leads to  $q_\circleddash$ which is a deadlock state. All other
transitions ($i.e.$ moves) are removed from the blocked
states. Thus, if a robot moves into a blocked state, it
uncontrollably transitions to the deadlock state $q_\circleddash$ which
is physically interpreted to be a collision. We further assume
that the robot fails to recover from collisions which is
reflected by making $q_{\circleddash}$ a deadlock state. We note that $q_\circleddash$
does not correspond to any physical grid location. The set of blocked grid locations along with the obstacle state $q_\circleddash$ is denoted as $Q_\obs \subseteqq Q$. Figure~\ref{Fig02} illustrates the navigation automaton for a
nine state discretized workspace with two blocked squares.
Note that the only outgoing transition from the blocked states
$q_1$ and $q_8$ is $u$.
Next we augment the navigation FSA by specifying event
generation probabilities defined by the map $\tilde{\pi} : Q
\times \Sigma \rightarrow [0,1]$  and the characteristic
state-weight vector specified as  $\chi:Q \rightarrow [-1,1]$.
%
%
The characteristic
state-weight vector~\cite{CR07} assigns scalar weights to the PFSA states to capture the desirability of
ending up in each state.
\begin{defn}\label{defcharold}
The characteristic weights are specified for the navigation
automaton as  follows: {\small \begin{align} \chi(q_i) = \left
\{
\begin{array}{cl}
-1 & \mathrm{if} \ q_i \equiv q_{\circleddash} \\
1 & \mathrm{if} \ q_i \ \mathrm{is \ the \ goal} \\
0 & \mathrm{otherwise}
\end{array}
\right.
\end{align}}
\end{defn}
In the absence of dynamic constraints and state estimation uncertainties, the robot can "choose" the particular controllable
transition to execute at any grid location. Hence we assume that the
probability of generation of controllable events is uniform
over the set of moves defined at any particular state.
\begin{defn}\label{defpitildenav}
Since there is no
uncontrollable events defined at any of the unblocked states
and no controllable events defined at any of the blocked
states, we have the following consistent specification of
event generation probabilities: $\forall q_i \in Q,\sigma_j
\in \Sigma$, 
 {\small\begin{align}
\tilde{\pi}(q_i,\sigma_j) & = \left \{ \begin{array}{cl}
\frac{1}{\mathrm{No. \ of \ controllable \ events \ at \ } q_i}, & \mathrm{if} \ \sigma_j \in \Sigma_C \\
1, & \mathrm{otherwise}
\end{array}
\right. \notag
\end{align}}
\end{defn}
The boundaries  are  handled by
"surrounding" the workspace with blocked position states shown
as "boundary obstacles" in the upper part of
Figure~\ref{Fig02}(c).
\begin{defn}\label{defmoves}
The  navigation model id defined to have identical
connectivity as far as controllable transitions are concerned
implying that every controllable transition or move ($i.e.$
every element of $\Sigma_C$) is defined from each of the
unblocked states.
\end{defn}
\vspace{-10pt}
\subsection{Decision-theoretic Optimization of PFSA}\label{sectionsolution}
The above-described probabilistic finite state automaton
(PFSA) based navigation model allows us to compute optimally
feasible path plans via the language-measure-theoretic
optimization algorithm~\cite{CR07} described in
Section~\ref{BriefReview}. Keeping in line with nomenclature in
the path-planning literature, we refer to the
language-measure-theoretic algorithm as $\nustar$ in the
sequel. For the unsupervised model, the robot is free to
execute any one of the defined controllable events from any
given grid location (See Figure~\ref{Fig02}(b)). The optimization
algorithm selectively disables controllable transitions to
ensure that the formal measure vector of the navigation
automaton is elementwise maximized. Physically, this implies
that the supervised robot is constrained to choose among only
the enabled moves at each state such that the probability of
collision is minimized with the probability of reaching the
goal simultaneously maximized.
\textit{Although $\nustar$ is based on optimization of
probabilistic finite state machines, it is shown that an
optimal and feasible path plan can be obtained that is
executable in a purely deterministic sense.}

Let $\Gn$ be the unsupervised navigation automaton and
$\Gn^\star$ be the optimally supervised PFSA obtained by
$\nustar$. We note that $\nu_{\#}^i$ is the renormalized
measure of the terminating plant $\Gn^\star(\theta_{min})$
with substochastic event generation probability matrix
$\widetilde{\Pi}^{\theta_{min}} =
(1-\theta_{min})\widetilde{\Pi}$. Denoting the event
generating function (See Definition~\ref{pitildefn}) for
$\Gn^\star$ and $\Gn^\star(\theta_{min})$ as $\tilde{\pi}:Q
\times \Sigma \rightarrow Q$ and $\tilde{\pi}^{\theta_{min}}:Q
\times \Sigma \rightarrow Q$ respectively, we have
{\small
\begin{subequations}
\begin{gather}
\tilde{\pi}^{\theta_{min}}(q_i,\epsilon) = 1\\
\forall q_i \in Q, \sigma_j \in \Sigma, \ \tilde{\pi}^{\theta_{min}}(q_i,\sigma_j) = (1-\theta_{min})\tilde{\pi}(q_i,\sigma_j)
\end{gather}
\end{subequations}
}
\vspace{-15pt}
\begin{notn}
For notational simplicity, we use
\begin{gather*}
\nu_{\#}^i (L(q_i)) = \nu_\#(q_i) = \boldsymbol{\nu_\#} \vert_i \\
\mathrm{where} \ \boldsymbol{\nu_\#} = \theta_{min}  [I-(1-\theta_{min})\Pi^\#]^{-1} \boldsymbol{\chi}
\end{gather*}
\end{notn}
\vspace{0pt}
\begin{defn}[$\nustar$-path]\label{defnustar} A $\nustar$-path $\rho(q_i,q_j)$
 from state $q_i \in Q$ to state $q_j \in Q$ is defined to be
an ordered set of PFSA states $\rho = \{q_{r_1}, \cdots ,
q_{r_M}\}$ with $ q_{r_s} \in Q, \ \forall s \in
\{1,\cdots,M\}, M \leq \Crd(Q)$ such that \vspace{-10pt}
{\small\begin{subequations}
\begin{gather}
q_{r_1} = q_i \\
q_{r_M} = q_j \\
\forall i,j \in \{1,\cdots,M\},\ q_{r_i} \neq q_{r_j} \\
\forall s \in \{1, \cdots , M\},\forall t \leqq s, \ \mm (q_{r_t}) \leqq \mm  (q_{r_s})
\end{gather}
\end{subequations}
}
\end{defn}
\vspace{0pt}
We reproduce without proof the following key results pertaining to $\nustar$- planning as reported
in \cite{CMR08}.
\begin{lem}\label{lemmpathseq}
There exists an enabled sequence of transitions from state
$q_i \in Q \setminus Q_\obs $ to $q_j \in Q \setminus
\{q_\circleddash\}$ in $\Gn^\star$ if and only if there exists
a $\nustar$-path $\rho(q_i,q_j)$ in $\Gn^\star$.
\end{lem}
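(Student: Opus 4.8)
The plan is to prove the two implications separately, using as the common engine the optimality recorded in Algorithm~\ref{Algorithm02}: at termination the disabling set is stable and the computed measure equals $\boldsymbol{\nu_\#}$, so a controllable move $q \xrightarrow{\sigma} q'$ survives as an \emph{enabled} transition of $\Gn^\star$ exactly when $\mm(q') \geqq \mm(q)$ and is \emph{disabled} (collapsed to the self-loop of Definition~\ref{contapp}) exactly when $\mm(q') < \mm(q)$. Before invoking this ``bridge'' I would record one bookkeeping fact that lets it apply uniformly: along any enabled sequence issuing from $q_i \in Q\setminus Q_\obs$ and terminating at some $q_j \neq q_\circleddash$, every visited state lies in $Q\setminus Q_\obs$ and every transition is a controllable move in $\Sigma_C$. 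This is because the weights of Definition~\ref{defcharold} force $\mm$ to take its minimal (negative) value on the whole of $Q_\obs$ — each blocked state emits only the probability-one event $u$ into $q_\circleddash$ — so no non-decreasing step can enter a blocked state or the sink, and the sole uncontrollable event $u$ departs only from blocked states, which are therefore never reached. The same argument confines the states of any $\nustar$-path to $Q\setminus Q_\obs$, since they are distinct with $\mm$ non-decreasing from $q_i$.

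For the forward direction I would start from an enabled sequence $q_i = p_0 \to p_1 \to \cdots \to p_L = q_j$ in $\Gn^\star$. The bridge makes each step non-decreasing, so $\mm(p_0) \leqq \mm(p_1) \leqq \cdots \leqq \mm(p_L)$. The walk may repeat states, so I would prune it to distinct states: discard every self-loop, and whenever a state recurs delete the intervening segment. A segment between two visits to a common state is squeezed between equal values of $\mm$, hence constant in measure, so its deletion cannot disturb the global inequalities; splicing remains well defined because the deleted block is bridged by the single outgoing transition of the repeated state. The surviving list $\{q_{r_1},\dots,q_{r_M}\}$ then meets every defining condition of Definition~\ref{defnustar} with $M \leqq \Crd(Q)$, i.e.\ it is a $\nustar$-path $\rho(q_i,q_j)$.

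For the reverse direction I would take a $\nustar$-path $\rho(q_i,q_j) = \{q_{r_1},\dots,q_{r_M}\}$ and read off its successive moves $q_{r_s} \to q_{r_{s+1}}$. By the bookkeeping fact each such move lies in $\Sigma_C$, and the measure-monotonicity condition of Definition~\ref{defnustar} gives $\mm(q_{r_{s+1}}) \geqq \mm(q_{r_s})$; the bridge then certifies that every one of these moves is enabled in $\Gn^\star$. Concatenating them for $s = 1,\dots,M-1$ yields the required enabled sequence from $q_i$ to $q_j$, closing the equivalence.

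I expect the conceptual crux to be the bridge itself: extracting from the fixed point of Algorithm~\ref{Algorithm02} the clean statement that, at the optimal policy, a controllable transition persists in $\Gn^\star$ if and only if it does not decrease $\mm$ — this is precisely where the optimality and uniqueness guarantees quoted in Remark~\ref{remtheta} are used. The remaining delicate point is the pruning step of the forward direction, where one must argue that a recurring state forces the enclosed measures to be constant so that loop excision preserves both the distinctness and the monotonicity demanded by Definition~\ref{defnustar}. Once the bridge and the confinement of all admissible paths to $Q\setminus Q_\obs$ are in place, both implications reduce to routine manipulation of finite state sequences.
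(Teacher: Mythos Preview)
The paper does not actually prove this lemma: it is reproduced without proof from the earlier reference~\cite{CMR08} (see the sentence immediately preceding Lemma~\ref{lemmpathseq}), so there is no in-paper argument to compare against. Your proposal therefore has to be judged on its own merits.

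Your argument is essentially sound and follows the natural route. The ``bridge'' you extract from the fixed point of Algorithm~\ref{Algorithm02} is exactly the right tool, and the confinement of admissible walks to $Q\setminus Q_\obs$ is correctly derived from the measure ordering (unblocked states in $\Gn^\star$ have $\mm \geqq 0$, since disabling every controllable move yields at worst a self-loop with measure zero, whereas blocked states inherit $\mm = -(1-\theta_{min}) < 0$ from $q_\circleddash$). The loop-excision step in the forward direction is handled carefully: you correctly observe that a recurring state forces the enclosed segment to be measure-constant, so pruning preserves monotonicity, and that the splice point retains a genuine one-step transition.

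One point worth flagging explicitly: Definition~\ref{defnustar} as written in this paper lists only the endpoint, distinctness, and measure-monotonicity conditions; it does not literally require that consecutive states $q_{r_s}, q_{r_{s+1}}$ be joined by a transition of the automaton. Your reverse direction (``read off its successive moves $q_{r_s}\to q_{r_{s+1}}$'') silently assumes this adjacency, and without it the implication would be false. This is clearly the intended reading --- the object is called a \emph{path}, and the original reference~\cite{CMR08} presumably states it --- but since you are writing a self-contained proof you should make that assumption explicit rather than leave it implicit. A second, minor imprecision: you say $\mm$ ``takes its minimal (negative) value on the whole of $Q_\obs$'', but in fact $\mm(q_\circleddash)=-1$ while blocked grid states have $\mm = -(1-\theta_{min})$; what you need (and what holds) is only that every state in $Q_\obs$ has strictly smaller measure than every unblocked state.
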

\vspace{0pt}
\begin{prop}\label{propobstacleavoidance}
For the optimally supervised navigation automaton $\Gn^\star$, we have
\begin{gather*}
\forall q_i \in Q \setminus Q_\obs, \ L(q_i) \subseteqq \Sigma^\star_C
\end{gather*}
\end{prop}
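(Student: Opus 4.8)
The plan is to argue by contradiction, leveraging the fact that the only uncontrollable event $u$ is enabled exclusively at blocked grid locations and invariably carries the automaton into the deadlock state $q_\circleddash$, which is assigned the extreme negative weight $\chi(q_\circleddash)=-1$. Suppose, contrary to the claim, that some $q_i \in Q \setminus Q_\obs$ generated a string $s \in L(q_i)$ in $\Gn^\star$ containing $u$. Splitting $s$ at its \emph{first} occurrence of $u$ as $s = w\,u\,v$, the prefix $w$ lies in $\Sigma^\star_C$ and drives $\Gn^\star$ from $q_i$ to the state $q_k := \delta(q_i,w)$ at which $u$ is enabled. Since $u$ is defined only from blocked grid locations, $q_k$ must be such a location; in particular $q_k \in Q_\obs \setminus \{q_\circleddash\}$, so $q_k \neq q_\circleddash$, and because $q_i \notin Q_\obs$ we also have $q_k \neq q_i$. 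Thus $w$ realizes a nonempty enabled sequence of transitions from $q_i \in Q \setminus Q_\obs$ to $q_k \in Q \setminus \{q_\circleddash\}$.

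First I would pin down the two measure estimates that force the contradiction, using the renormalized balance equation implied by Definition~\ref{defnustarmeas}, namely $\nu_\#(q) = \theta_{min}\,\chi(q) + (1-\theta_{min})\sum_{q'}\pi^\#_{qq'}\nu_\#(q')$. Applied at the deadlock state, whose sole transition is the self-loop with $\chi(q_\circleddash)=-1$, this gives $\nu_\#(q_\circleddash) = \theta_{min}(-1) + (1-\theta_{min})\nu_\#(q_\circleddash)$, hence $\nu_\#(q_\circleddash) = -1$. Applied at the blocked state $q_k$, whose only transition is $u$ into $q_\circleddash$ and with $\chi(q_k)=0$, it yields $\nu_\#(q_k) = (1-\theta_{min})\nu_\#(q_\circleddash) = -(1-\theta_{min})$, which is strictly negative since $\theta_{min}\in(0,1)$.

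Next I would produce a matching lower bound at $q_i$. Because every event defined at a non-obstacle state is controllable, the policy $\mathscr{D}^\dag$ that disables \emph{all} of the transitions at $q_i$ is an admissible element of $2^{\mathscr{C}}$; under it $q_i$ becomes a pure self-loop sink, so the same balance equation forces $\nu^\dag(q_i) = \chi(q_i) \geqq 0$. By the optimality of $\Gn^\star$ (Definition~\ref{pdef}, $\boldsymbol{\nu}^\star \eg \boldsymbol{\nu}^\dag$ elementwise), the optimal measure dominates this comparison value, giving $\nu_\#(q_i) \geqq \chi(q_i) \geqq 0$. The contradiction is then immediate: the enabled sequence from $q_i$ to $q_k$ established above lets me invoke \lemref{lemmpathseq} to obtain a $\nustar$-path $\rho(q_i,q_k)$, whose defining monotonicity (Definition~\ref{defnustar}, evaluated at the endpoints $q_{r_1}=q_i$ and $q_{r_M}=q_k$) gives $\nu_\#(q_i) \leqq \nu_\#(q_k)$. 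Chaining the inequalities produces $0 \leq \nu_\#(q_i) \leq \nu_\#(q_k) = -(1-\theta_{min}) < 0$, which is absurd; hence no string of $L(q_i)$ contains $u$, i.e. $L(q_i) \subseteqq \Sigma^\star_C$.

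I expect the main obstacle to be the bookkeeping in the first paragraph rather than any analytic difficulty: one must argue carefully that the state immediately preceding the first $u$ is a genuine blocked grid location distinct from both $q_\circleddash$ and $q_i$, so that \lemref{lemmpathseq} applies with exactly its stated hypotheses, and one must justify that ``disable all transitions at $q_i$'' is legitimate — which holds precisely because non-obstacle states carry no uncontrollable events, so their entire outgoing event set lies in $\mathscr{C}$. The two measure computations and the final chaining of inequalities are then routine.
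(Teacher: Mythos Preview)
The paper does not actually prove this proposition here; it is listed among the results ``reproduce[d] without proof'' from the earlier $\nustar$ paper~\cite{CMR08}, so there is no in-paper argument to compare against directly.

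Your argument is correct. The two measure estimates are sound: blocked grid locations have strictly negative $\nu_\#$ (every string from them lands on $q_\circleddash$ with characteristic $-1$), while non-obstacle states have $\nu_\#\geqq 0$ by comparison with the fully-disabled supervisor, exactly as you argue via Definition~\ref{pdef}. Your invocation of \lemref{lemmpathseq} is legitimate since that lemma precedes the proposition in the logical development and you verify its hypotheses ($q_i \in Q\setminus Q_\obs$, $q_k \in Q\setminus\{q_\circleddash\}$) carefully; the resulting monotonicity $\nu_\#(q_i)\leqq\nu_\#(q_k)$ then closes the contradiction.

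A slightly more direct route, which avoids \lemref{lemmpathseq}, is worth noting: once you have $\nu_\#(q)\geqq 0$ for $q\notin Q_\obs$ and $\nu_\#(q)<0$ for blocked $q$, the optimal disabling rule of Algorithm~\ref{Algorithm02} (disable $q_i\xrightarrow{\sigma}q_j$ whenever $\nu_j<\nu_i$) forces every controllable transition from a non-obstacle state into a blocked state to be disabled. A one-line induction on prefix length then gives $\delta^\#(q_i,w)\notin Q_\obs$ for every enabled $w$, so $u$ can never fire. This is likely closer to the intended argument in~\cite{CMR08}, but your path through \lemref{lemmpathseq} is equally valid and has the virtue of appealing to the optimality \emph{definition} rather than to the specific algorithmic characterization of the optimal supervisor.
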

\begin{cor}\label{corobstacleavoidance}
\textbf{(Obstacle Avoidance:) }
There exists no $\nustar$-path from any unblocked state to any blocked state in the optimally supervised navigation automaton $\Gn^\star$.
\end{cor}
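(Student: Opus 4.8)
The plan is to read the corollary as an immediate consequence of Proposition~\ref{propobstacleavoidance} together with the monotonicity requirement built into a $\nustar$-path (Definition~\ref{defnustar}). The governing observation is that, in $\Gn^\star$, every state of $Q_\obs$ carries a strictly negative renormalized measure while every unblocked state carries a non-negative one; since the final (monotonicity) condition in Definition~\ref{defnustar} forces the measure to be non-decreasing along a path and maximal at its terminal state, no path can run from an unblocked state to a member of $Q_\obs$. I would therefore argue entirely at the level of the values of $\mm = \nu_\#$, which treats the deadlock state $q_\circleddash$ and the blocked grid locations uniformly.

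First I would compute the measure on $Q_\obs$ from the fixed-point form $\boldsymbol{\nu_\#} = \theta_{min}\boldsymbol{\chi} + (1-\theta_{min})\Pi^\#\boldsymbol{\nu_\#}$, obtained by multiplying the defining identity $\boldsymbol{\nu_\#} = \theta_{min}[I-(1-\theta_{min})\Pi^\#]^{-1}\boldsymbol{\chi}$ through by $[I-(1-\theta_{min})\Pi^\#]$. For $q_\circleddash$ the only transition is the uncontrollable self-loop and $\chi(q_\circleddash)=-1$, so the relation collapses to $\nu_\#(q_\circleddash) = -1$. For a blocked grid location $q_j$ one has $\chi(q_j)=0$ and a single transition $u$ into $q_\circleddash$, whence $\nu_\#(q_j) = (1-\theta_{min})\nu_\#(q_\circleddash) = -(1-\theta_{min}) < 0$, using $\theta_{min}\in(0,1)$. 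Thus $\mm(q) < 0$ for every $q \in Q_\obs$.

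Next I would show $\mm(q_i) \geq 0$ for every $q_i \in Q\setminus Q_\obs$, and this is exactly where Proposition~\ref{propobstacleavoidance} is used. Since $L(q_i)\subseteq\Sigma_C^\star$, no generated string contains $u$; consequently no state reachable from $q_i$ in $\Gn^\star$ lies in $Q_\obs$, because a blocked grid location would emit $u$ (it is uncontrollable and the unique event there) and $q_\circleddash$ can only be entered via $u$, either of which would produce a string outside $\Sigma_C^\star$. Every reachable state therefore has $\chi\in\{0,1\}$, and writing $\nu_\#(q_i)$ as the sum over $\omega\in L(q_i)$ of the terms $\theta_{min}\,\tilde\pi^{\theta_{min}}(q_i,\omega)\,\chi(\delta^\star(q_i,\omega))$ exhibits it as a sum of non-negative quantities, so $\mm(q_i)\geq 0$.

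Finally I would close by contradiction: if a $\nustar$-path $\rho(q_i,q_j)=\{q_{r_1},\dots,q_{r_M}\}$ existed with $q_i=q_{r_1}\in Q\setminus Q_\obs$ and $q_j=q_{r_M}\in Q_\obs$, then the monotonicity condition of Definition~\ref{defnustar} with $t=1,\ s=M$ would force $\mm(q_i)\leq\mm(q_j)$, whereas the two preceding paragraphs give $\mm(q_i)\geq 0 > \mm(q_j)$ — a contradiction. I expect the only delicate point to be the middle step, namely converting the language inclusion of Proposition~\ref{propobstacleavoidance} into unreachability of $Q_\obs$ (and hence $\chi\geq 0$ on the reachable set); everything else is bookkeeping with the fixed-point equation and one application of path-monotonicity. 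One may alternatively dispatch the blocked grid locations through Lemma~\ref{lemmpathseq}, ruling out an enabled transition sequence into $q_j$ and invoking the stated equivalence, but since that lemma excludes $q_\circleddash$ as a target, the measure-based route is preferable for handling all of $Q_\obs$ at once.
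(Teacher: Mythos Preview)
Your argument is correct. Note, however, that the paper does not actually supply a proof of this corollary: it explicitly states that Lemma~\ref{lemmpathseq}, Proposition~\ref{propobstacleavoidance}, Corollary~\ref{corobstacleavoidance}, Proposition~\ref{propexisnustar}, and Corollary~\ref{corollarynolocalmax} are ``reproduce[d] without proof'' from~\cite{CMR08}. So there is no in-paper proof to compare against beyond the structural hint that the result is positioned as an immediate corollary of Proposition~\ref{propobstacleavoidance}.

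Your route is precisely the natural one suggested by that positioning: use Proposition~\ref{propobstacleavoidance} to force $\mm(q_i)\geq 0$ on $Q\setminus Q_\obs$, compute $\mm<0$ on $Q_\obs$ directly from the fixed-point relation, and invoke the monotonicity clause of Definition~\ref{defnustar}. One small simplification: in your middle step you need only rule out termination at $q_\circleddash$, since blocked grid locations carry $\chi=0$ and contribute nothing to the sum; and any string reaching $q_\circleddash$ must end in $u\notin\Sigma_C$, which already contradicts $L(q_i)\subseteq\Sigma_C^\star$. The detour through unreachability of all of $Q_\obs$ is not wrong, just slightly more than required for the sign conclusion.
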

\begin{prop}[Existence of $\nustar$-paths]\label{propexisnustar}
There exists a $\nustar$-path $\rho(q_i,q_\Gl )$ from any state $q_i \in Q$ to the goal $q_\Gl \in Q$ if and only if $\mm (q_i)  > 0$.
\end{prop}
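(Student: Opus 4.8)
The plan is to reduce this combinatorial assertion to a reachability question in the Markov chain $\Pi^{\#}$ of the optimally supervised automaton $\Gn^\star$, and to read off the sign of $\mm(q_i)$ from a probabilistic interpretation of the renormalized measure. The target is the single equivalence: $\mm(q_i)>0$ holds if and only if $q_i\notin Q_\obs$ and the goal $q_\Gl$ is reachable from $q_i$ with positive probability under $\Pi^{\#}$. Granting this, Lemma~\ref{lemmpathseq} converts positive-probability reachability of $q_\Gl$ into the existence of the $\nustar$-path $\rho(q_i,q_\Gl)$ and back, closing the proposition. All the real content sits in this measure-to-reachability equivalence.

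First I would expand the measure as a convergent Neumann series. Since $0<\theta_{min}<1$ and $\Pi^{\#}$ is stochastic, $(1-\theta_{min})\Pi^{\#}$ is a contraction and
\[
\boldsymbol{\nu_{\#}}=\theta_{min}\sum_{k=0}^{\infty}(1-\theta_{min})^{k}(\Pi^{\#})^{k}\boldsymbol{\chi},
\]
so $\mm(q_i)=\sum_{k\geq 0}\theta_{min}(1-\theta_{min})^{k}\,E[\chi(X_k)\mid X_0=q_i]$, where $X_k$ is the state after $k$ steps and $\theta_{min}(1-\theta_{min})^{k}$ is the geometric termination weight. I would then record the sign of $\mm$ on the distinguished states using Definition~\ref{defcharold}: the self-loop at the deadlock state gives $\mm(q_\circleddash)=-1$, and a blocked state---whose sole transition is the uncontrollable $u$ into $q_\circleddash$---gives $\mm=-(1-\theta_{min})$; hence $\mm<0$ throughout $Q_\obs$. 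For the goal, $\chi(q_\Gl)=1$, and because (by the next step) its trajectories avoid $Q_\obs$, every term above is nonnegative, giving $\mm(q_\Gl)\geq\theta_{min}>0$.

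The crux is the claim that for every unblocked $q_i$ the trajectory $\{X_k\}$ never visits $Q_\obs$. This follows from Proposition~\ref{propobstacleavoidance}, which gives $L(q_i)\subseteqq\Sigma^\star_C$: reaching $q_\circleddash$ requires the event $u$, and reaching any blocked state $q_b$ would permit extending an enabled string by $u$ (defined at $q_b$), producing a string in $L(q_i)$ containing $u$---contradicting $L(q_i)\subseteqq\Sigma^\star_C$. Consequently $P(X_k\in Q_\obs\mid X_0=q_i)=0$ for all $k$, so $\chi(X_k)\in\{0,1\}$ and
\[
\mm(q_i)=\sum_{k\geq 0}\theta_{min}(1-\theta_{min})^{k}\,P(X_k=q_\Gl)\geq 0,
\]
with strict inequality precisely when $P(X_k=q_\Gl)>0$ for some $k$, i.e.\ when $q_\Gl$ is reachable from $q_i$. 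This simultaneously shows that $\mm(q_i)>0$ forces $q_i\notin Q_\obs$ (as $\mm<0$ on $Q_\obs$) and identifies positivity with reachability, establishing the target equivalence.

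Both directions then close. If $\mm(q_i)>0$, then $q_i$ is unblocked and $q_\Gl$ is reachable with positive probability, i.e.\ there is an enabled transition sequence $q_i\to q_\Gl$ in $\Gn^\star$, which by Lemma~\ref{lemmpathseq} (with $q_i\in Q\setminus Q_\obs$ and $q_\Gl\neq q_\circleddash$) yields $\rho(q_i,q_\Gl)$. Conversely a $\nustar$-path $\rho(q_i,q_\Gl)$ is in particular an enabled transition sequence from $q_i$ to $q_\Gl$; a blocked $q_i$ admits only the uncontrollable transition $u$ into the deadlock $q_\circleddash$ and hence cannot begin a sequence terminating at $q_\Gl$, so $q_i$ is unblocked and $q_\Gl$ is reached with positive probability, whence $\mm(q_i)>0$ by the computation above. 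I expect the delicate step to be the crux of the third paragraph---rigorously ruling out any visit of $\{X_k\}$ to $Q_\obs$ from the language inclusion $L(q_i)\subseteqq\Sigma^\star_C$---since once trajectories are confined to $Q\setminus Q_\obs$ the characteristic weights are nonnegative and the entire statement reduces to sign bookkeeping together with Lemma~\ref{lemmpathseq}.
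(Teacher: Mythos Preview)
Your argument is correct. Note, however, that this paper does not supply its own proof of Proposition~\ref{propexisnustar}: it is one of several results explicitly ``reproduce[d] without proof'' from~\cite{CMR08}, so there is no in-paper proof against which to compare directly. The only hint in the present text is the short proof of the weakened Proposition~\ref{propweak}, which dispatches the forward implication in one line: if $\mm(q_i)>0$ then, since the measure is a countable signed sum over strings and the only state with positive characteristic is $q_\Gl$, some string of positive measure---hence one terminating at $q_\Gl$---must exist, and one then appeals to Definition~\ref{defnustar}.

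Your route is more systematic: you unwind $\boldsymbol{\nu_\#}$ as a geometric mixture of $k$-step occupation probabilities, use Proposition~\ref{propobstacleavoidance} to confine trajectories started in unblocked states to $Q\setminus Q_\obs$, and thereby reduce the sign of $\mm(q_i)$ to pure reachability of $q_\Gl$. This delivers both directions at once, whereas the one-line argument above only yields the ``if'' half; the ``only if'' half genuinely requires obstacle avoidance to rule out cancellation between positive-measure and negative-measure strings, and that is exactly where you deploy it. One small wrinkle in your converse paragraph: the claim ``a $\nustar$-path $\rho(q_i,q_\Gl)$ is in particular an enabled transition sequence'' already leans on Lemma~\ref{lemmpathseq}, whose hypothesis presupposes $q_i\in Q\setminus Q_\obs$. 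A cleaner way to exclude blocked $q_i$ is to observe that its only successor in $\Gn^\star$ is $q_\circleddash$, with $\mm(q_\circleddash)=-1<-(1-\theta_{min})=\mm(q_i)$, which violates the monotonicity clause in Definition~\ref{defnustar}; after that, Lemma~\ref{lemmpathseq} applies without circularity.
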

\vspace{0pt}
\begin{cor}\label{corollarynolocalmax}\textbf{(Absence of Local Maxima:)}
If there exists a $\nustar$-path from $q_i \in Q$ to $q_j \in Q$ and  a $\nustar$-path from $q_i$ to $q_\Gl$ then there exists a $\nustar$-path from $q_j$ to $q_\Gl$, $i.e.$,
\begin{gather*}
\forall q_i,q_j \in Q \bigg ( \exists \rho_1(q_i,q_\Gl) \bigwedge \exists \rho_2(q_i,q_j) \Rightarrow \exists \rho(q_j,q_\Gl) \bigg )
\end{gather*}
\end{cor}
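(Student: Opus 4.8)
The plan is to reduce the entire statement to the positivity characterization of $\nustar$-paths furnished by Proposition~\ref{propexisnustar}, exploiting the built-in monotonicity of the renormalized measure $\mm$ along any $\nustar$-path. The whole argument will hinge on a single observation: a $\nustar$-path certifies that its endpoint carries at least as much measure as its starting point, so positivity of $\mm$ propagates forward along such paths. Once positivity has been transported from $q_i$ to $q_j$, Proposition~\ref{propexisnustar} will immediately supply the desired path from $q_j$ to the goal.

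Concretely, I would proceed in three short steps. First, from the hypothesis $\exists \rho_1(q_i,q_\Gl)$ together with the ``only if'' direction of Proposition~\ref{propexisnustar}, I would deduce $\mm(q_i) > 0$. Second, I would invoke the monotonicity clause in the definition of a $\nustar$-path (Definition~\ref{defnustar}): writing $\rho_2(q_i,q_j) = \{q_{r_1},\cdots,q_{r_M}\}$ with $q_{r_1} = q_i$ and $q_{r_M} = q_j$, the condition $\forall t \leqq s, \ \mm(q_{r_t}) \leqq \mm(q_{r_s})$ specialized to $t = 1$, $s = M$ yields $\mm(q_i) \leqq \mm(q_j)$. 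Combining this with the first step gives $\mm(q_j) \geqq \mm(q_i) > 0$, hence $\mm(q_j) > 0$. Third, I would apply Proposition~\ref{propexisnustar} once more, now in the ``if'' direction, to the state $q_j$: the strict positivity $\mm(q_j) > 0$ guarantees the existence of a $\nustar$-path $\rho(q_j,q_\Gl)$, which is exactly the conclusion.

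The main conceptual content is therefore entirely front-loaded into Proposition~\ref{propexisnustar}, and the corollary is a clean deductive consequence that presents no genuine obstacle. The only point requiring a moment of care is extracting the inequality $\mm(q_i) \leqq \mm(q_j)$ correctly from clause~(d) of Definition~\ref{defnustar}, i.e. verifying that the indices $t=1$ and $s=M$ are admissible and that the first and last states of $\rho_2$ are indeed $q_i$ and $q_j$ as demanded by the endpoint clauses. Since both endpoints are fixed by definition and $1 \leqq M$, this is immediate, and no appeal to obstacle avoidance (Corollary~\ref{corobstacleavoidance}) or to the path-concatenation result (Lemma~\ref{lemmpathseq}) will be needed.
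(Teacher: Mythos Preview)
Your proposal is correct and follows precisely the three-step logic the paper uses. Note that the paper reproduces Corollary~\ref{corollarynolocalmax} without proof (citing~\cite{CMR08}), but the paper's own proof of the analogous Statement~2 in Proposition~\ref{propassembled} exhibits exactly your argument: existence of $\rho_1(q_i,q_\Gl)$ gives positivity at $q_i$, the monotonicity clause of Definition~\ref{defnustar} transports positivity to $q_j$, and the positivity characterization then yields $\rho(q_j,q_\Gl)$.
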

\vspace{0pt}
\vspace{-13pt}
\subsection{Optimal Tradeoff between Computed
Path Length \& Availability Of Alternate Routes}
\begin{figure}[!htb]
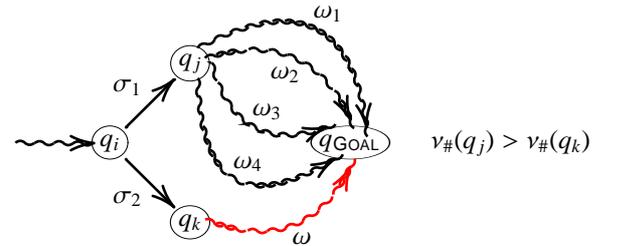

\centering
\begin{minipage}[h!]{3.25in}
\hspace{20pt} \xy 0;/r.25pc/:
(0,0)*+[o][F-]{\txt{$q_i$}}; (10,10)*+[o][F-]{\txt{$q_j$}};
(10,-10)*+[o][F-]{\txt{$q_k$}};
(30,0)*+[o][F-]{\txt{$q_\Gl$}};
{\myar@{{}{~}{>}}(-12,0)*{};(-2,0)*{}};
{\myar@{{}{-}{>}}^{\txt{$\sigma_1$}}(2,2)*{};(8,8)*{}};
{\myar@{{}{-}{>}}_{\txt{$\sigma_2$}}(2,-2)*{};(8,-8)*{}};
{\myar@[red]@{{}{~}{>}}@/_1pc/_{\txt{$\omega$}}(13,-10)*{};(30,-3)*{}};
{\myar@{{}{~}{>}}@/_1pc/^{\txt{$\omega_3$}}(13,9)*{};(27,2)*{}};
{\myar@{{}{~}{>}}@/_2pc/^{\txt{$\omega_4$}}(11,7)*{};(28,-2)*{}};
{\myar@{{}{~}{>}}@/^1pc/_{\txt{$\omega_2$}}(13,11)*{};(30,3)*{}};
{\myar@{{}{~}{>}}@/^2pc/^{\txt{$\omega_1$}}(12,12.5)*{};(32,2)*{}};
(50,0)*{\txt{$\mm(q_j) > \mm(q_k) $}};%
\endxy
\caption{Tradeoff between path-length and robustness under
dynamic uncertainty: $\sigma_2\omega$ is the shortest path to
$q_\Gl$ from $q_i$; but the $\nustar$ plan may be
$\sigma_1\omega_1$ due to the availability of larger number of feasible
paths through $q_j$.}\label{fig03a}
\end{minipage}
\end{figure}
Majority of reported path planning algorithms consider
minimization of the computed feasible path length as the sole
optimization objective. 
However, the $\nustar$ algorithm can be shown to achieve an optimal
trade-off between path lengths and availability of feasible
alternate routes. If $\omega$
is the shortest path to goal from state $q_k$, then the
shortest path from state $q_i$ (with $q_i \xrightarrow{\sigma_2} q_k$) is given by $\sigma_2\omega$.
However, a larger number of feasible paths may be  available from
state $q_j$ (with $q_i \xrightarrow{\sigma_1} q_j$) which may result in the optimal $\nustar$ plan to
be $\sigma_1\omega_1$. Mathematically, each feasible path from
state $q_j$ has a positive measure which may sum to be greater
than the measure of the single path $\omega$ from state $q_k$.
The condition $ \mm(q_j)> \mm(q_k)$ would then imply that the
next state from $q_i$ would be computed to be $q_j$ and not
$q_k$. Physically it can be
interpreted  that the mobile gent is better off going to $q_j$
since the goal remains reachable even if one or more paths
become unavailable. The key results~\cite{CMR08} are as follows:
\begin{lem}\label{lemmstringmeas}
For the optimally supervised navigation automaton $\Gn^\star$, we have
$\forall q_i\in Q \setminus Q_\obs$,
\begin{gather*}
\forall \omega \in L(q_i),
\ \nu_{\#}^i (\{\omega \}) =
 \theta_{min} \bigg (\frac{ 1
 - \theta_{min}}{\Crd(\Sigma_C)}
 \bigg )^{\vert \omega \vert } \chi(\delta^\#(q_i,\omega))
\end{gather*}
\end{lem}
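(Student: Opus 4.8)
The plan is to reduce the measure of a single string to a product over its length, exploiting the uniform event-generation probabilities of the navigation automaton together with the obstacle-avoidance property. First I would unwind the definitions: by Definition~\ref{defnustarmeas} together with Definition~\ref{defrenormmeas}, the renormalized measure of a singleton is $\nu_{\#}^i(\{\omega\}) = \theta_{min}\,\mu^i(\{\omega\})$, and by Definition~\ref{measurefn} the underlying signed measure factorizes as $\mu^i(\{\omega\}) = \tilde{\pi}^{\theta_{min}}(q_i,\omega)\,\chi(\delta^\#(q_i,\omega))$, where $\tilde{\pi}^{\theta_{min}}$ is the event-generation function of the terminating plant $\Gn^\star(\theta_{min})$. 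Thus the whole content of the lemma reduces to showing $\tilde{\pi}^{\theta_{min}}(q_i,\omega) = \big((1-\theta_{min})/\Crd(\Sigma_C)\big)^{\vert\omega\vert}$.

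The decisive structural input is Proposition~\ref{propobstacleavoidance}: since $q_i \in Q\setminus Q_\obs$, every $\omega \in L(q_i)$ lies in $\Sigma_C^\star$, so the string is composed exclusively of controllable events and the entire trajectory generated by $\delta^\#(q_i,\cdot)$ remains among the unblocked states. This is precisely what guarantees that we never traverse the uncontrollable event $u$ (which carries probability $1$ and would spoil the formula) and never enter a blocked state. Writing $\omega = \sigma_{k_1}\cdots\sigma_{k_m}$ with $m = \vert\omega\vert$ and applying the multiplicative property of Definition~\ref{pitildefn}(3) repeatedly, I would expand $\tilde{\pi}^{\theta_{min}}(q_i,\omega)$ into the product $\prod_{t=1}^{m}\tilde{\pi}^{\theta_{min}}(q_{r_t},\sigma_{k_t})$ taken along the states $q_{r_t}$ visited, each of which is unblocked.

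It then remains to evaluate each factor. Each intermediate state is unblocked, so by Definition~\ref{defmoves} exactly $\Crd(\Sigma_C)$ controllable moves are defined there, and by Definition~\ref{defpitildenav} each carries occurrence probability $1/\Crd(\Sigma_C)$; crucially, by the control philosophy of Definition~\ref{contapp}, supervision only converts disabled transitions into self-loops without altering event-occurrence probabilities, so these values persist verbatim in $\Gn^\star$. Passing to the terminating plant multiplies each probability by $(1-\theta_{min})$, giving every factor the value $(1-\theta_{min})/\Crd(\Sigma_C)$. Since there are $\vert\omega\vert$ such factors, the product collapses to $\big((1-\theta_{min})/\Crd(\Sigma_C)\big)^{\vert\omega\vert}$, and multiplying by $\theta_{min}\,\chi(\delta^\#(q_i,\omega))$ yields the claimed identity.

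The only genuinely subtle point, and the one I would flag as the crux, is the appeal to obstacle avoidance: without the containment $L(q_i)\subseteqq\Sigma_C^\star$, a string could in principle route through a blocked state whose sole outgoing event $u$ has probability $1$, breaking the uniform denominator $\Crd(\Sigma_C)$ and invalidating the closed form. Everything else is a routine induction on $\vert\omega\vert$ built on the multiplicativity of $\tilde{\pi}$ and the state-uniformity of the move probabilities.
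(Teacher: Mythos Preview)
Your argument is correct. You properly unwind the definitions to reduce the claim to computing $\tilde{\pi}^{\theta_{min}}(q_i,\omega)$, and the appeal to Proposition~\ref{propobstacleavoidance} is exactly the structural ingredient needed: it guarantees that every state visited along the trajectory from an unblocked $q_i$ is itself unblocked, so that at each step there are precisely $\Crd(\Sigma_C)$ controllable events defined, each with probability $1/\Crd(\Sigma_C)$ (unchanged by supervision per Definition~\ref{contapp}), and scaled by $(1-\theta_{min})$ in the terminating model. The multiplicativity of $\tilde{\pi}$ then yields the closed form directly.

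As for comparison with the paper's own proof: there is none to compare against. The paper explicitly reproduces Lemma~\ref{lemmstringmeas} \emph{without proof}, citing it as a key result from~\cite{CMR08}. Your reconstruction is the natural one, and the identification of obstacle avoidance as the crux is apt---without it the uniform factor $(1-\theta_{min})/\Crd(\Sigma_C)$ would indeed fail at any blocked intermediate state.
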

\vspace{0pt}
\begin{prop}\label{proptradeoff}
For $q_i \in Q \setminus Q_\obs$, let $q_i
\xrightarrow{\sigma_1} q_j \rightarrow \cdots \rightarrow
q_\Gl$ be the shortest path to the goal. If there exists $q_k
\in Q\setminus Q_\obs$ with $q_i \xrightarrow{\sigma_2} q_k$
for some $\sigma_2 \in \Sigma_C$ such that
$
%
\mm(q_k) > \mm(q_j)
$,
%
%
then the number of distinct paths to goal from state $q_k$ is at least $\Crd(\Sigma_C) +1$.
\end{prop}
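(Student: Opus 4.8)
The plan is to convert the measure inequality $\mm(q_k) > \mm(q_j)$ into a counting statement by means of the explicit per-string formula of Lemma~\ref{lemmstringmeas}. Put $\beta = \frac{1-\theta_{min}}{\Crd(\Sigma_C)}$, so that $\beta \in (0,1)$. By Proposition~\ref{propobstacleavoidance}, every string generated from an unblocked state lies in $\Sigma^\star_C$, hence terminates at the goal (where $\chi = 1$) or at an ordinary state (where $\chi = 0$), and never at $q_\circleddash$. Summing the singleton values of Lemma~\ref{lemmstringmeas} over $L(q)$ therefore retains only the goal-terminating strings and gives, for every unblocked $q$,
\begin{gather*}
\mm(q) = \theta_{min} \sum_{\omega \in \Lambda(q)} \beta^{\vert \omega \vert},
\end{gather*}
where $\Lambda(q)$ is the set of enabled strings from $q$ that reach $q_\Gl$. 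Writing $d$ for the length of the shortest member of $\Lambda(q_i)$ and using that $q_i \xrightarrow{\sigma_1} q_j$ starts a shortest route, $\Lambda(q_j)$ contains a string of length $d-1$; thus the shortest distance from $q_j$ to the goal equals $d-1$.

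First I would establish that every string in $\Lambda(q_k)$ has length at least $d$. Prepending $\sigma_2$ to any element of $\Lambda(q_k)$ yields an element of $\Lambda(q_i)$, so the shortest distance from $q_k$ is at least $d-1$; and since $q_j$ is the successor of $q_i$ lying on the (genuinely) shortest route while $q_k \neq q_j$ is a different successor, $q_k$ cannot also realise distance $d-1$, forcing its shortest distance to be at least $d$. With this in hand the argument is a geometric squeeze: the single shortest string of $\Lambda(q_j)$ already yields $\mm(q_j) \geq \theta_{min}\beta^{d-1}$, while each of the $N \coloneqq \vert \Lambda(q_k) \vert$ distinct routes from $q_k$ has length at least $d$, so $\beta^{\vert\omega\vert}\leq\beta^{d}$ and $\mm(q_k) \leq \theta_{min} N \beta^{d}$. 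Substituting into $\mm(q_k) > \mm(q_j)$ gives $\theta_{min} N\beta^{d} > \theta_{min}\beta^{d-1}$, i.e.
\begin{gather*}
N > \frac{1}{\beta} = \frac{\Crd(\Sigma_C)}{1-\theta_{min}} > \Crd(\Sigma_C),
\end{gather*}
the last inequality because $\theta_{min}\in(0,1)$; as $N$ is an integer, $N \geq \Crd(\Sigma_C)+1$.

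Two points carry the weight of the proof. The first is the distance gap $d(q_k)\geq d$: without it (a tie in which $q_k$ is a second distance-$(d-1)$ successor) the squeeze only yields $N \geq 2$, so the hypothesis that the route through $q_j$ is strictly shortest --- its uniqueness as the distance-$(d-1)$ first step --- must genuinely be invoked, and I would derive it from the minimality of $d$ together with that hypothesis. The second, which I expect to be the \emph{main obstacle}, is the upper bound $\mm(q_k)\leq\theta_{min}N\beta^{d}$: the sum defining $\mm(q_k)$ ranges over all goal-reaching strings, and the self-loops created when the optimal supervisor disables transitions (Definition~\ref{contapp}) manufacture additional goal-reaching strings of the form (self-loops)$\cdot$(route) that genuinely inflate the sum, so the right-hand side is not literally the count of loopless routes. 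I would close this gap by decomposing each goal-reaching string into its loopless skeleton and its self-loop insertions --- each insertion raising $\vert\omega\vert$ and so carrying an extra factor $\beta<1$ --- to argue that the minimal-length (loopless) representatives dominate; making this domination quantitative and uniform in the self-loop multiplicities produced by the optimal supervisor (equivalently, passing to the defining $\theta_{min}\to 0^+$ limit, where the comparison becomes one of goal-hitting probabilities and self-loops cancel) is the delicate part of a fully rigorous argument.
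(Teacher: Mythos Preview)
The paper does not prove Proposition~\ref{proptradeoff}; it is explicitly ``reproduce[d] without proof'' from \cite{CMR08}, so there is no in-paper argument to compare against. Your route via Lemma~\ref{lemmstringmeas} and the geometric squeeze $\theta_{min}\beta^{d-1}\leq\mm(q_j)<\mm(q_k)\leq N\theta_{min}\beta^{d}$ is the natural one and is almost certainly what the cited reference does.

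Both of your self-diagnosed worries are real, and you have located them precisely. On the distance gap: the hypothesis only names \emph{a} shortest path through $q_j$, and nothing written rules out $q_k$ also sitting at distance $d-1$; in that case your squeeze collapses to $N>1$. Reading ``the shortest path'' as uniqueness of the first step is the only way to recover $d(q_k)\geq d$, and you should state that reading explicitly rather than derive it. On the self-loop issue: if $N$ is taken to be $\vert\Lambda(q_k)\vert$, i.e.\ the count of all goal-reaching \emph{strings}, then your upper bound $\mm(q_k)\leq N\theta_{min}\beta^{d}$ is already correct (every such string has length $\geq d$), and the concern dissolves --- indeed, whenever even one controllable event is disabled anywhere along a route from $q_k$, $\Lambda(q_k)$ is infinite and the bound is vacuous. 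The difficulty you describe arises only under the stronger reading ``distinct loopless routes'', where your sketch of summing the self-loop geometric series over each loopless skeleton is the right repair; note that the paper's own remark immediately after the proposition (``the lower bound \ldots\ is not tight'') suggests the intended reading is the weaker string count, so you may be working harder than the statement requires.
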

The lower bound computed in Proposition~\ref{proptradeoff} is
not tight and  if the alternate paths are longer or if there
are multiple 'shortest' paths then the number of alternate
routes required is significantly higher. Detailed
examples can be easily presented to
illustrate situation where $\nustar$ opts for a longer but
more robust plan.
\section{Generalizing The Navigation Automaton To Accommodate Uncertain Execution}\label{secmodel}
In this paper, we modify the PFSA-based navigation model to explicitly
reflect uncertainties arising from imperfect localization and  the dynamic response of the platform to
navigation commands. These  effects  manifest
as uncontrollable transitions in the navigation automaton as illustrated in Figure~\ref{Fig0333}. Note, while
in absence of uncertainties and dynamic effects, one can disable transitions perfectly, in the modified model, such disabling is only partial.
Choosing the probabilities of the uncontrollable transitions correctly allows the model to incorporate
physical movement errors and sensing noise in an amortized fashion. 

A sample run with a SEGWAY RMP at NRSL is shown in Figure~\ref{figplan}. Note that the robot is 
unable to follow the plan exactly due to cellular discretization and dynamic effects. Such effects can be conceptually modeled
by decomposing trajectory fragments into sequential combinations
of controllable and uncontrollable inter-cellular moves as illustrated in Figure~\ref{Fig0333}(c). We do not need to actually decompose
trajectories, it is merely a conceptual construct that gives us a theoretical basis 
for computing the  probabilities of uncontrollable transitions from observed robot dynamics (as described later in Section~\ref{secuncertain}, and therefore
incorporate the amortized effect of uncertainties in the navigation automaton.
\begin{figure}[!ht]
\centering
\includegraphics[width=2in]{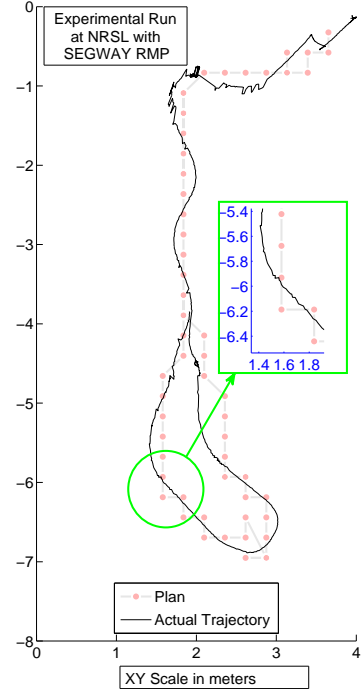}
\caption{Plan execution with SEGWAY RMP at NRSL, Pennstate}\label{figplan}
\end{figure}
\begin{figure}[!ht]
\subfigure[]{
\xy 0;/r.2pc/:
(0,0)*+[o][F-]{\txt{\textbf{C}}};
(20,0)*+[o][F-]{\txt{\textbf{R}}};
(-20,0)*+[o][F-]{\txt{\textbf{L}}};
(0,20)*+[o][F-]{\txt{\textbf{F}}};
(14,14)*+[o][F-]{\txt{\textbf{RF}}};
(0,-20)*+[o][F-]{\txt{\textbf{B}}};
(-14,14)*+[o][F-]{\txt{\textbf{LF}}};
(14,-14)*+[o][F-]{\txt{\textbf{RB}}};
(-14,-14)*+[o][F-]{\txt{\textbf{LB}}};
{\myar@{->}_{\txt{$e_1$}}(0,2)*{};(0,18)*{}};
{\myar@{->}_{\txt{$e_5$}}(0,-2)*{};(0,-18)*{}};
{\myar@{->}_{\txt{$e_3$}}(2,0)*{};(18,0)*{}};
{\myar@{->}_{\txt{$e_7$}}(-2,0)*{};(-18,0)*{}};
{\myar@{->}_{\txt{$e_2$}}(2,2)*{};(12,12)*{}};
{\myar@{->}_{\txt{$e_4$}}(2,-2)*{};(12,-12)*{}};
{\myar@{->}_{\txt{$e_6$}}(-2,-2)*{};(-12,-12)*{}};
{\myar@{->}_{\txt{$e_8$}}(-2,2)*{};(-12,12)*{}};
\endxy
}
\subfigure[]{
\xy 0;/r.2pc/:
(0,0)*+[o][F-]{\txt{\textbf{C}}};
(20,0)*+[o][F-]{\txt{\red\textbf{R}}};
(-20,0)*+[o][F-]{\txt{\textbf{L}}};
(0,20)*+[o][F-]{\txt{\textbf{F}}}; (14,14)*+[o][F-]{\txt{\red
\textbf{RF}}}; (0,-20)*+[o][F-]{\txt{\red\textbf{B}}};
(-14,14)*+[o][F-]{\txt{\textbf{LF}}};
(14,-14)*+[o][F-]{\txt{\red\textbf{RB}}};
(-14,-14)*+[o][F-]{\txt{\red\textbf{LB}}};
{\myar@{->}_{\txt{$e_1$}}(0,2)*{};(0,18)*{}};
{\myar@{->}_{\txt{$e_7$}}(-2,0)*{};(-18,0)*{}};
{\myar@{->}_{\txt{$e_8$}}(-2,2)*{};(-12,12)*{}};
(1,2)*{}="X1"; (1.75,-2)*{}="X2"; "X1";"X2"
**[thicker][red]\crv{(3,7) & (15.5,0) & (3,-7)};
{\myar@[red]@{{}{}{>}}(2.75,-3.5)*{};(1.25,-2)*{}};
(5,-10)*{\txt{$e_2,e_3,e_4,$\\ $
e_5,e_6$}};\endxy
}\\
\subfigure[]{
\psfrag{J0}[tl]{$J_0$}
\psfrag{J1}{}
\psfrag{J2}{}
\psfrag{J3}{}
\psfrag{J4}[tc]{$J_4$}
\psfrag{J5}{}
\psfrag{J6}{}
\psfrag{J7}[tl]{$J_7$}
\psfrag{J8}{}
\psfrag{T}[cc]{\small \red Trajectory}
\psfrag{C}[cc]{\small \txt{Controllable\\ Move}}
\psfrag{U}[cc]{\small \txt{Uncontrollable \\ Move}}
\includegraphics[width=1.25in]{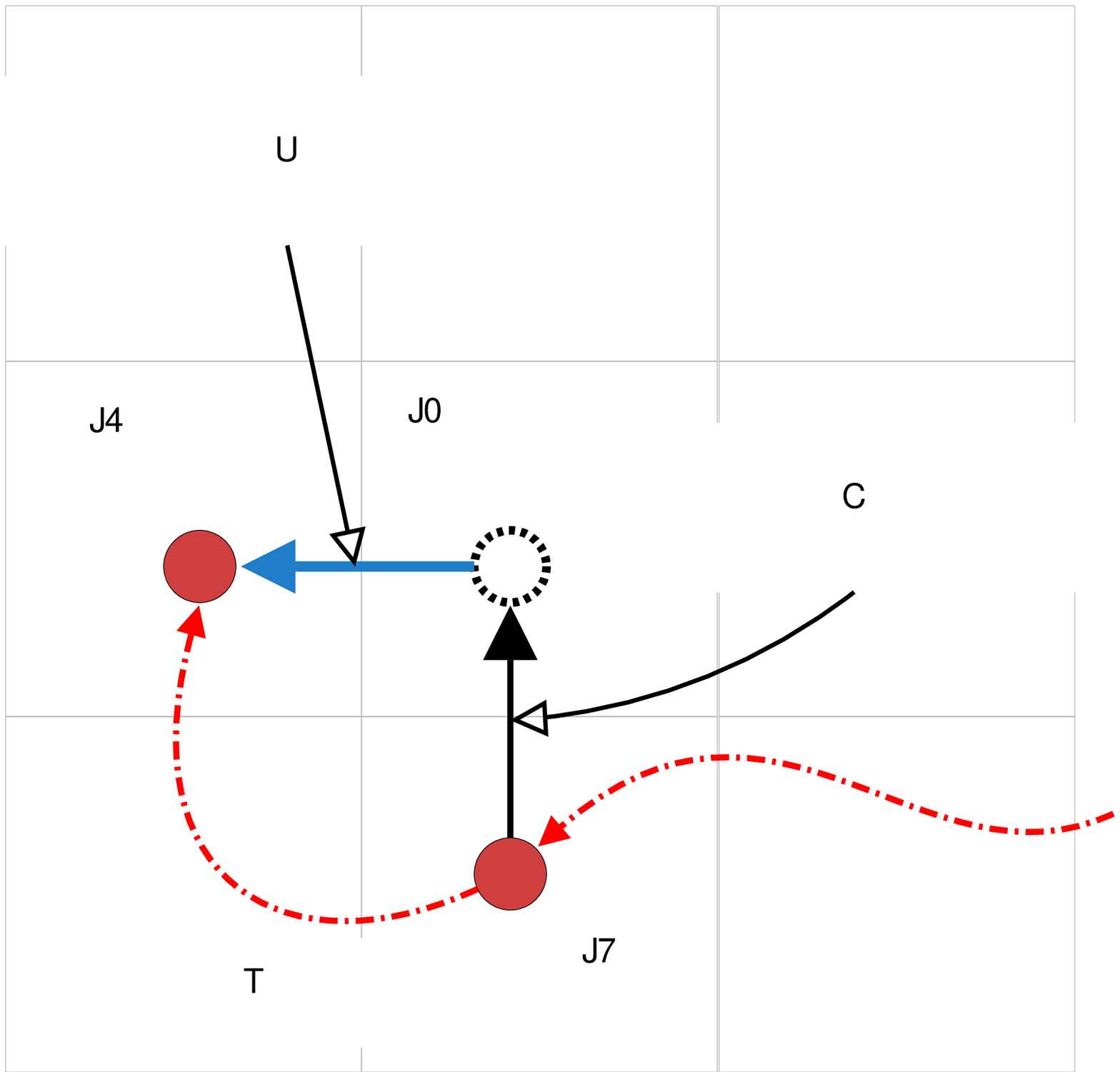}
}
\subfigure[]{
\xy 0;/r.2pc/:
(0,20)*{\xy
(0,0)*+[o][F-]{\txt{\textbf{C}}};
(20,0)*+[o][F-]{\txt{\red\textbf{R}}};
(-20,0)*+[o][F-]{\txt{\textbf{L}}};
(0,20)*+[o][F-]{\txt{\textbf{F}}}; (14,14)*+[o][F-]{\txt{\red
\textbf{RF}}}; (0,-20)*+[o][F-]{\txt{\red\textbf{B}}};
(-14,14)*+[o][F-]{\txt{\textbf{LF}}};
(14,-14)*+[o][F-]{\txt{\red\textbf{RB}}};
(-14,-14)*+[o][F-]{\txt{\red\textbf{LB}}};
{\myarL@[red]@{.>}_{\txt{$\boldsymbol{e_5'}$}}(0,-2)*{};(0,-18)*{}};
{\myarL@[red]@{.>}^{\txt{$\phantom{XXXX}\boldsymbol{e_3'}$}}(2,0)*{};(18,0)*{}};
{\myarL@/^0.5pc/@[red]@{.>}_{\txt{$\phantom{}\mspace{0mu}\boldsymbol{e_2'}$\\$\phantom{X}$}}(0,2)*{};(10,14)*{}};
{\myarL@/_0.5pc/@[red]@{.>}_{\txt{$\mspace{100mu}\phantom{XXX}\boldsymbol{e_4'}$}}(0,-2)*{};(12,-12)*{}};
{\myarL@[red]@{.>}_{\txt{$\boldsymbol{e_6'}$}}(-2,-2)*{};(-12,-12)*{}};
{\myar@{->}^{\txt{$e_1$\\$\phantom{X}$}}(0,2)*{};(0,18)*{}};
{\myar@{->}_{\txt{$e_7$}}(-2,0)*{};(-18,0)*{}};
{\myar@{->}_{\txt{$e_8$}}(-2,2)*{};(-12,12)*{}};
(1,2)*{}="X1"; (1.75,-2)*{}="X2"; "X1";"X2"
**[thicker][red]\crv{(3,7) & (15.5,0) & (3,-7)};
{\myar@[red]@{{}{}{>}}(2.75,-3.5)*{};(1.25,-2)*{}};
{\ar@{-->}(18,-7.5)*{};(8,-3)*{}};
(27,-8)*{\left \{ \textrm{\txt{Partially \\ Disabled }} \right. };
\endxy};\endxy
}
\caption{(a) shows available moves from the current state (C)
in unsupervised navigation automaton. (b) shows the enabled
moves in the optimally supervised PFSA with no dynamic uncertainty, (c) illustrates the case with dynamic uncertainty, so that
the robot can still uncontrollably (and hence unwillingly) make the disabled transitions, albeit with a small probability, $i.e.$,
probability of transitions $e_2',e_3',e_4'$ etc. is small. (d) illustrates the concept of using uncontrollable transitions
to model dynamical response for a 2D circular robot: $J_0$ is the target cell from $J_7$, while the actual trajectory of the 
robot (shown in dotted line) ends up in $J_4$. We can model this trajectory fragment as first executing a controllable move to $J_0$ and then
uncontrollably moving to $J_4$.}\label{Fig0333}
\vspace{0pt}
\end{figure}
\subsection{The Modified Navigation Automaton}
 The modified navigation automaton $\Gnm = (Q,\Sigma,\delta,\widetilde{\Pi},\chi)$ is defined similar to the
formulation in Section~\ref{sectionformulation}, with the exception that the alphabet $\Sigma$ is defined as follows:
\begin{gather}
 \Sigma = \Sigma_C \cup \Sigma_{UC} \cup \{ u \}
\end{gather}
where $\Sigma_C$ is the set of controllable moves from any unblocked navigation state (as before), while $\Sigma_{UC}$ is the 
set of uncontrollable transitions that can occur as an effect of the platform dynamics and oather uncertainty effects. We assume that for each $\sigma \in \Sigma_C$, we have 
a corresponding event $\sigma_u$ in $\Sigma_{UC}$, such that both $\sigma$ and $\sigma_u$ represent the same physical move
from a given navigation state; but while $\sigma$ is controllable and may be disabled, $\sigma_u$ is uncontrollable. 
 Although for 2D circular robots we have: $\Crd(\Sigma_C) = \Crd(\Sigma_{UC})$, in general, there can exist uncontrollable moves reflecting estimation errors that cannot be 
realized via a single controllable move. For example, for planar rectangular robots with a non-zero minimum turn radius, there can be an uncontrollable shift in the heading without any change in the 
$xy$-positional coordinates, which may reflect errors in heading estimation, but such a move cannot be executed via controllable transitions due to the restriction on the minimum turn radius. We will discuss these issues in more details in the sequel.

\begin{defn}\label{defgamma}
The coefficient of dynamic deviation $\gamma(\Gnm)$ is defined as follows:
\begin{gather}
 \gamma(\Gnm) = 1 - \max_{q_i \in Q}\sum_{\sigma_u \in \Sigma_{UC}} \tilde{\pi}(q_i,\sigma_u)
\end{gather}
\end{defn}
\begin{defn}\label{defpitildenavMOD}
The event generation probabilities for $\Gnm$ is defined as follows: $\forall q_i \in Q\setminus\{q_\Gl\},\sigma_j
\in \Sigma$, 
\begin{align}
\tilde{\pi}(q_i,\sigma_j) & = \left \{ \begin{array}{cl}
\frac{ 1 - \sum_{\sigma_u \in \Sigma_{UC}} \tilde{\pi}(q_i,\sigma_u) }{\mathrm{No. \ of \ controllable \ events \ at \ } q_i}, & \mathrm{if} \ \sigma_j \in \Sigma_C \\
\tilde{\pi}(q_i,\sigma_j), & \mathrm{if} \ \sigma_j \in \Sigma_{UC} \\
1, & \mathrm{otherwise}
\end{array}
\right. \notag
\end{align}
and for the goal, we define as before:
\begin{align}
\tilde{\pi}(q_\Gl,\sigma_j) & = \left \{ \begin{array}{cl}
\frac{1}{\mathrm{No. \ of \ controllable \ events \ at \ } q_i}, & \mathrm{if} \ \sigma_j \in \Sigma_C \\
1, & \mathrm{otherwise}
\end{array}
\right. \notag
\end{align}
Note that we assume there is no uncontrollability at the goal. This assumption is made for technical reasons clarified in the sequel
and also to reflect the fact that once we reach the goal, we terminate the mission and hence such effects can be neglected.
\vspace{3pt}
\end{defn}
We note the following:
\begin{itemize}
 \item In the idealized case where we assume platform dynamics is completely absent, we have $\tilde{\pi}(q_i,\sigma_u)=0, \forall q_i \in Q,\forall \sigma_u \in \Sigma_{UC}$ implying that $\gamma(\Gnm)=1$, while in practice, we expect $\gamma(\Gnm)<1$.
\item In Definition~\ref{defgamma}, we allowed for the possibility of $\tilde{\pi}(q_i,\sigma_u)$ being dependent on the particular navigation states $q_i \in Q$.
A significantly simpler approach would be to redefine the probability of the
uncontrollable events $\tilde{\pi}(q_i,\sigma_u)$ as follows:
\begin{gather}
 \tilde{\pi}_{AV}(\sigma_u) = \frac{1}{\Crd(Q)} \sum_{q_i \in Q}\tilde{\pi}(q_i,\sigma_u)
\end{gather}
where $\tilde{\pi}_{AV}(\sigma_u)$ is the average probability of the uncontrollable event $\sigma_u$ being generated. 
\end{itemize}
The averaging of the probabilities of uncontrollable transitions is 
justified in situations where we can assume that  the dynamic response of the platform is not dependent on the location of the platform in the workspace.
In this simplified case, the event generation probabilities for $\Gnm$ can be stated as: $\forall q_i \in Q\setminus\{q_\Gl\},\sigma_j
\in \Sigma$, 
\begin{align}
\tilde{\pi}(q_i,\sigma_j) & = \left \{ \begin{array}{cl}
\frac{\gamma(\Gnm)}{\mathrm{No. \ of \ controllable \ events \ at \ } q_i}, & \mathrm{if} \ \sigma_j \in \Sigma_C \\
\tilde{\pi}_{AV}(\sigma_j), & \mathrm{if} \ \sigma_j \in \Sigma_{UC} \\
1, & \mathrm{otherwise}
\end{array}
\right. \notag
\end{align}
%
%
The key difficulty is allowing the aforementioned dependence on states  is not the decision optimization that would follow, but the complexity of  identifying the 
probabilities; averaging results in significant simplification as shown in the sequel. Thus, even if we cannot realistically 
average out the uncontrollable transition probabilities over the entire state space, we could decompose the workspace to identify subregions where such an assumption is locally valid.
In this paper, we do not address formal approaches to such decomposition, and will generally assume that the afore-mentioned averaging is valid throughout the workspace; the explicit 
identification of the sub-regions is more a matter of implementation specifics, and has little to do with the details of the planning algorithm presented here, and hence  will be discussed elsewhere.
%
In Section~\ref{secuncertain}, we will address the computation of the probabilities of uncontrollable transitions from observed dynamics.
First, we will establish the main planning algorithm as a solution to the performance optimization of the navigation automaton in the next section.
\section{Optimal Planning Via Decision Optimization Under Dynamic Effects}\label{secoptplan}
The modified model $\Gnm$ can be optimized via the measure-theoretic technique in a straightforward manner, using
the $\nustar$-algorithm reported in \cite{CMR08}. The presence of uncontrollable transitions in $\Gnm$ poses no problem (as far as the automaton optimization is concerned), since
the underlying measure-theoretic optimization is already capable of handling such effects~\cite{CR07}.
However the presence of uncontrollable transitions weakens  some of the theoretical results  obtained
in \cite{CMR08} pertaining to navigation, specifically the absence of local maxima. We show that this causes the $\nustar$ planner to lose some of its  crucial advantages, and therefore must be explicitly 
addressed via a recursive decomposition of the planning problem.
\begin{prop}[Weaker Version of Proposition~\ref{propexisnustar}]\label{propweak}
There exists a $\nustar$-path $\rho(q_i,q_\Gl )$ from any state $q_i \in Q$ to the goal $q_\Gl \in Q$ if $\mm (q_i)  > 0$.
\end{prop}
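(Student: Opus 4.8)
The plan is to construct, starting from $q_i$, a sequence of distinct states joined by enabled transitions of $\Gnm$ along which the measure $\mm$ strictly increases, and then to argue that such a sequence can only terminate at the goal. The driving tool is the fixed-point identity for the renormalized measure: from Definition~\ref{defnustarmeas} we have $\boldsymbol{\nu_\#} = \theta_{min}[I-(1-\theta_{min})\Pi^\#]^{-1}\boldsymbol{\chi}$, equivalently $\boldsymbol{\nu_\#} = (1-\theta_{min})\Pi^\#\boldsymbol{\nu_\#} + \theta_{min}\boldsymbol{\chi}$, which reads componentwise as
\begin{gather*}
\mm(q_i) = (1-\theta_{min})\sum_{q_j \in Q}\pi^{\#}_{ij}\,\mm(q_j) + \theta_{min}\,\chi(q_i).
\end{gather*}

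First I would prove an ascending-successor lemma: at any non-goal, non-obstacle state $q_i$ (so $\chi(q_i)=0$) with $\mm(q_i)>0$, some enabled transition leads to a state $q_j$ with $\mm(q_j) > \mm(q_i)$. At such a state the identity collapses to $\mm(q_i) = (1-\theta_{min})\sum_j \pi^{\#}_{ij}\mm(q_j)$; since $0 < 1-\theta_{min} < 1$ and $\sum_j \pi^{\#}_{ij} \leqq 1$, this forces $\sum_j \pi^{\#}_{ij}\mm(q_j) = \mm(q_i)/(1-\theta_{min}) > \mm(q_i)$, which cannot hold if every successor reachable with $\pi^{\#}_{ij}>0$ had measure $\leqq \mm(q_i)$. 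The only property used is that $\pi^{\#}_{ij}>0$ marks an occurring transition, so the lemma is completely indifferent to whether the move belongs to $\Sigma_C$ or to $\Sigma_{UC}$; this is precisely why the introduction of uncontrollable transitions does not destroy the one-directional claim.

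Second I would show that the goal carries the global maximum of $\mm$ in the only case that matters. Let $M = \max_{q}\mm(q)$, attained at $q^\star$. Evaluating the identity at $q^\star$ and bounding the successor average by $M$ (legitimate once $M \geqq 0$) gives $M \leqq (1-\theta_{min})M + \theta_{min}\chi(q^\star)$, i.e.\ $M \leqq \chi(q^\star)$. Because $\chi$ takes values only in $\{-1,0,1\}$, a strictly positive $M$ forces $\chi(q^\star)=1$ and hence $q^\star = q_\Gl$, so $\mm(q_\Gl)=M>0$ and no state exceeds the goal in measure. If instead $M \leqq 0$, the hypothesis $\mm(q_i)>0$ is unsatisfiable and the proposition holds vacuously.

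Finally I would assemble the path: put $q_{r_1}=q_i$ and, while the current state is not $q_\Gl$, append a strictly larger successor supplied by the ascending-successor lemma. The measures strictly increase, so the states are automatically distinct and the run stops after at most $\Crd(Q)$ steps; positivity is inherited all along (measures only grow above $\mm(q_i)>0$), so obstacle states, whose measure is $-1<0$, are never entered and the lemma always applies. Since any strictly larger successor produced at a non-goal state exceeds every state already chosen, it is genuinely new, so the run cannot stall and must halt at $q_\Gl$. The emitted state list meets all four conditions of Definition~\ref{defnustar} and is therefore a $\nustar$-path $\rho(q_i,q_\Gl)$. The main obstacle is exactly the worry that the ascent could freeze at a non-goal local maximum of $\mm$ --- the phenomenon that breaks the stronger Corollary~\ref{corollarynolocalmax}; the ascending-successor lemma dissolves this worry by producing a strictly higher neighbour at every positive-measure non-goal state, while the global-maximality of $q_\Gl$ guarantees the ascent has no terminus other than the goal.
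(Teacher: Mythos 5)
Your proof is correct, but it follows a genuinely different route from the paper's. The paper disposes of the statement in two lines: since $\mm(q_i)$ is a sum of string measures and only strings terminating at $q_\Gl$ carry positive measure, $\mm(q_i)>0$ forces the existence of at least one string from $q_i$ to the goal, and the existence of a $\nustar$-path is then asserted to ``follow from the definition.'' That last step silently converts a string reaching the goal into a state sequence along which $\mm$ is monotone, which is really the content of Lemma~\ref{lemmpathseq} imported from the uncertainty-free setting of \cite{CMR08}. Your ascending-successor lemma, extracted from the fixed-point identity $\boldsymbol{\nu_\#} = (1-\theta_{min})\Pi^\#\boldsymbol{\nu_\#} + \theta_{min}\boldsymbol{\chi}$, supplies exactly that bridge from first principles, and it is indifferent to whether the outgoing probability mass sits on controllable or uncontrollable transitions --- which is the honest reason the one-directional implication survives the modified model $\Gnm$. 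So the paper's version is shorter but leans on prior machinery whose validity under uncontrollability is precisely what is in question; yours is longer but self-contained and effectively re-proves the forward half of the gradient-ascent property. Two small observations: your global-maximality step for $q_\Gl$ is not actually needed (finiteness of $Q$, distinctness of the visited states, and the fact that the ascending-successor lemma fires at every non-goal state of positive measure already force the ascent to halt at the goal); and it is worth stating explicitly that the strictly-larger successor cannot be one of the self-loops created by disabling, since a self-loop contributes $\mm(q_i)$ itself --- a point you use implicitly when you conclude the successor is ``genuinely new.''
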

\begin{proof}
We note that $\mm (q_i)  > 0$ implies that there necessarily exists at least one string $\omega$
of positive measure initiating from $q_i$ and hence there exists at least one string
 that
terminates on $q_\Gl$. The proof then follows from the definition of $\nustar$-paths (See Definition~\ref{defnustar}).
\end{proof}
\begin{rem}
 Comparing with Proposition~\ref{propexisnustar}, we note that the \textit{only if} part of the result is lost in the modified case.
\end{rem}

\begin{rem}
We note that under the modified model, $\mm(q_i)<0$ needs to be interpreted somewhat differently. In absence of any dynamic uncertainty, 
$\mm(q_i)<0$ implies that no path to goal exists. However, due to weakening of Proposition~\ref{propobstacleavoidance} (See Proposition~\ref{propweak}), 
$\mm(q_i)<0$ implies that the measure  of the set of strings  reaching the goal is smaller to that of the set of strings  hitting an obstacle from the state $q_i$.
\end{rem}
The $\nustar$-planning algorithm is based on several abstract concepts such as 
the navigation automaton and the formal measure of symbolic strings. 
It is important to realize that in spite of the somewhat elaborate framework presented here, $\nustar$-optimization is
free from heuristics, which is often not the case with competing approaches.
In this light, the next proposition is critically important as it elucidates 
this concrete physical connection. 
\begin{figure}[!ht]
 \centering
\psfrag{N}[l]{\sffamily \small No Connecting Path}
\psfrag{G}[l][c]{\small \bf  Goal}
\psfrag{A}[c]{\small \bf A}
\psfrag{B}[c]{\small \bf B}
\psfrag{O}[l]{\small \bf Obstacles}
\includegraphics[width=2in,height=1.75in]{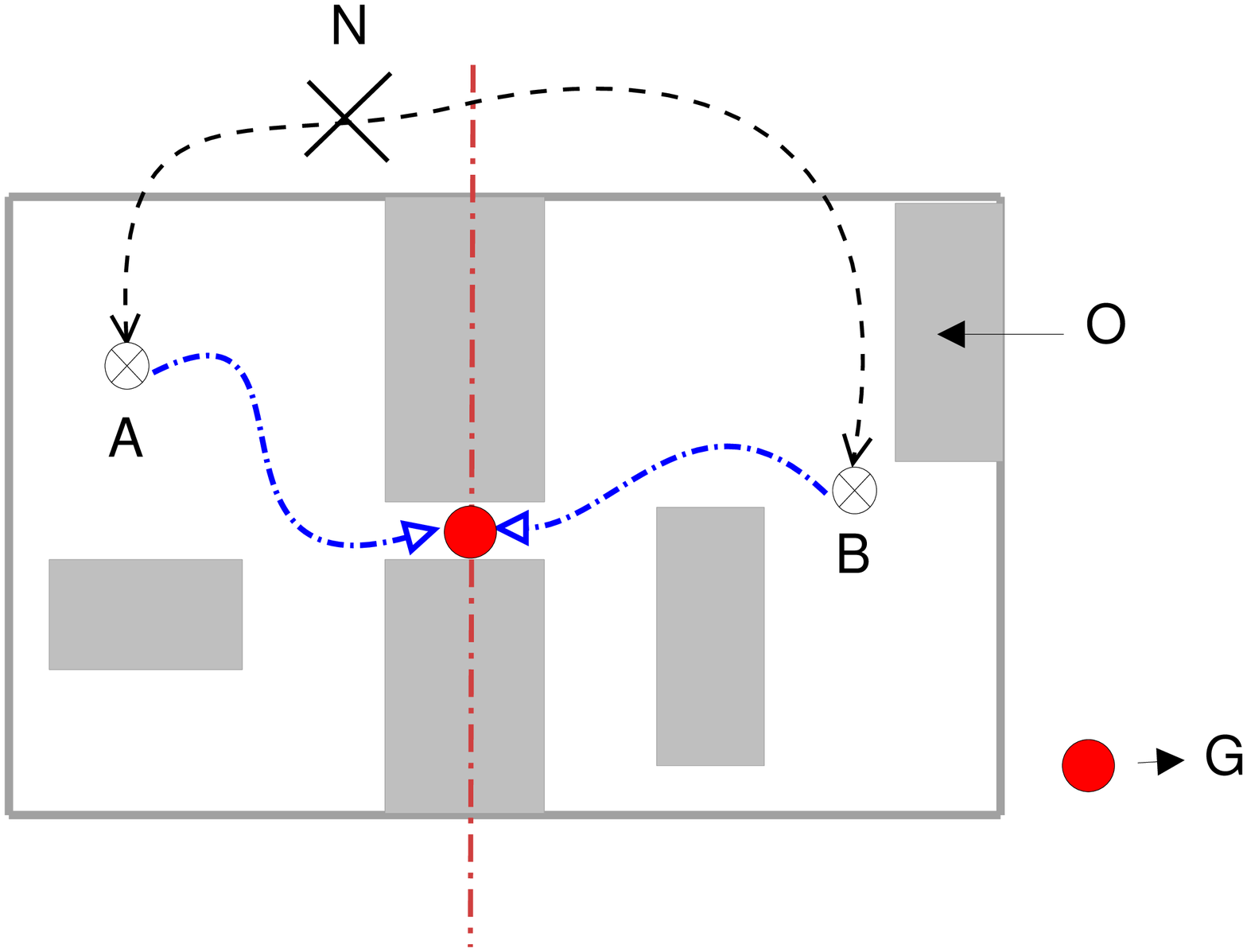}
\caption{Absence of uncontrollable transitions at the goal imply that there is no path in the optimally disabled navigation automaton from
point A to point B (or vice versa), since all controllable transitions will necessarily be disabled at the goal. It follows that the stationary probability vector
may be different depending on whether one starts left or right to the goal. However, note that that  any two points on the same side have a path (possibly made of uncontrollable transitions)
between them; implying that the stationary probability vector will be identical if either of them is chosen as the start locations. }\label{figreducible}
\end{figure}
\begin{prop}\label{propoptinterpret}
 Given that a feasible path exists from the starting state to the goal, the $\nustar$ planning algorithm under non-trivial dynamic uncertainty ($i.e.$ with $\gamma(\Gnm) < 1$) maximizes 
the probability $\wp_\Gl$ of reaching the goal while simultaneously minimizing the
probability $\wp_\circleddash$ of hitting an obstacle.
\end{prop}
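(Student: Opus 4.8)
The plan is to read off from Definition~\ref{defcharold} that $\boldsymbol{\chi}$ equals $+1$ only at $q_\Gl$ and $-1$ only at $q_\circleddash$, so that the renormalized measure $\mm$ is literally a signed accounting of the chance of terminating at the goal versus terminating at an obstacle; the statement then reduces to one about absorption probabilities of the supervised Markov chain. First I would establish the structural fact that \emph{the goal is absorbing in $\Gnm^\star$}. At the working parameter $\theta_{min}>0$ each symbol of a string is discounted by a factor $(1-\theta_{min})$, and since $\chi=0$ at every transient state the entrywise measure relation gives $|\nu^i_{\theta_{min}}|\le(1-\theta_{min})<1$ for every non-terminal $q_i$, whereas $\nu^{q_\Gl}_{\theta_{min}}=1$. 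Hence $q_\Gl$ carries the strictly largest measure, so the disabling rule of Algorithm~\ref{Algorithm02} removes \emph{every} controllable move out of $q_\Gl$; because Definition~\ref{defpitildenavMOD} places no uncontrollable event at the goal, each removed move becomes a self-loop (Definition~\ref{contapp}) and $q_\Gl$ becomes absorbing. The obstacle $q_\circleddash$ is absorbing by construction, while every other unblocked state keeps a positive-probability uncontrollable exit, so $\{q_\Gl,q_\circleddash\}$ are the only absorbing states.

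Next I would make the probabilistic content of $\mm$ precise. From $\boldsymbol{\nu_\#}=\theta_{min}[I-(1-\theta_{min})\Pi^\#]^{-1}\boldsymbol{\chi}$ (Definition~\ref{defnustarmeas}) the per-state relation is $\nu^i_\theta=(1-\theta)\sum_j\pi^\#_{ij}\nu^j_\theta+\theta\chi_i$. Evaluated at the two absorbing states this forces $\nu^{q_\Gl}_\theta=1$ and $\nu^{q_\circleddash}_\theta=-1$ for all $\theta$, while at a transient state (where $\chi_i=0$) it reduces to $\nu^i_\theta=(1-\theta)\sum_j\pi^\#_{ij}\nu^j_\theta$. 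Passing to the limit $\theta\to0^+$ that defines the renormalized measure of the non-terminating plant, the transient equations become the harmonic system $\nu^i=\sum_j\pi^\#_{ij}\nu^j$ with boundary data $+1$ at $q_\Gl$ and $-1$ at $q_\circleddash$ --- exactly the linear system solved by $\wp_\Gl(q_i)-\wp_\circleddash(q_i)$, the difference of the goal- and obstacle-absorption probabilities. This yields the identification $\mm(q_i)=\wp_\Gl(q_i)-\wp_\circleddash(q_i)$.

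Finally I would combine this with optimality and a normalization identity. By the language-measure optimization result (Definition~\ref{pdef}, Remark~\ref{remtheta}, and~\cite{CR07}), $\nustar$ returns a policy whose measure vector is elementwise maximal, $\boldsymbol{\nu}^\star\eg\boldsymbol{\nu}^\dagger$ for every admissible disabling set, and this ordering persists for all $\theta\le\theta_{min}$ and hence in the limit; therefore $\nustar$ maximizes $\wp_\Gl(q_i)-\wp_\circleddash(q_i)$ over all policies and all states simultaneously. Under the hypothesis that a feasible path from the start to $q_\Gl$ exists, every relevant state reaches $\{q_\Gl,q_\circleddash\}$ with probability one, so $\wp_\Gl(q_i)+\wp_\circleddash(q_i)=1$. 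Substituting, $\wp_\Gl-\wp_\circleddash=2\wp_\Gl-1$, whence maximizing the measure is identical to maximizing $\wp_\Gl$ and, equivalently, to minimizing $\wp_\circleddash=1-\wp_\Gl$, which is the claim.

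The hard part will be the normalization step $\wp_\Gl+\wp_\circleddash=1$: because $\Gnm^\star$ can be \emph{reducible} (Figure~\ref{figreducible}), I must use the feasible-path hypothesis to confine attention to the sub-chain from which $q_\Gl$ is reachable and to rule out closed recurrent classes of purely transient states --- which is precisely where the fact that every non-goal unblocked state retains a positive-probability uncontrollable exit is used. Secondary care is needed to justify interchanging the limit $\theta\to0^+$ with the resolvent $[I-(1-\theta)\Pi^\#]^{-1}$ and to confirm that the elementwise ordering of $\boldsymbol{\nu_\theta}$ survives into that limit, but both follow from the contraction and continuity estimates already invoked in the computation of $\theta_{min}$.
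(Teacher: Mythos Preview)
Your proposal is correct and follows essentially the same route as the paper: identify $q_\Gl$ and $q_\circleddash$ as the only absorbing (``dump'') states of the supervised chain, observe that the quantity maximized by $\nustar$ is $\wp_\Gl-\wp_\circleddash$, and then use $\wp_\Gl+\wp_\circleddash=1$ to rewrite this as $2\wp_\Gl-1=1-2\wp_\circleddash$. The paper's proof is terser---it invokes the stationary vector directly and cites \cite{CR07} for the fact that the optimization maximizes $\wp^T\boldsymbol{\chi}$, whereas you derive the harmonic identification $\mm(q_i)=\wp_\Gl(q_i)-\wp_\circleddash(q_i)$ explicitly from the resolvent and take the $\theta\to0^+$ limit---but the logical skeleton is identical. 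One small point: your claim $\nu^{q_\Gl}_{\theta_{min}}=1$ presupposes that the goal is already absorbing, so the cleaner order is to argue first (as the paper does later, in the proof of Proposition~\ref{proprecurs}) that $q_\Gl$ attains the maximal measure simply because it is the unique state with positive characteristic, and only then conclude that all controllable exits from $q_\Gl$ are disabled.
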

\begin{proof}
 Let $\wp$ be the stationary probability vector for the stochastic transition probability matrix corresponding to the 
navigation automaton $\Gnm$, for a starting state from which a feasible path to goal exists. 
(Note that $\wp$ may depend on the starting state; Figure~\ref{figreducible} illustrates one such example. However, once we fix a particular starting state, the stationary vector 
$\wp$ is uniquely determined).
The selective disabling of controllable events modifies the transition matrix and in effect
alters $\wp$, such that $\wp^T \chi$ is maximized~\cite{CR07}, where $\boldsymbol{\chi}$ is the characteristic weight vector, $i.e.$ , $\boldsymbol{\chi}_i = \chi(q_i)$.
Recalling that $\chi(q_\Gl) = 1,\chi(q_\circleddash) = -1$ and $\chi(q_i) = 0$ if $q_i$ is neither the goal nor the abstract obstacle state $q_\circleddash$, 
we conclude that the optimization, in effect, maximizes the quantity:
\begin{gather}
 \psi = \wp_\Gl - \wp_\circleddash
\end{gather}
Also, note that the optimized navigation automaton has only two dump states, namely the goal $q_\Gl$ and the abstract obstacle state $q_\circleddash$. 
That the goal $q_\Gl$ is in fact a dump state is ensured by not having uncontrollable transitions at the goal (See Definition~\ref{defpitildenavMOD}). Hence
we must have 
\begin{gather}
 \forall q_i \in Q\setminus \{q_\Gl,q_\circleddash\}, \ \wp_i = 0
\end{gather}
implying that 
\begin{subequations}
\begin{gather}
 \wp_\Gl + \wp_\circleddash = 1 \\
\Longrightarrow \psi = 2\wp_\Gl - 1 = 1 - 2\wp_\circleddash
\end{gather}
\end{subequations}
Hence it follows that the optimization maximizes $\wp_\Gl$ and simultaneously minimizes $\wp_\circleddash$.
\end{proof}
\vspace{3pt}
\begin{rem}\label{remvalid}
It is easy to see that Proposition~\ref{propoptinterpret} remains valid if $\chi(q_\Gl) = \chi_\Gl > 1$. In fact, the 
result remains valid  as long as the characteristic weight of the goal is positive and the characteristic weight 
of the abstract obstacle state $q_\circleddash$ is negative.
\end{rem}
\subsection{Recursive Problem Decomposition For Maxima Elimination}
Weakening of Proposition~\ref{propobstacleavoidance} (See Proposition~\ref{propweak})
has the crucial consequence that Corollary~\ref{corollarynolocalmax} is no longer valid.
Local maxima can occur under the modified model. This is a serious problem
for autonomous planning and must be remedied.
The problem becomes critically important when applied to solution of 
mazes; larger the number of obstables, higher is the chance of ending up in 
a local maxima.
While elimination of local maxima  is notoriously difficult for 
potential based planning approaches, $\nustar$ can be
modified with ease into a recursive scheme that yields maxima-free plans 
 in models with non-zero dynamic 
effects ($i.e.$ with $\gamma(\Gnm) < 1$). 

It will be shown in the sequel that for successful execution of the algorithm, we may need to assign a larger than unity 
characteristic weight $\chi_\Gl$ to the goal $q_\Gl$. A sufficient lower bound for $\chi_\Gl$, with possible dependence on the
recursion step, is given in
Proposition~\ref{proprecurs}.
The basic recursion scheme can be described as follows (Also see the flowchart illustration in Algorithm~\ref{figflow}):
\begin{enumerate}
 \item In the first step ($i.e.$, at recursion step $k=1$) we execute $\nustar$-optimization on the given navigation automaton $\Gnm$
and obtain the measure vector $\boldsymbol{\nu_\#}^{[k]}$.
\item We denote the set of states with strictly positive measure as $Q_k$ ($k$ denotes the recursion step), $i.e.$,
\begin{gather}\label{eqqk}
 Q_k = \{  q_i \in Q : \boldsymbol{\nu_\#}^{[k]}\vert_i > 0 \}
\end{gather}
\item If $Q_k = Q_{k-1}$, the recursion terminates; else
we update the characteristic weights as follows:
\begin{gather}
 \forall q_i \in Q_k, \ \chi(q_i) = \chi_\Gl^{[k]}
\end{gather}
and continue the recursion by going back to the first step
and incrementing the step number $k$.
\end{enumerate}

\begin{algorithm}[!ht]
\restylealgo{plain}
\centering
\psfrag{P}[c]{\sffamily \bf \small \txt{\color{OrangeRed4} Problem: \\ $\Gnm$}}
\psfrag{E}[c]{\sffamily \bf \small \color{Green4} Execute $\nustar$}
\psfrag{C}[c]{\sffamily \bf \small $Q_k = Q_{k-1} ?$}
\psfrag{T}[c][c]{\sffamily \bf \small Terminate}
\psfrag{K}[l][c]{\sffamily \bf \small Set $k = k+1$}
\psfrag{S}[c]{\sffamily \bf \small \txt{Save vector \\  $\boldsymbol{\nu_\#}^{[k]}$}}
\psfrag{G}[c]{\sffamily \bf \small \txt{ 1. Set: $\forall q_i \in Q_k$\\ $\chi(q_i) = \chi_\Gl^{[k]} $\\ $\phantom{-}$2. Eliminate Uncont.
\\ transitions from \\all $q_i \in Q_k$}}
\psfrag{Y}[c][c]{\sffamily \bf \small \color{OrangeRed4} No}
\psfrag{N}[c][c]{\sffamily \bf \small \Mblue Yes}
\psfrag{Q}[cc]{\sffamily \bf \small \txt{$\phantom{XXXXXXXXX}$Define: \\$\phantom{XXXXXXXXXXX}$ $Q_k=\{  q_i \in Q : \boldsymbol{\nu_\#}^{[k]}\vert_i > 0 \}$}}
\psfrag{I}[l]{\sffamily \bf \small \txt{Initialize: \\ $k = 0, \ Q_0 = \{q_\Gl\}$}}
 \includegraphics[width=3in]{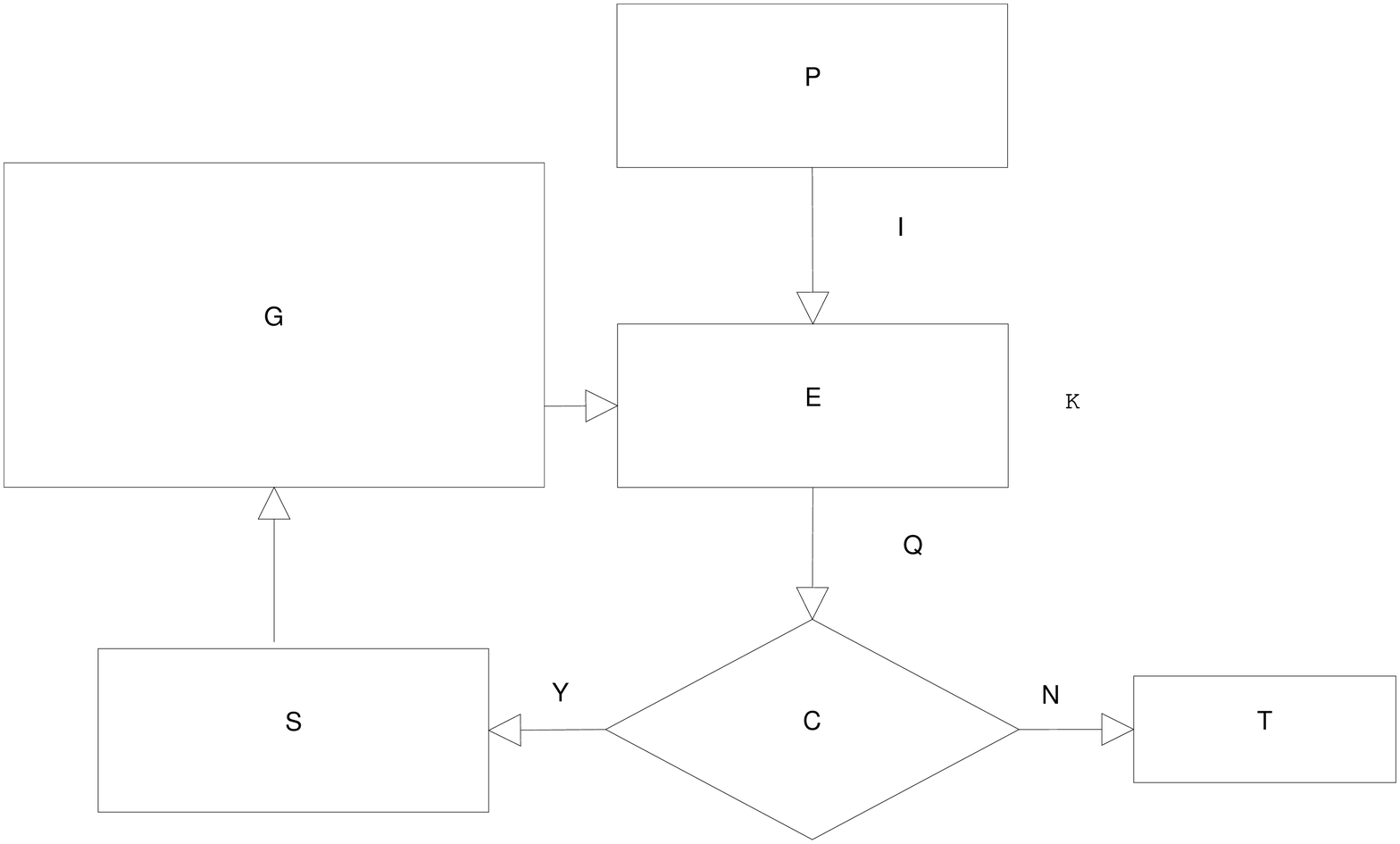}
\caption{Flowchart for  recursive $\nustar$-planning}\label{figflow}
\end{algorithm}
\begin{prop}\label{proprecurs}
If $\theta_{min}^{[k]}$ is the critical termination probability (See Definition~\ref{defthetamin}) for the $\nustar$-optimization in the $k^{th}$ recursion step of 
Algorithm~\ref{figflow}, then
the following condition
\begin{gather}\label{eqcond1}
\chi_\Gl^{[k]} > \frac{\Crd(\Sigma_C)}{1-\theta_{min}^{[k]}}\left (  \frac{1}{\gamma} - 1 \right )
\end{gather}
is sufficient to guarantee that the following statements are true:
\begin{enumerate}
\item If there exists a state $q_i \in Q\setminus Q_{k}$ from which 
at least one state $q_\ell \in Q_k$
is reachable in one hop, then
$\boldsymbol{\nu_\#}^{[k]}\vert_i > 0$.
 \item The recursion terminates in at most $\Crd(Q)$ steps.
\item For the $k^{th}$ recursion step, either $Q_k \supsetneqq Q_{k-1}$ or  no feasible path exists to $q_\Gl$ from any state $q_i \in Q \setminus Q_{k-1}$.
\end{enumerate}
\end{prop}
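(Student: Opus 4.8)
The plan is to prove assertion~(1) by a direct estimate on the renormalized measure, and then to read off~(2) and~(3) from it by a finiteness-and-connectivity argument. I would work with the series form $\boldsymbol{\nu_\#}^{[k]}\vert_i = \theta_{min}^{[k]}\sum_{\omega \in L(q_i)}(1-\theta_{min}^{[k]})^{\vert\omega\vert}\,\tilde{\pi}(q_i,\omega)\,\chi(\delta^\#(q_i,\omega))$ obtained by expanding $\boldsymbol{\nu_\#} = \theta_{min}[I-(1-\theta_{min})\Pi^\#]^{-1}\boldsymbol{\chi}$ (Definition~\ref{defnustarmeas}), and split it as $P_i - N_i$, where $P_i \geq 0$ collects the strings terminating at a positively-weighted state and $N_i \geq 0$ collects the strings terminating at $q_\circleddash$. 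Two structural facts set up the estimate. First, after the previous recursion step the only negatively-weighted state is $q_\circleddash$ (every state of the current sub-goal set carries the positive weight $\chi_\Gl^{[k]}$ and---having had its uncontrollable transitions deleted and its controllable transitions disabled by the optimizer---is absorbing with measure exactly $\chi_\Gl^{[k]}$), so $N_i$ indeed accounts for all negative contributions. Second, in the optimally supervised automaton every controllable transition into a blocked cell is disabled, because such a cell has strictly smaller measure than any free cell; hence a string can reach $q_\circleddash$ only after executing at least one uncontrollable event $\sigma_u \in \Sigma_{UC}$.

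For~(1), let $q_\ell$ be the sub-goal reachable from $q_i$ in one controllable hop. Since no measure can exceed the maximal characteristic weight, $\boldsymbol{\nu_\#}^{[k]}\vert_\ell = \chi_\Gl^{[k]} \geq \boldsymbol{\nu_\#}^{[k]}\vert_i$, so the enabling rule of Algorithm~\ref{Algorithm02} keeps the move $q_i \to q_\ell$ enabled. As the realizing event has probability at least $\gamma/\Crd(\Sigma_C)$ (Definition~\ref{defpitildenavMOD}) and $q_\ell$ is absorbing with weight $\chi_\Gl^{[k]}$, the strings entering $q_\ell$ through this move give $P_i \geq (1-\theta_{min}^{[k]})\,\frac{\gamma}{\Crd(\Sigma_C)}\,\chi_\Gl^{[k]}$. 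For $N_i$, every obstacle-reaching string spends probability on some $\sigma_u$, and the total uncontrollable probability out of any state is at most $1-\gamma$ (Definition~\ref{defgamma}); the discount $(1-\theta_{min}^{[k]})^{\vert\omega\vert}$ and the substochasticity of $\tilde\pi$ keep the resulting series summable and yield $N_i \leq 1-\gamma$. Hence $\boldsymbol{\nu_\#}^{[k]}\vert_i \geq (1-\theta_{min}^{[k]})\,\frac{\gamma}{\Crd(\Sigma_C)}\,\chi_\Gl^{[k]} - (1-\gamma)$, which is strictly positive precisely when~\eqref{eqcond1} holds. Thus $q_i \in Q_k$, i.e. every one-hop predecessor of the sub-goal set acquires positive measure.

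Assertions~(2) and~(3) now follow quickly. For~(3): if some state outside the sub-goal set reaches it in one hop, the previous paragraph places that state in $Q_k$ but not in $Q_{k-1}$, so $Q_k \supsetneqq Q_{k-1}$; otherwise no transition enters the sub-goal set from outside, and since $q_\Gl$ lies in that set, any feasible path from an outside state to the goal would have to cross into it---so no feasible path exists from $Q \setminus Q_{k-1}$, which is the alternative. For~(2): $Q_0 = \{q_\Gl\}$ and, by~(3), every non-terminating step strictly enlarges $Q_k \subseteq Q$; a strictly increasing chain of subsets of the finite set $Q$ has length at most $\Crd(Q)$, so the recursion halts within $\Crd(Q)$ steps.

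The main obstacle is the bound $N_i \leq 1-\gamma$ in~(1): it requires first justifying that $\nustar$ genuinely disables every controllable transition into an obstacle (so $q_\circleddash$ is reachable only through an uncontrollable move), and then dominating the sum over \emph{all} obstacle-reaching strings---of every length and through every intermediate free cell---by the single quantity $1-\gamma$. The cleanest route is to note that the per-step uncontrollable probability is uniformly at most $1-\gamma$ while the geometric factor coming from $\theta_{min}^{[k]}$ makes the string series converge, so $1-\gamma$ is in fact a conservative bound and~\eqref{eqcond1} is comfortably sufficient. A secondary point to state explicitly is the indexing convention: the set carrying the boosted weight $\chi_\Gl^{[k]}$ while $\boldsymbol{\nu_\#}^{[k]}$ is computed is the set produced by the previous iteration, so "reachable in one hop" in~(1) is to be read against that boosted sub-goal set.
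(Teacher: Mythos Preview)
Your overall architecture coincides with the paper's: decompose $\boldsymbol{\nu_\#}^{[k]}\vert_i$ into a positive and a negative part, lower-bound the positive part by the single one-hop contribution $(1-\theta_{min}^{[k]})\frac{\gamma}{\Crd(\Sigma_C)}\chi_\Gl^{[k]}$, upper-bound the negative part by $1-\gamma$, and then read off Statements~2 and~3 from the resulting strict growth of $Q_k$. Your treatment of the positive part, of the absorbing nature of the sub-goal set after uncontrollable transitions are stripped, and of Statements~2--3 matches the paper almost line for line.

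The one place where your proposal and the paper genuinely differ is the justification of $N_i \leq 1-\gamma$, and here your argument has a gap. You assert that ``every obstacle-reaching string spends probability on some $\sigma_u$'' and that the per-state uncontrollable mass is at most $1-\gamma$, and conclude that the \emph{total} renormalized mass of obstacle-reaching strings is bounded by $1-\gamma$. That inference is not valid as stated: an obstacle-reaching string may first wander through many free cells via \emph{enabled controllable} moves before its single uncontrollable step, and summing over all such prefixes can in principle accumulate mass well beyond $1-\gamma$; the geometric factor from $\theta_{min}^{[k]}$ ensures convergence but does not by itself give the constant $1-\gamma$. The paper closes this gap by a comparison device: it introduces an auxiliary automaton $G'$ identical to the navigation automaton except that the goal characteristic is set to zero, observes that after optimization every state of $G'$ has measure in $[-1,0]$, and then argues that the \emph{worst case} for $q_i$ is when all its neighbors are blocked, in which case every controllable move is disabled (self-looped) and only the uncontrollable mass $1-\gamma$ leaves $q_i$, giving $\nu^\star(q_i') \geq -(1-\gamma)$. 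That worst-case bound then dominates $\nu_\#(L^-(q_i))$ in the actual automaton. You should replace your heuristic sentence about ``per-step uncontrollable probability'' with this auxiliary-automaton / worst-case-neighborhood argument; everything else in your plan goes through.
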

\begin{figure}[!ht]
 \centering
\psfrag{G}[c][c]{\sffamily \bf \small $q_\Gl$}
\psfrag{q}[c][c]{\sffamily \bf \small $q_i$}
\psfrag{p1}[c][t\trianglerighteq]{\sffamily \bf \small $1-\theta_{min}$}
\psfrag{p2}[c][c]{\sffamily \bf \small $ (1-\gamma)(1-\theta_{min})$}
\psfrag{p3}[c][r]{\sffamily \bf \small $(1-\theta_{min})\frac{\gamma}{\Crd(\Sigma_C)}$}
\psfrag{p4}[c][c]{\sffamily \bf \small }
\psfrag{a}[t][c]{\sffamily \bf \small $q_j$}
\psfrag{b}[t][c]{\sffamily \bf \small $q_k$}
\psfrag{O}[c][b]{\sffamily \bf \small $q_\circleddash$ }
\includegraphics[width=2.25in]{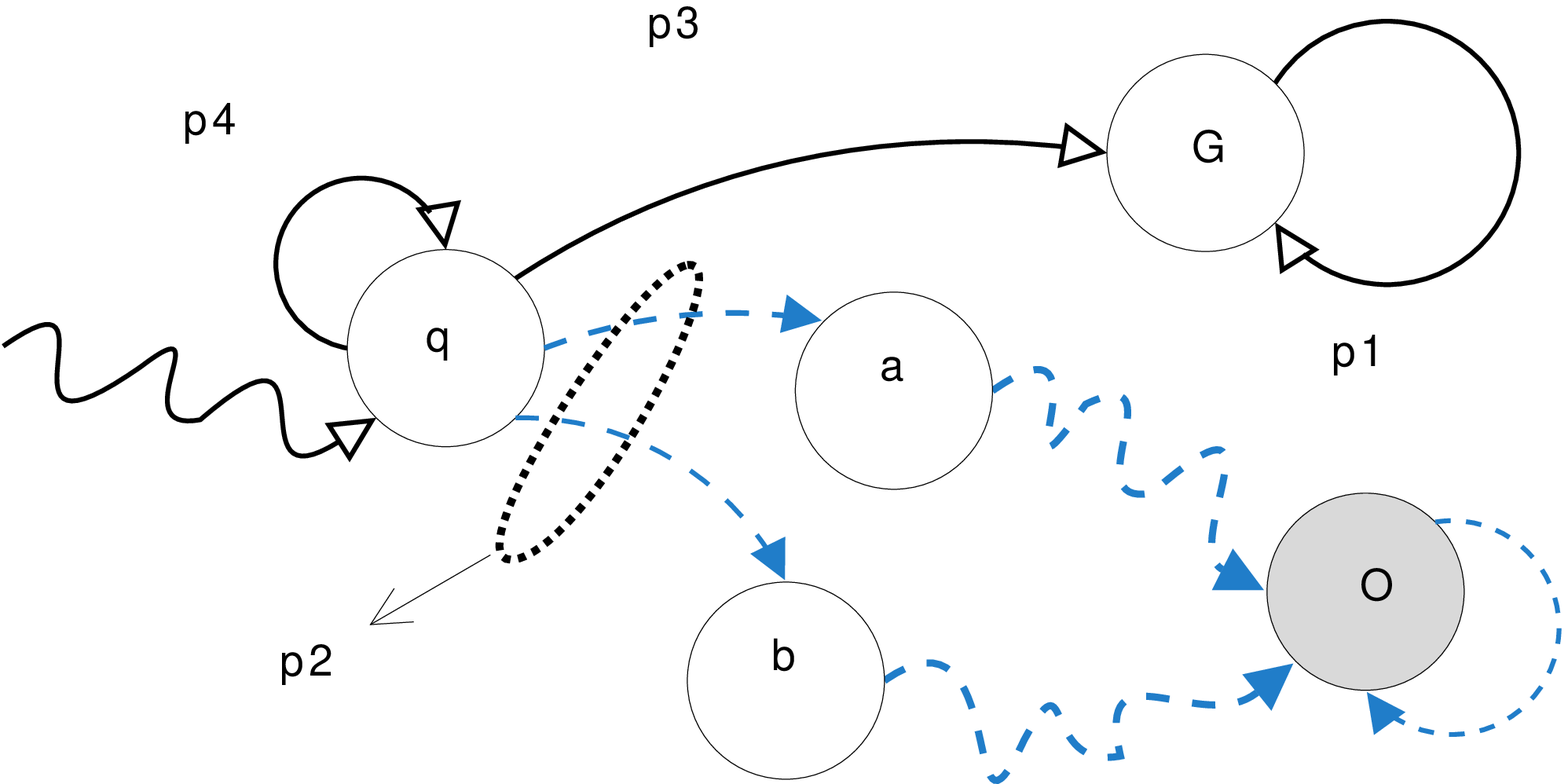}
\caption{Illustration for Proposition~\ref{proprecurs}. Uncontrollable events and strings are shown in dashed.}
\label{figrecur}
\end{figure}
\begin{proof}
\textbf{Statement 1:}\\
We first consider the first recursion step, $i.e.$, the case where $k=0$ and $Q_k = \{q_\Gl\}$ (See Algorithm~\ref{figflow}). 
We note that the goal $q_\Gl$ achieves the maximum measure on account of the fact that only $q_\Gl$ has a positive 
characteristic weight, $i.e.$,  we have
\begin{gather}
 \forall q_i \in Q, \ \boldsymbol{\nu_\#}^{[1]}\vert_\Gl \geqq \boldsymbol{\nu_\#}^{[1]}\vert_i
\end{gather}
It follows that all controllable transitions from the goal will be disabled in the 
optimized navigation automaton obtained at the end of the first recursion step (See Definition~\ref{contapp} and Algorithms~\ref{Algorithm02} \& \ref{Algorithm01}), 
which in turn implies that the non-renormalized measure of the goal (at the end of the first recursion step)
is given by $\chi_\Gl \frac{1}{\theta_{min}}$.

The Hahn Decomposition Theorem~\cite{R88}, allows us to write:
\begin{gather}
 \boldsymbol{\nu_\#}\vert_i = \nu_\# (L^+(q_i))+\nu_\# (L^-(q_i)) 
\end{gather}
 where $L^+(q_i),L^-(q_i)$ are the sets of strings initiating from state $q_i$ that have positive and negative measures respectively. 

Let $q_i \in Q\setminus \{q_\Gl\}$ such that $q_\Gl$
is reachable from $q_i$ in one hop.
We note that since it is possible to reach the goal in one hop from $q_i$, we have:
\begin{gather}\label{eqpos}
 \nu_\#(L^+(q_i)) \geqq \theta_{min} \times \frac{\gamma(1-\theta_{min})}{\Crd(\Sigma_C)} \times \frac{\chi_\Gl}{\theta_{min}} 
\end{gather}
where the first term arises due to renormalization (See Definition~\ref{defrenormmeas}), 
the second term denotes the probability of the transition leading to the goal and  the third term is the non-renormalized measure of the goal itself (as argued above). 
Since it is obvious that the goal achieves the maximum measure, the transition to the goal will obviously be enabled in the optimized automaton, which
justifies the second term. It is clear that there are many more strings of positive measure
($e.g.$ arising due to the self loops at the state $q_i$ that correspond to the disabled controllable events that do not transition to the goal from $q_i$)
which are not considered in the above inequality (which contributes to making the left hand side even larger); 
therefore guaranteeing the correctness of the lower bound stated in Eq.\eqref{eqpos}.

Next, we compute a lower bound for $\nu_\#(L^-(q_i))$. To that effect, we consider an automaton $G'$ identical to 
the navigation automaton at hand in ever respect, but the fact that the $q_\Gl$ has zero characteristic.
We denote the state corresponding to $q_i$ in this hypothesized automaton as $q_i'$, and the set of al states in $G'$ as $Q'$.
We claim that, after a measure-theoretic optimization ($i.e.$ after applying Algorithms~\ref{Algorithm02} and \ref{Algorithm01}), the measure of $q_i'$, denoted as $\nu^\star(q_i')$, 
satisfies:
\begin{gather}\label{eqclaim}
 \nu^\star(q_i') \geqq -(1 - \gamma)
\end{gather}
To prove the claim in Eq.~\eqref{eqclaim}, we first note that denoting the renormalized measure vector for $G'$ before any optimization as 
$\boldsymbol{\nu'}$, the characteristic vector as $\chi'$ and  for any  termination probability $\theta \in (0,1)$, we have:
\begin{multline}\label{eq20}
 \vert \vert \boldsymbol{\nu'} \vert \vert_\infty = \vert \vert \theta [ \mathbb{I} - (1-\theta)\Pi ]^{-1} \chi' \vert \vert_\infty \\
\leqq \vert \vert \theta [ \mathbb{I} - (1-\theta)\Pi ]^{-1} \vert \vert_\infty \times 1 = 1
\end{multline}
which follows from the following facts:
\begin{enumerate}
\item  For all $\theta \in (0,1]$, $\theta [ \mathbb{I} - (1-\theta)\Pi ]^{-1}$ is a row-stochastic matrix  and therefore has unity infinity norm~\cite{CR06}
\item   $\vert \vert \chi' \vert \vert_\infty = 1$, since all entries of $\chi'$ are $0$ except for the state corresponding to the obstacle state in the
navigation automaton, which has a characteristic of $-1$.
\end{enumerate}
Since the only non-zero characteristic is $-1$, it follows that no state in $G'$ can have a positive measure and we conclude from Eq.~\eqref{eq20} that:
\begin{gather}
\forall  q_j' \in Q', \ \nu(q_j') \in [-1,0]
\end{gather}
Note that  $q_i'$ is not blocked itself (since we chose $q_i$ such that a feasible 1-hop path to the goal exists from $q_i$). 
Next, we subject $G'$ to the measure-theoretic optimization (See Algorithms~\ref{Algorithm02} \& \ref{Algorithm01}), which
disables all controllable transitions to the blocked states. In order to compute a lower bound on the optimized measure 
for the state $q_i'$, (denoted by $\nu^\star(q_i')$ ), we consider the worst case scenario where all neighboring states that can be reached from $q_i'$
in single hops are blocked. Denoting the set of all such neighboring states of $q_i$ by  $\mathcal{N}(q_i')$, we have:
\begin{gather}\label{eqlb}
 \nu^\star(q_i') = \sum_{q_j' \in \mathcal{N}(q_i')} \Pi^u_{ij}\nu(q_j') \geqq -1 \times \sum_{q_j' \in \mathcal{N}(q_i')} \Pi^u_{ij} = -1\times(1-\gamma)
\end{gather}
where $\Pi^u_{ij}$ is the probability of the uncontrollable transition from $q_i'$ to the neighboring state $q_j'$. 
Note that we can write Eq.~\eqref{eqlb} in the worst case scenario where each state in $\mathcal{N}(q_i')$ is blocked, since all controllable transitions from $q_i'$ will be disabled in the
optimized plant under such a scenario, and only the uncontrollable transitions will remain enabled; and the  probabilities of all uncontrollable transitions defined at state $q_i$ sums to $1-\gamma$.
It is obvious that the lower bound computed in Eq.~\eqref{eqlb}
also reflects a lower bound for $\nu_\#(L^-(q_i))$, since addition of state(s) with positive characteristic or eliminating obstacles cannot possibly 
make strings more negative. Furthermore, recalling that the goal $q_\Gl$ is actually reachable from state $q_i$ by a single hop, it follows that not all neighbors of $q_i$ in the 
navigation automaton are blocked, and hence we have the strict inequality:
\begin{gather}\label{eqneg}
 \nu_\#(L^-(q_i)) >  -(1 - \gamma)
\end{gather}
Combining Eqns. \eqref{eqpos} and \eqref{eqneg}, we note that the following condition is sufficient for guaranteeing $\boldsymbol{\nu_\#}\vert_i > 0$.
\begin{gather}
 \frac{\gamma(1-\theta_{min})}{\Crd(\Sigma_C)} \times \chi_\Gl > 1-\gamma
\end{gather}
which after a straightforward calculation yields the bound stated in Eq.~\ref{eqcond1}, and the Statement 1 is proved for the first recursion step, $i.e.$ for $Q_k = \{q_\Gl\}$.
\begin{figure}[!ht]
\centering
\psfrag{qi1}[c][c]{\sffamily \bf  $q_i$}
\psfrag{qi2}[c][c]{\sffamily \bf  $q_i$}
\psfrag{u}[c][c]{\sffamily \bf  $\sigma$}
\psfrag{s1}[c][c]{\sffamily \bf  $\sigma_1$}
\psfrag{s2}[c][c]{\sffamily \bf  $\sigma_2$}
\psfrag{s21}[c][c]{\sffamily \bf $\sigma_1,\sigma_2$}
\psfrag{Q}[c][c]{\sffamily \bf \color{OrangeRed4} $\boldsymbol{Q_k}$}
\psfrag{qj3}[c][c]{\sffamily \bf  $q_{j_1}$}
\psfrag{qj4}[c][c]{\sffamily \bf  $q_{j_2}$}
\psfrag{qj5}[c][c]{\sffamily \bf  $q_{j_3}$}
\psfrag{qj2}[c][c]{\sffamily \bf  $q_\Gl$}
 \includegraphics[width=2.4in]{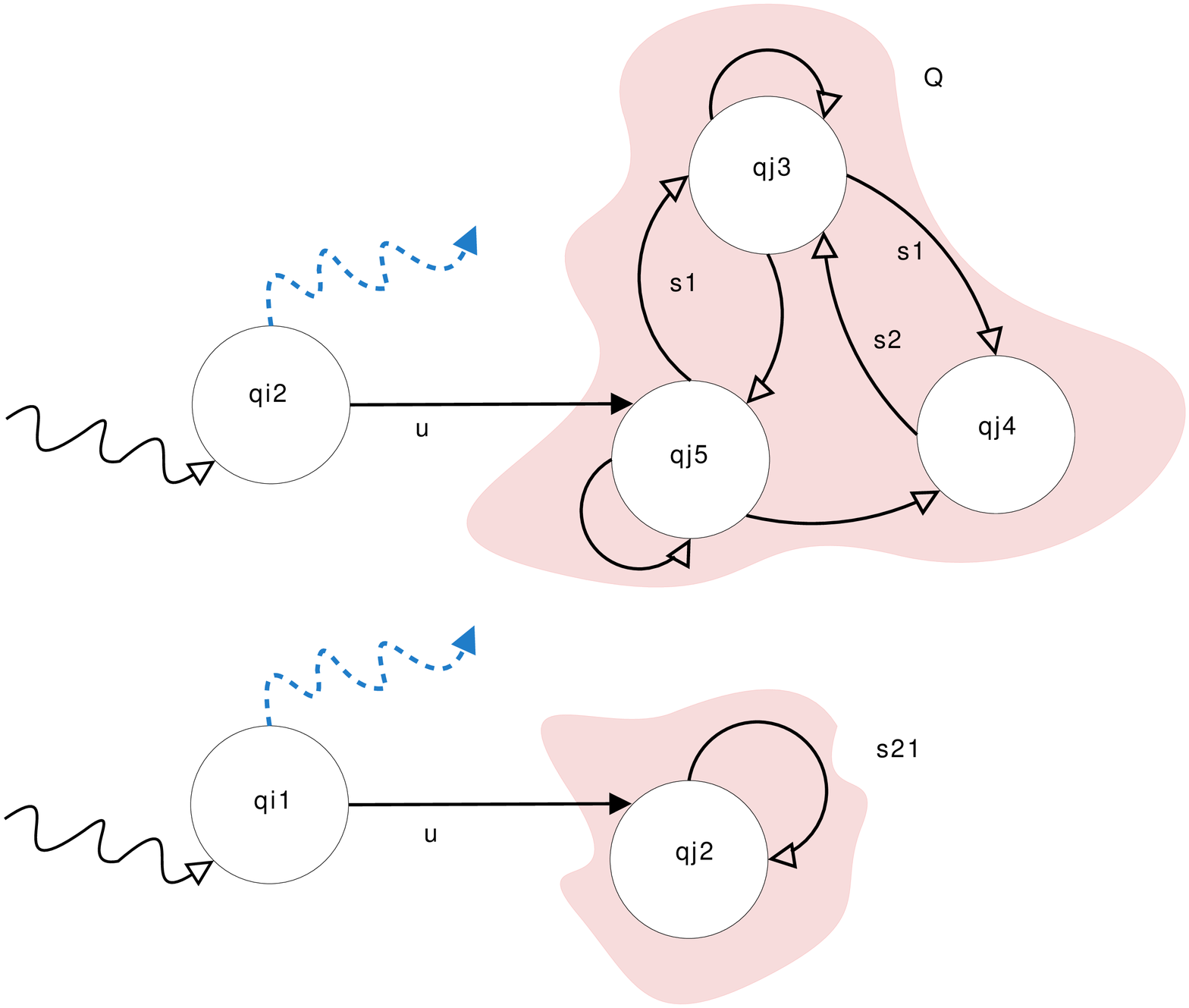}
\caption{Illustration for Statement 1 of Proposition~\ref{proprecurs}. Note that even if $Q_k$ has multiple states, $q_{j_1},q_{j_2},q_{j_3}$,
the measure of any string (say $\sigma\sigma_1\sigma_1\sigma_2$) from $q_i$ is the same as  if $q_i$ was directly connected to the goal $q_\Gl$
with all controllable events disabled at $q_\Gl$. The bottom plate illustrates this by showing the hypothetical scenario where $q_i$ is connected to $q_\Gl$ by $\sigma$ and
$\sigma_1,\sigma_2$ are controllable events disabled at $q_\Gl$. Note that for this argument to work, we must eliminate uncontrollable transitions from all states in $Q_k$.}\label{figredux}
\end{figure}

To extend the argument to later recursion steps of Algorithm~\ref{figflow}, $i.e.$, for  $k > 0$, we argue as follows. Let $Q_k \supsetneqq \{q_\Gl\}$ and we have eliminated all uncontrollable transitions from all $q_j \in Q_k$ (as required in Algorithm~\ref{figflow}).
Further, let $q_i \in Q\setminus Q_k$ such that it is possible to reach some $q_j \in Q$ in a single controllable hop, $i.e.$
\begin{gather}
 q_i \xrightarrow[controllable]{\sigma} q_j, \ q_j \in Q_k
\end{gather}
We first claim that
\begin{gather}\label{eqgrt}
\forall q_j \in Q_k, q_r \notin Q_k, \  \boldsymbol{\nu_\#}^{[k]}\vert_j >  \boldsymbol{\nu_\#}^{[k]}\vert_r
\end{gather}
which immediately follows from the fact
that 
the optimal configuration (of transitions from states in $Q_k$) at the end of the $\nustar$-optimization at the $k^{th}$ step would be to have all controllable transitions
from states $q_j \in Q_k$ enabled if and only if the transition goes to some state in $Q_k$, since in that case every string initiating from $q_j$ terminates on a state having characteristic $\chi_\Gl$ (since there is no uncontrollability from states within $Q_k$ by construction), whereas if a transition $q_j \xrightarrow[controllable]{\sigma} q_r, \ \mathrm{where} \  q_j \in Q_k, q_r \notin Q_k$ allows strings which end up in zero-characteristic states and also
(via uncontrollable transitions) on negative-characteristic states.

Eq. ~\eqref{eqgrt} implies that no enabled string exits $Q_k$.
It therefore follows that every string $\sigma\omega$ starting from the state $q_i$, with $\omega \in \Sigma_C^\star$ 
and $\delta(q_i,\sigma) \in Q_k$ ($i.e.$, $\sigma$ leads to some state within $Q_k$)
has exactly the same measure as if $q_i$ is directly connected to $q_\Gl$ 
and all controllable transitions are disabled at $q_\Gl$ (See Figure~\ref{figredux} for an illustration). This conceptual reduction implies 
that Eq.~\eqref{eqpos} is valid when $Q_k \supsetneqq \{ q_\Gl\}$ since the lower bound for $\nu_\#(L^+(q_i))$ can be computed exactly
as already done for the case with $Q_k=\{q_\Gl\}$. The  argument for obtaining the lower bound for $\nu_\#(L^-(q_i))$
 is the same as before, thus completing the proof for Statement 1 for 
all recursion steps of Algorithm~\ref{figflow} .\\
\textbf{Statement 2:}\\
Let $Q_R \subsetneqq Q$ be the set of states from which a feasible path to the goal exists.
If $\Crd(Q_R)= 1$, then we must have $Q_R=\{q_\Gl\}$ and the recursion terminates in one step.
In general, for the $k^{th}$ recursion step, let $\Crd(Q_k)< \Crd(Q_R)$. Since there exists at least one state, not in $Q_k$, 
from which a feasible path to the goal exists,
it follows that there exists at least one state $q_j$ from which it is possible to reach a state in $Q_k$ in one hop.
Using Statement 1, we can then conclude:
\begin{gather}
Q_{k+1} \neq Q_k \Rightarrow \Crd(Q_{k+1}) \geqq \Crd(Q_k) +1 \notag \\
\Rightarrow \Crd(Q_{k+1}) \geqq k+1
\end{gather}
which immediately implies that the recursion must terminate in at most $\Crd(Q)$ steps.\\
\textbf{Statement 3:}\\
Follows immediately from the argument used for proving Statement 2.
\end{proof}
\begin{rem}
 The generality of Eq.~\eqref{eqcond1} is remarkable. Note that the lower bound is  not directly dependent on the exact structure of the navigation automaton; what only matters is the
number of controllable moves available at each state, the coefficient of dynamic deviation $\gamma(\Gnm)$ and the critical termination probability 
$\theta_{min}$.  Although the exact automaton structure and the probability distribution of the uncontrollable transitions are not directly important, their  effect  
 enters, in a somewhat non-trivial fashion, through the value of the critical 
termination probability. The reader might want to review Algorithm~\ref{Algorithm01} (See also \cite{CR06,CR07}) which computes the critical termination probability in each step of the $\nustar$-optimization for a better elucidation of the aforementioned connection between the structure of the navigation automaton and $\chi_\Gl$.
\end{rem}

The dependencies of the acceptable lower bound for $\chi_\Gl$ with the coefficient of dynamic deviation $\gamma(\Gnm)$, as computed in Proposition~\ref{proprecurs}, is illustrated in Figures~\ref{figEE001}(a) and (b). The key points to be noted are:
\begin{enumerate}
 \item As $\gamma(\Gnm) \rightarrow 0^+$, $\chi_\Gl \rightarrow +\infty$; which reflects the physical fact that if no events are controllable, then we cannot optimize the
mission plan no matter how large $\chi_\Gl$ is chosen.
\item As $\gamma(\Gnm) \rightarrow 1$, $\chi_\Gl \rightarrow 0$; which implies that in the absence of dynamic effects any positive value of $\chi_\Gl$ suffices. This reiterates the
result obtained with $\chi_\Gl = 1$ in \cite{CMR08}.
\item As the number of available controllable moves increases (See Figure~\ref{figEE001}(a)), we need a larger value of $\chi_\Gl$; similarly if the
critical termination probability $\theta_{min}$ is large, then the value of $\chi_\Gl$ required is also large (See Figure~\ref{figEE001}(b)).
\item The functional relationships in Figures~\ref{figEE001}(a) and (b) establish the fact  that for relatively smaller number of controllable moves, a 
large value of $\gamma(\Gnm)$ and a small termination probability, a constant value of $\chi_\Gl=1$ may be sufficient.
\end{enumerate}

\begin{figure}[!ht]
\centering
\includegraphics[width=2.75in]{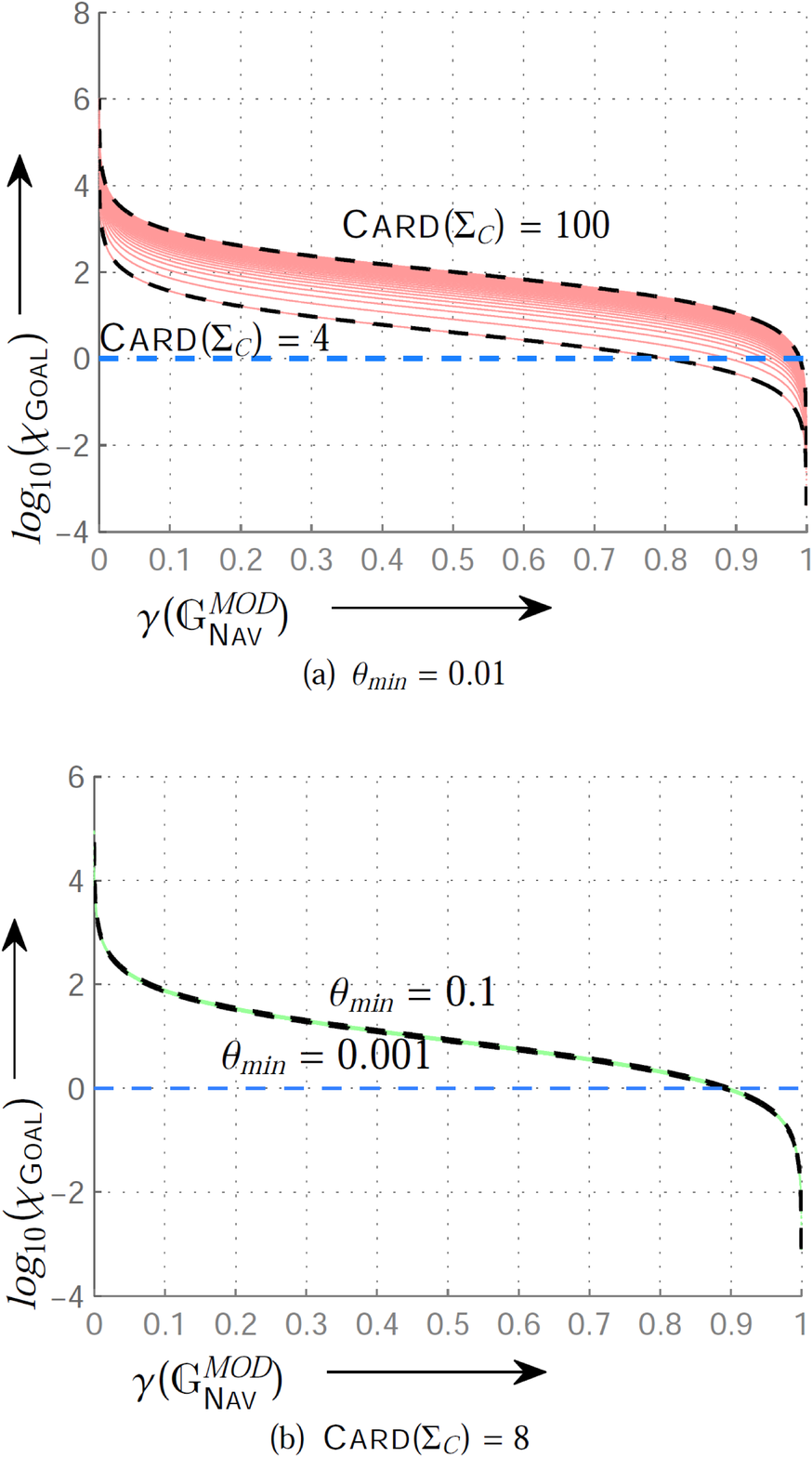}
\caption{Variation of the acceptable lower bound for $\chi_\Gl$ with $\gamma(\Gnm)$. (a) 
The set of controllable moves is expanded from $\Crd(\Sigma_C)=4$
to $\Crd(\Sigma_C)=100$ while holding $\theta_{min}=0.01$ (b) 
The critical termination probability $\theta_{min}$ is varied from $0.001$ to $0.1$ 
while holding $\Crd(\Sigma_C)=8$. Note the lines are almost coincident in this case. }\label{figEE001}
\end{figure}
\begin{algorithm}[!h]
 \small \SetLine
\linesnumbered
  \SetKwInOut{Input}{input}
  \SetKwInOut{Output}{output}
  \caption{Assembly of Plan Vectors}\label{Algorithm_assembly}
\Input{$\boldsymbol{\nu_\#}^{[k]},k=1,\cdots,K$}(Plan Vectors)
\Output{$\boldsymbol{\nu_\#}^{\mathbf{A}}$ (Assembled Plan)} \Begin{ 
Set $\boldsymbol{\nu_\#}^{\mathbf{A}}=\boldsymbol{0}$\tcc*[r]{ Zero vector}
\For{$k= 1:K$}{
\For{$i \in Q$}{
$\boldsymbol{\nu_\#}^{tmp}\vert_i = 0$\;
\eIf{$\boldsymbol{\nu_\#}^{[k-1]}\vert_i > 0$}{$\boldsymbol{\nu_\#^{tmp}} \vert_i = 1$\;}{
\If{$\boldsymbol{\nu_\#}^{[k]}\vert_i > 0$}{
$\boldsymbol{\nu_\#^{tmp}}\vert_i = \boldsymbol{\nu_\#}^{[k]}\vert_i$\;
}
}
}
$\boldsymbol{\nu_\#}^{\mathbf{A}} = \boldsymbol{\nu_\#}^{\mathbf{A}} + \boldsymbol{\nu_\#^{tmp}}$\;
}
 }
\end{algorithm}
\subsection{Plan Assembly \& Execution Approach}
The plan vectors $\boldsymbol{\nu_\#}^{[k]}$ (Say, there are $K$ of them, $i.e.$, $k \in \{1,\cdots,K\}$) obtained via the recursive planning
algorithm described above, can be used for subsequent mission execution in two rather distinct ways:
\begin{enumerate}
 \item \textit{(The Direct Approach:)}
\begin{itemize}
 \item At any point during execution, if the current state $q_i \in Q_k$ for some $k \in \{1,\cdots,K\}$, then 
use the gradient defined by the plan vector $\boldsymbol{\nu_\#}^{[k]}$ to decide on the next move, $i.e.$,
$q_j$ is an acceptable next state if $\boldsymbol{\nu_\#}^{[k]}\vert_j > \boldsymbol{\nu_\#}^{[k]}\vert_i$ 
and for states $q_\ell$ that can be reached from
the current state $q_i$ via controllable events, we have $\boldsymbol{\nu_\#}^{[k]}\vert_j \geqq \boldsymbol{\nu_\#}^{[k]}\vert_\ell$.
\item if $\forall k \in \{1,\cdots,K\}, \ q_i \notin Q_k$, then terminate operation because there is no feasible path to the goal.
\item Note that this entails keeping $K$ vectors in memory.
\end{itemize}
\item \textit{(The Assembled Plan Approach:)}
\begin{itemize}
 \item Use $\boldsymbol{\nu_\#}^{[k]}, k \in \{1,\cdots,K\}$ to obtain the assembled plan vector 
$\boldsymbol{\nu_\#}^{\mathbf{A}}$ following Algorithm~\ref{Algorithm_assembly}, which
assigns a real value $\boldsymbol{\nu_\#}^{\mathbf{A}}\vert_i
$ to each state $q_i$ in the workspace. We refer to this map as the assembled plan.
\item Make use of  the gradient defined by $\boldsymbol{\nu_\#}^{\mathbf{A}}$ to reach the goal, by sequentially moving to states with increasing  values specified by the assembled plan, $i.e.$,.
if the current state is  $q_i \in Q$, then 
$q_j$ is an acceptable next state if $\boldsymbol{\nu_\#}^{\mathbf{A}}\vert_j > \boldsymbol{\nu_\#}^{\mathbf{A}}\vert_i$  
and for states $q_\ell$ that can be reached from
the current state $q_i$ via controllable events, we have $\boldsymbol{\nu_\#}^{\mathbf{A}}\vert_j \geqq \boldsymbol{\nu_\#}^{\mathbf{A}}\vert_\ell$.
\item We show in the sequel that if $\boldsymbol{\nu_\#}^{\mathbf{A}}\vert_i < 0$, then no feasible path exists to the goal.
\end{itemize}
\end{enumerate}
%
Before we can proceed further,
we need to formally establish some key properties of the assembled plan approach. In particular, we have the following proposition:
\begin{prop}\label{propassembled}
 \begin{enumerate}
 \item For a state $q_i\in Q$, a feasible path to the goal exists from the state $q_i$, if and only if $\boldsymbol{\nu_\#}^{\mathbf{A}}\vert_i > 0$.
\item The assembled plan $\boldsymbol{\nu_\#}^{\mathbf{A}}$ is free from local maxima, $i.e.$, if there exists a $\nustar$-path 
(w.r.t. to $\boldsymbol{\nu_\#}^{\mathbf{A}}$)
from $q_i \in Q$ to $q_j \in Q$ and  a $\nustar$-path from $q_i$ to $q_\Gl$ then there exists a $\nustar$-path from $q_j$ to $q_\Gl$, $i.e.$,
\begin{gather*}
\forall q_i,q_j \in Q \bigg ( \exists \rho_1(q_i,q_\Gl) \bigwedge \exists \rho_2(q_i,q_j) \Rightarrow \exists \rho(q_j,q_\Gl) \bigg )
\end{gather*}
 \item If a feasible path to the goal exists from the state $q_i$, then the agent can reach the goal optimally
by following the gradient of $\boldsymbol{\nu_\#}^{\mathbf{A}}$, where the optimality is to be understood as maximizing the
probability of reaching the goal while simultaneously minimizing the probability of hitting an obstacle ($i.e.$ in the sense stated in Proposition~\ref{propoptinterpret}).
 \end{enumerate}

\end{prop}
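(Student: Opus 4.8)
The plan is to first extract a closed form for the assembled vector $\boldsymbol{\nu_\#}^{\mathbf{A}}$ by unwinding Algorithm~\ref{Algorithm_assembly}, and then to read all three claims off the \emph{level structure} it exposes. Recall from Proposition~\ref{proprecurs} (Statements 2--3) that the sets $Q_k$ form a nested chain that grows strictly until it stabilises, $Q_0 \subseteqq Q_1 \subseteqq \cdots \subseteqq Q_K = Q_R$, where $Q_R$ is exactly the set of states from which a feasible path to $q_\Gl$ exists. For a state $q_i$ define its \emph{entry level} $m_i$ as the least index $m$ with $q_i \in Q_m$ (and $m_i=\infty$ if $q_i \notin Q_R$). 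Tracing the three outcomes of the nested conditional in Algorithm~\ref{Algorithm_assembly} against this nesting, a state contributes $0$ at every step before it enters, its current measure at the step it enters, and the capped value $1$ at every subsequent step; hence for $q_i \in Q_R$ with $m_i \geqq 1$
\begin{gather}
\boldsymbol{\nu_\#}^{\mathbf{A}}\vert_i = (K - m_i) + \boldsymbol{\nu_\#}^{[m_i]}\vert_i, \qquad \boldsymbol{\nu_\#}^{[m_i]}\vert_i > 0,
\end{gather}
while the goal (level $0$) accumulates the maximal value $K$ and any $q_i \notin Q_R$ accumulates $\boldsymbol{\nu_\#}^{\mathbf{A}}\vert_i = 0$.

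Statement 1 is then immediate: $\boldsymbol{\nu_\#}^{\mathbf{A}}\vert_i$ is a sum of nonnegative terms that is strictly positive precisely when $q_i$ enters some $Q_m$, so $\boldsymbol{\nu_\#}^{\mathbf{A}}\vert_i > 0 \Leftrightarrow m_i < \infty \Leftrightarrow q_i \in Q_R$, which by Proposition~\ref{proprecurs} is exactly the existence of a feasible path to the goal. Since the closed form also shows $\boldsymbol{\nu_\#}^{\mathbf{A}}\vert_i \geqq 0$ everywhere, the earlier assertion that $\boldsymbol{\nu_\#}^{\mathbf{A}}\vert_i < 0$ implies no feasible path holds as well.

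For Statements 2 and 3 the engine is the fact that the closed form orders states by entry level, the goal receiving the global maximum and each deeper shell a strictly smaller potential. I would combine this with Statement 1 of Proposition~\ref{proprecurs}: every $q_i \in Q_R\setminus\{q_\Gl\}$ at level $m \geqq 1$ admits an enabled \emph{controllable} one-hop transition into a state $q_\ell$ at level $m-1$, whose assembled value is strictly larger. This exhibits a strictly ascending controllable neighbour at every non-goal feasible state, ruling out local maxima, and by induction on the level it builds a $\nustar$-path (Definition~\ref{defnustar}, now read with respect to $\boldsymbol{\nu_\#}^{\mathbf{A}}$) from any feasible $q_j$ to $q_\Gl$. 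Statement 2 then follows because a $\nustar$-path from $q_i$ to $q_j$ forces $\boldsymbol{\nu_\#}^{\mathbf{A}}\vert_j \geqq \boldsymbol{\nu_\#}^{\mathbf{A}}\vert_i > 0$, so $q_j \in Q_R$ and the ascent applies; Statement 3 follows because the move selected by the assembled gradient at a level-$m$ state coincides with the measure-maximising move of the optimally supervised shell computed in step $m$, and by Proposition~\ref{propoptinterpret} each such move maximises $\wp_\Gl$ while minimising $\wp_\circleddash$, so stitching the optimal shells together along descending levels realises the global probability-optimal policy.

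The step I expect to be the main obstacle is making the level-ordering \emph{strict and robust} --- proving that the integer offset $(K-m_i)$ genuinely dominates the fractional entry term $\boldsymbol{\nu_\#}^{[m_i]}\vert_i$, so that a deeper shell can never overtake a shallower one. This is delicate precisely because the recursion inflates the goal weight to $\chi_\Gl^{[k]} > 1$, so the newly-crossed entry measures are not a priori confined to a single unit. I would control them using the self-containment property established inside the proof of Proposition~\ref{proprecurs}, namely Eq.~\eqref{eqgrt} (no enabled string exits $Q_k$), which guarantees that already-established states are absorbed and that the assembly legitimately caps their contribution at $1$, together with a bound keeping each $\boldsymbol{\nu_\#}^{[m_i]}\vert_i$ within one unit of its level offset. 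Once that bound is in hand the intervals $(K-m,\,K-m+1]$ are pairwise disjoint across levels, and every claim above goes through cleanly.
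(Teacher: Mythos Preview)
Your approach is essentially correct and is, if anything, more transparent than the paper's. Where the paper argues shell-by-shell without ever writing down an explicit formula for $\boldsymbol{\nu_\#}^{\mathbf{A}}$, you first extract the closed form $(K-m_i)+\boldsymbol{\nu_\#}^{[m_i]}\vert_i$ and then read the three statements off the level structure it exposes. For Statements~2 and~3 the two proofs coincide almost verbatim: Statement~2 in both reduces to the one-line implication $\exists\rho_2(q_i,q_j)\Rightarrow\boldsymbol{\nu_\#}^{\mathbf{A}}\vert_j\geqq\boldsymbol{\nu_\#}^{\mathbf{A}}\vert_i>0$ followed by Statement~1, and Statement~3 in both is the shell-by-shell concatenation of optimal segments justified by Proposition~\ref{propoptinterpret}. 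The closed form buys you a cleaner picture and forces you to confront the ordering issue explicitly; the paper's more local argument lets it avoid writing the formula but at the cost of a less transparent claim about monotonicity across shells.

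Two remarks. First, your appeal to Proposition~\ref{proprecurs} for $Q_K=Q_R$ only delivers one inclusion directly: Statement~3 there gives $Q_R\subseteq Q_K$ (no feasible path from outside the stabilised set), but $Q_K\subseteq Q_R$ still needs the easy induction that $q_i\in Q_m$ implies a feasible path to $q_\Gl$ via a one-hop into $Q_{m-1}$. The paper spells this out; you should too. Second, the obstacle you flag at the end---that the inflated goal weight $\chi_\Gl^{[k]}>1$ could push $\boldsymbol{\nu_\#}^{[m_i]}\vert_i$ above $1$ and destroy the strict level separation---is genuine, and you are right to isolate it. The paper does \emph{not} resolve this cleanly either: it simply asserts $\boldsymbol{\nu_\#}^{\mathbf{A}}\vert_j\geqq 1+\boldsymbol{\nu_\#}^{\mathbf{A}}\vert_i$ for $q_j\in Q_{k-1},\,q_i\in Q_k\setminus Q_{k-1}$ ``from Algorithm~\ref{Algorithm_assembly}'', which presupposes exactly the bound you are worried about. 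Your proposed route via Eq.~\eqref{eqgrt} is the right instinct; a clean fix is to normalise the entry contribution in Algorithm~\ref{Algorithm_assembly} by $\chi_\Gl^{[k]}$ (so that capped values are $1$ and entry values lie in $(0,1]$), after which your disjoint-interval argument goes through verbatim. So your identification of the obstacle is a strength, not a gap, relative to the paper.
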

\begin{proof}
\textbf{Statement 1:} \\
 Let the plan vectors obtained by the recursive procedure stated in the previous section  be $\boldsymbol{\nu_\#}^{[k]}$ (Say, there are $K$ of them, $i.e.$, $k \in \{1,\cdots,K\}$) and further
let the current state $q_i \in Q_k$ for some $k \in \{1,\cdots,K\}$. 
We observe that on account of Proposition~\ref{propweak}, if $k=1$, then $\boldsymbol{\nu_\#}^{[k]}\vert_i > 0$ is sufficient to guarantee that there exists a $\nustar$-path $\rho(q_i,q_\Gl)$ 
w.r.t the plan vector $\boldsymbol{\nu_\#}^{[1]}$. We further note that $\boldsymbol{\nu_\#}^{[1]}\vert_i <= 0 \Rightarrow q_i \notin Q_1$ (See Eq.~\eqref{eqqk}),
 implying that $\boldsymbol{\nu_\#}^{[1]}\vert_i > 0$ is
also necessary for the existence of $\rho(q_i,q_\Gl)$.
 Extending this argument, we note that,
 for $k>1$, a $\nustar$-path $\rho(q_i,q_j)$ with $q_j \in Q_{k-1}$ exists (with respect to the
plan vector $\boldsymbol{\nu_\#}^{[k]}$) if and only if
$\boldsymbol{\nu_\#}^{[k]}\vert_i > 0$. Noting that $\boldsymbol{\nu_\#}^{[k]}\vert_i > 0 \Leftrightarrow \boldsymbol{\nu_\#}^{\mathbf{A}}\vert_i > 0$, (See Algorithm~\ref{Algorithm_assembly})
we conclude that a $\nustar$-path $\rho(q_i,q_j)$ with $q_j \in Q_{k-1}$ exists (with respect to the
plan vector $\boldsymbol{\nu_\#}^{[k]}$) if and only if $\boldsymbol{\nu_\#}^{\mathbf{A}}\vert_i > 0$.
 Also, since $q_j \in Q_{k-1} \wedge q_i \in Q_k$, it follows
from Algorithm~\ref{Algorithm_assembly}, that $\boldsymbol{\nu_\#}^{\mathbf{A}}\vert_j \geqq 1+\boldsymbol{\nu_\#}^{\mathbf{A}}\vert_i > 0$.
It follows that the same argument can be used recursively to find $\nustar$-paths $\rho(q_j,q_{\ell_1}),\cdots,\rho(q_j,q_\Gl)$ if and only if 
 $\boldsymbol{\nu_\#}^{\mathbf{A}}\vert_i > 0$.

To complete the proof, we still need to show that if there exists a feasible path from a state $q_i$ to the
goal $q_\Gl$, then there exists a $\nustar$-path $\rho(q_i,q_\Gl)$.
We argue as follows:
Let $q_i=q_{r1} \rightarrow q_{r2} \rightarrow \cdots \rightarrow q_{r_{m-1}} \rightarrow q_{r_m}=q_\Gl$ be a feasible path from the state $q_i$ to 
$q_\Gl$. Furthermore, assume if possible that
\begin{gather}\label{eqcontra}
\forall k \  \boldsymbol{\nu_\#}^{[k]}\vert_i \leqq 0 
\end{gather}
$i.e.$, there exists no $\nustar$-path from $q_i$ to $q_\Gl$ w.r.t $\boldsymbol{\nu_\#}^{\mathbf{A}}$.
We observe that since  it is possible to reach $q_\Gl$ from $q_{r_{m-1}}$  in one hop, using Proposition~\ref{proprecurs} we have:
\begin{gather}
 \boldsymbol{\nu_\#}^{[1]}\vert_{r_{m-1}} > 0 \Rightarrow q_{r_{m-1}} \in Q_1
\end{gather}
We further note:
\begin{gather}
 \boldsymbol{\nu_\#}^{[1]}\vert_{r_{m-2}} > 0 \Rightarrow q_{r_{m-2}} \in Q_1 \\
\boldsymbol{\nu_\#}^{[1]}\vert_{r_{m-2}} \leqq 0 \Rightarrow \boldsymbol{\nu_\#}^{[2]}\vert_{r_{m-2}} > 0 \Rightarrow q_{r_{m-2}} \in Q_2
\end{gather}
Hence, we conclude either $q_{r_{m-2}} \in Q_2$ or $q_{r_{m-2}} \in Q_1$. It follows by straightforward induction that either 
$q_{1} \in Q_{m-1}$ or $ q_{1} \in Q_{m-2}$, which contradicts the statement in Eq.~\eqref{eqcontra}.
Therefore, we conclude that if a feasible path to the goal exists from any state $q_i$, then a $\nustar$-path
$\rho(q_i,q_\Gl)$ (w.r.t $\boldsymbol{\nu_\#}^{\mathbf{A}}$) exists as well.
This completes the proof of Statement 1.\\
\textbf{Statement 2:}\\
Given states $q_i,q_j \in Q$, assume that we have the $\nustar$-paths $\rho_1(q_i,q_\Gl)$ and $\rho_2(q_i,q_j)$.
We observe that:
\begin{subequations}
\begin{gather}
 \exists \rho_1(q_i,q_\Gl) \Rightarrow \boldsymbol{\nu_\#}^{\mathbf{A}}\vert_i > 0 \ \mathsf{(See \ Statement \ 1)}\\
 \exists \rho_2(q_i,q_j) \Rightarrow \boldsymbol{\nu_\#}^{\mathbf{A}}\vert_j \geqq \boldsymbol{\nu_\#}^{\mathbf{A}}\vert_i \ 
\mathsf{(See \ Definition~\ref{defnustar})}\\
\Rightarrow \boldsymbol{\nu_\#}^{\mathbf{A}}\vert_j > 0 \Rightarrow \exists \rho(q_j,q_\Gl) \ \mathsf{(See \ Statement \ 1)}
\end{gather}
\end{subequations}
which proves Statement 2.\\
\textbf{Statement 3:}\\
Statements 1 and 2 guarantee that if a feasible path to the goal exists from a state $q_i \in Q$, then an agent can reach the goal
by following a $\nustar$-path (w.r.t $\boldsymbol{\nu_\#}^{\mathbf{A}}$) from $q_i$, $i.e.$, by sequentially 
moving to states which have a better measure as compared to the current state.

We further note that a $\nustar$-path $\omega$ w.r.t $\boldsymbol{\nu_\#}^{\mathbf{A}}$ from any state $q_i$ to $q_\Gl$ can be 
represented as a concatenated sequence $\omega_1\omega_2\cdots \omega_r \cdots \omega_m$ where $\omega_r$ is a $\nustar$-path from some intermediate
state $q_j \in Q_s$, for  some $s \in \{1,\cdots,K\}$, to some state $q_\ell \in Q_{s-1}$. Since the recursive 
procedure optimizes all such intermediate plans, and since the outcome ``reached goal from $q_i$'' can be visualized as the intersection of the
mutually independent outcomes ``reached $Q_s$ from $q_i \in Q_{s-1}$'', ``reached $Q_{s+1}$ from $q_j \in Q_s$'' , $\cdots$ , ``reached $q_\Gl$ from $q_\ell \in Q_1$'',
  the overall path must be optimal as well. This completes the proof.
\end{proof}

We compute the set of acceptable next states from the following definition.
\begin{defn}\label{defqnext}
 Given the current state $q_i \in Q$, $Q_{next}$ is the set of states satisfying the strict inequality:
\begin{gather}
 Q_{next} = \{q_j \in Q: \boldsymbol{\nu_\#}^{\mathbf{A}}\vert_j > \boldsymbol{\nu_\#}^{\mathbf{A}}\vert_i \}
\end{gather}
\end{defn}
We note that Proposition~\ref{propassembled} implies that
$Q_{next}$ is empty if and only if the current state is the goal or if no feasible path to the goal exists from the current state.
%
\section{Computation of Amortized Uncertainty Parameters}\label{secuncertain}
\begin{figure}[!ht]
\psfrag{U}[cc]{\bf \footnotesize \color{DeepSkyBlue4} \txt{Actuation \& Localization \\ Uncertainty}}
\psfrag{G}[ct]{\bf \scriptsize \DGreen \txt{Grid Decomposition}}
\psfrag{R}[cc]{\bf \scriptsize \color{white} R}
\psfrag{T}[lt]{\bf \footnotesize \Mblue Trajectory}
\psfrag{q1}[cc]{\bf \footnotesize \color{Green4} $\phantom{|}q_1$}
\psfrag{q2}[cc]{\bf \footnotesize \color{Green4} $\phantom{|}q_2$}
\psfrag{q3}[cc]{\bf \footnotesize \color{Green4} $\phantom{|}q_3$}
\psfrag{q4}[cc]{\bf \footnotesize \color{Green4} $\phantom{|}q_4$}
\psfrag{q5}[cc]{\bf \footnotesize \color{Green4} $\phantom{|}q_5$}
\psfrag{q6}[cc]{\bf \footnotesize \color{Green4} $\phantom{|}q_6$}
\psfrag{q7}[cc]{\bf \footnotesize \color{Green4} $\phantom{|}q_7$}
\psfrag{q8}[cc]{\bf \footnotesize \color{Green4} $\phantom{|}q_8$}
\psfrag{Q}[cc]{\bf \footnotesize \color{Green4} $\phantom{|}Q$}
\subfigure[]{ \includegraphics[width=3.5in]{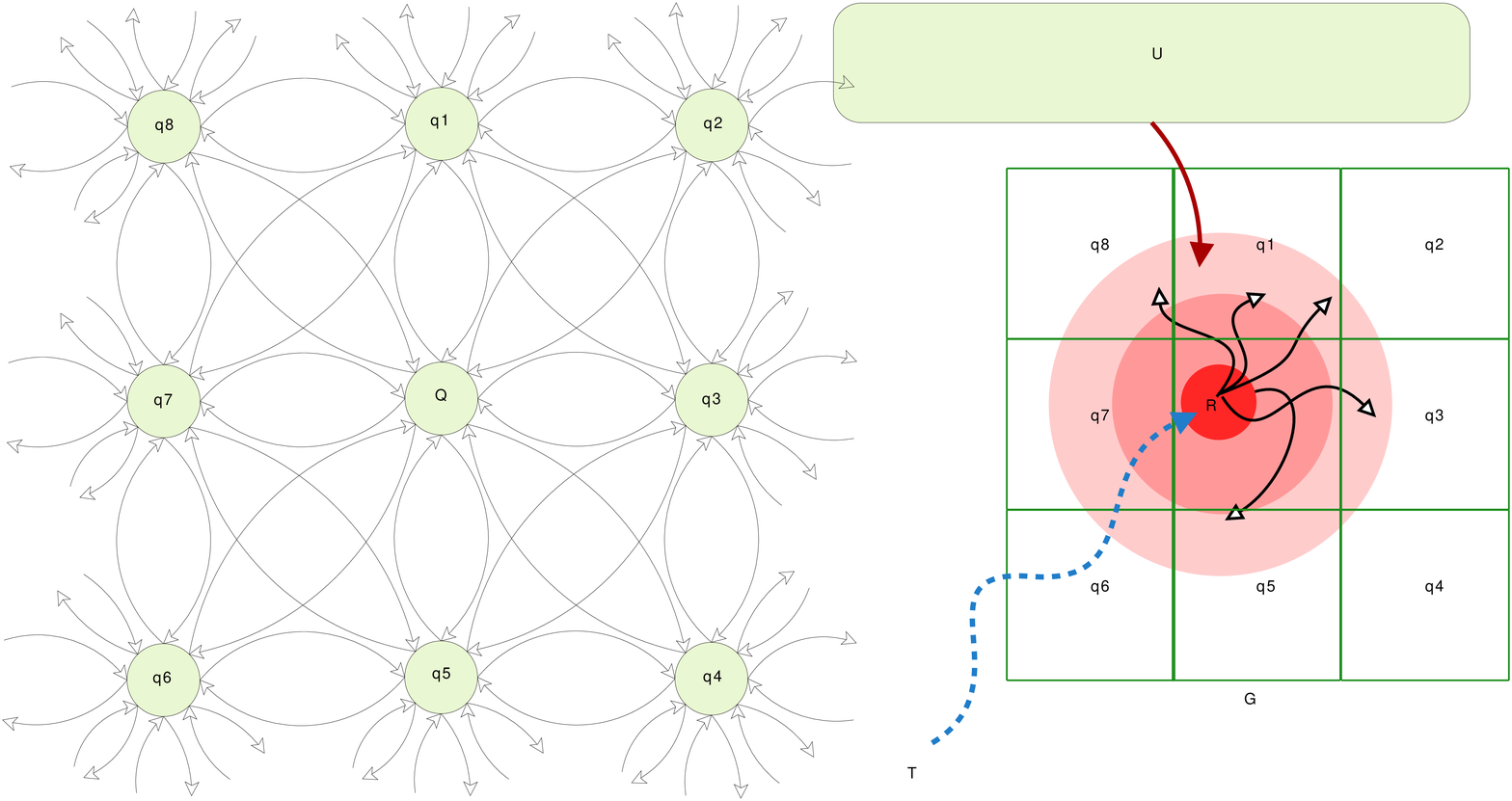}}\\
\centering
\psfrag{X}[rb]{\footnotesize \bf  X}
\psfrag{Y}[rb]{\footnotesize \bf  Y}
\psfrag{A}[lt]{\footnotesize \bf  A}
\psfrag{B}[rb]{\footnotesize \bf  B}
\psfrag{C}[rb]{\footnotesize \bf  C}
\psfrag{D}[lt]{\footnotesize \bf  D}
\psfrag{j0}{\small $J_0$}
\psfrag{j1}{\small $J_1$}
\psfrag{j2}{\small $J_2$}
\psfrag{j3}{\small $J_3$}
\psfrag{j4}[ct]{\small $J_4$}
\psfrag{j5}[ct]{\small $J_5$}
\psfrag{j6}[ct]{\small $J_6$}
\psfrag{j7}{\small $J_7$}
\psfrag{j8}{\small $J_8$}
\psfrag{p}[cb]{\small \txt{$\phantom{X}$Discretization}}
\psfrag{h}[cb]{\small \txt{Deviation\\Contour$(\DST)\rightarrow$}}
\subfigure[]{\includegraphics[width=3in]{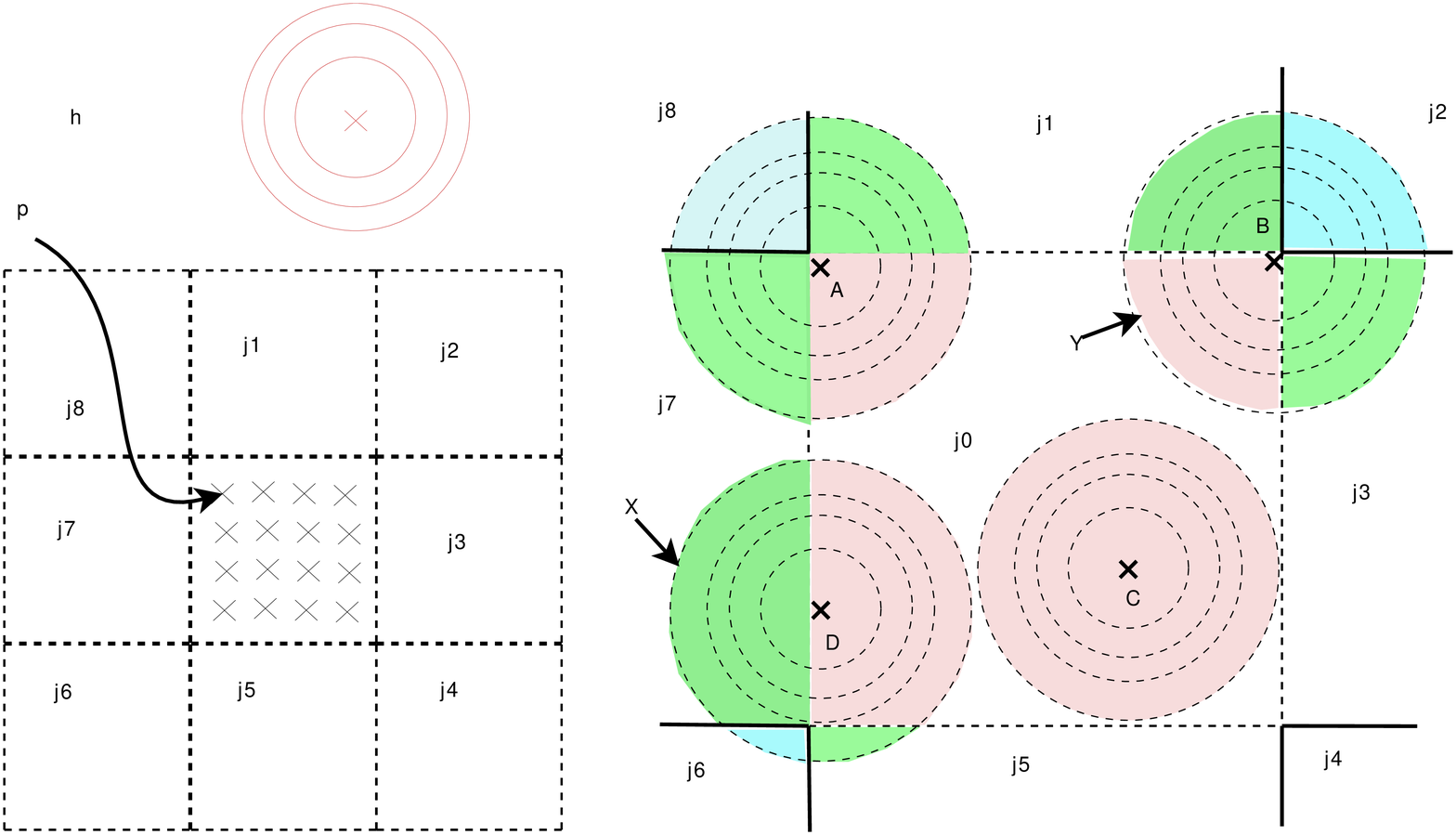}}
\caption{(a) Model for 2D circular robot(b) Numerical integration technique for computing the dynamic parameters for the case of a circular robot model$e.g.$ a SEGWAY RMP 200}\label{fignumerical}
\end{figure}
Specific numerical values of the uncertainty parameters, $i.e.$ the probability of uncontrollable transitions  in the navigation automaton can be computed
from a knowledge of the average uncertainty in the robot localization and actuation in the configuration space.
For simplicity of exposition, we assume a 2D circular robot; however the proposed techniques are applicable 
to more general scenarios. 
The complexity of this identification is related to the dynamic model assumed for the platform ($e.g.$ circular robot in a 2D space, rectangular robot with explicit heading in the configuration etc.), the simplifying assumptions made for the possible errors, and the \textit{degree of averaging} that we are willing to make. Uncertainties arise from two 
key sources:
\begin{enumerate}
 \item Actuation errors: Inability of the robot to execute planned motion primitives exactly, primarily due to the dynamic response of the physical platform.
\item Localization errors: Estimation errors arising from sensor noise, and the limited time available for post-processing exteroceptive data for a moving platform.
Even if we assume that the platform is capable of processing sensor data to eventually localize perfectly for a static robot, the fact that we have to get the estimates while the robot pose is changing in real time, implies that the estimates lag the actual robot configuration. Thus, this effect cannot be neglected even for the best case scenario of a 2D robot with an accurate global positioning system (unless the platform speed is  sufficiently small).
\end{enumerate}
In our approach, we do not distinguish between the different sources of uncertainty, and attempt to represent the overall amortized effect as uncontrollability in the 
navigation automaton. The rationale for this approach is straightforward: we visualize actuation errors as the uncontrollable execution of transitions before the controllable 
planned move can be executed, and for localization errors, we assume that any controllable planned move is followed by an uncontrollable transition to the actual configuration.
Smaller is the probability of the uncontrollable transitions in the navigation automaton, $i.e.$, larger is the coefficient of dynamic deviation for each state, smaller is 
the uncertainty in navigation. From a history of  observed dynamics or from prior knowledge, one can compute  the distribution of the robot pose around the estimated configuration
(in an amortized sense). Then the probability of uncontrollable transitions can be estimated by computing the probabilities with which the robot 
must move to the neighboring cells to approximate  this distribution. The situation for a 2D circular robot is illustrated in Figure~\ref{fignumerical}(a),
%
%
where we assume that averaging over the observations lead to 
a distribution with zero mean-error; $i.e.$, the distribution is centered around the estimated location in the configuration space. For more complex scenarios (as we show in the simulated examples), this assumption can be easily relaxed. We call this distribution the \textit{deviation contour} ($\DST$) in the sequel. The amortization or averaging is involved purely in estimating
the deviation contour from observed dynamics (or from prior knowledge); a simple methodology for which will be presented in the sequel. However, we first formalize the computation of the uncertainty parameters from a knowledge of the deviation contour.

For that purpose, we consider the current state in the navigation automaton to be $q_i$. Recall that $q_i$ maps to a set of possible configurations in the 
workspace. For a 2D circular robot, $q_i$ corresponds to a set of $x-y$ coordinates that the robot can occupy, while for a rectangular robot, $q_i$ maps to a set of $(x,y,\theta)$ coordinates.
The footprint of the navigation automaton states in the configuration space can be specified via the map  $\xi : Q \rightarrow 2^\mathcal{C}$, where $\mathcal{C}$ is the configuration space of the robot.
In general, for a given current state $q_i$, we can identify the set $\mathcal{N}(q_i)\subset Q$ of neighboring states that the robot can transition to in one move. The current state $q_i$ is also
included in $\mathcal{N}(q_i)$ for notational simplicity.
In case of the 2D circular robot model considered in this paper, 
the cardinality of $\mathcal{N}(q_i)$ is 8 (provided of course that $q_i$ is not blocked and is not a boundary state). 
For a position $s \in \xi(q_i)$ of the robot, we denote a neighborhood of radius $r$  of the position $s$ in the configuration space as $\mathcal{B}_{s,r}$. 
The normalized ``volume'' intersections of $\mathcal{B}_{s,r}$ with the footprints of the states included in $\mathcal{N}(q_i)$ 
in the configuration space can be expressed as :
\begin{gather}
 F_j(s,r) = \frac{\int_{A_j}\mathrm{d}x}{\displaystyle\int_{\displaystyle\cup_{q_j} A_j}\mathrm{d}x}, \forall q_j \in \mathcal{N}(q_i)
\end{gather}
where $A_j = \displaystyle\mathcal{B}_{s,r} \bigcap \xi(q_j)$ and $\mathrm{d}x$ is the appropriate Lebesgue measure for the continuous configuration space.

We observe that the expected or the average  
probability of the  robot deviating to a neighboring state $q_j \in \mathcal{N}(q_i)$ from a location $s \in \xi(q_i)$ is given by:
\begin{gather}
 \int_0^{\infty}F_j(s,r)\DST \mathrm{d}r
\end{gather}
Hence, the probability $\Pi^{uc}_{ij}$ of uncontrollably transitioning to a neighboring state $q_j$ from the current state $q_i$ 
is obtained by 
considering the integrated effect of all possible positions of the robot within $\xi(q_i)$, $i.e.$ we have:
\begin{gather}
\Pi^{uc}_{ij} = \frac{\displaystyle\int_{\xi(q_i)}\int_0^{\infty}F_j(s,r)\DST \mathrm{d}r\mathrm{d}s}{\displaystyle\sum_{q_j \in \mathcal{N}(q_i)}\int_{\xi(q_i)}\int_0^{\infty}F_j(s,r)\DST \mathrm{d}r\mathrm{d}s}
\end{gather}
where $\mathrm{d}r,\mathrm{d}s$ are appropriate Lebesgue measures on the continuous configuration space of the robot.
It is important to note that the above formulation is completely general and makes no assumption on the structure of the configuration space, $e.g.$, the 
calculations can be carried out for 2D circular robots, rectangular robots or platforms with more complex kinematic constraints equally well.
Figures~\ref{fighistex1}(a)-(c) illustrate the computation for a circular robot with eight controllable moves, $.e.g.$, the situation for a SEGWAY RMP.
The 2D circular case is however the simplest, where any state that can be reached by an uncontrollable transition, can also be reached by a controllable move.
For more complex scenarios, this may not be the case. For example, in the rectangular model, with constraints on minimum turn radius, the robot may not be 
able to move via a controllable transition from $(x,y,h_1)$ to $(x,y,h_2)$, where $h_i, i=1,2$ is the heading in the initial and final configurations. However, there most likely will be 
an uncontrollable transition that causes this change, reflecting uncertainty in the heading estimation (See Figure~\ref{figruc}). Also, one can 
reduce the averaging effect by considering more complex navigation automata. For example, for a 2D circular robot, the configuration state can be 
defined to be $(x_{previous},y_{previous},x_{current},y_{current})$, $i.e.$ essentially considering a 4D problem. The identification of the 
uncertainty parameters on such a model will capture the differences in the uncontrollable transition probabilities arising from arriving at a given state from different directions. 
While the 2D model averages out the differences, the 4D model will make it explicit in the specification of the navigation automaton (See Figure~\ref{fignoncirc}). In the sequel, we will 
present comparative simulation results for these models. Note, in absence of uncertainty, the 4D implementation is superfluous; $(x_{previous},y_{previous},x_{current},y_{current})$ has no more 
information than $(x_{current},y_{current})$ in that case.
\begin{figure}[!ht]
\centering
\psfrag{H}[rc]{$15^\circ$}
\psfrag{A}[cr]{\color{Red4}\txt{Available \\ Controllable $\boldsymbol{\rightarrow}$\\ Transitions$\phantom{XX}$}}
\psfrag{C}[cl]{\txt{Controllable \\ Move}}
\psfrag{U}[cl]{\txt{Uncontrollable \\ Move}}
 \includegraphics[width=1.5in]{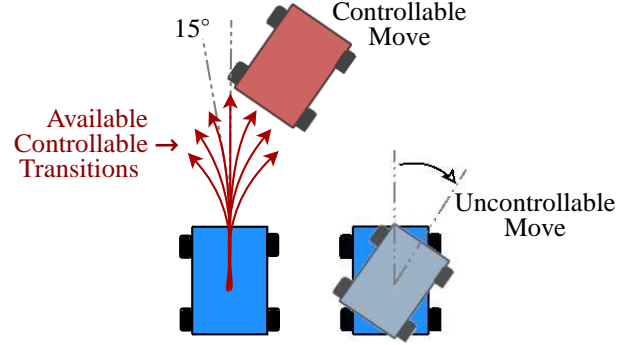}
\caption{A rectangular robot unable to execute zero-radius turns. There exists uncontrollable transitions that alter heading in place, which reflects 
uncertainties in heading estimation, although there are no controllable moves that can achieve this transition}\label{figruc}
\end{figure}

Next, we present a methodology for computing
the relevant uncertainty parameters as a function of the robot dynamics.
We assume a modular plan execution framework, in which the 
low-level continuous controller on-board the 
robotic platform is
sequentially given a target cell (neighboring to the current cell)
 to go to, as it executes the plan. The robot may be able to reach the
cell and subsequently receives the next target, or may end up in a  different cell due to dynamic
constraints, when it receives the next target from this deviated cell as dictated by the computed plan.
The inherent dynamical response of the particular robot determines how well
the patform is able to stick to the plan. We formulate a framework to compute 
the probabilities  of uncontrollable transitions that best describe these deviations.
\begin{figure}[!h]
\centering
\vspace{25pt}
\centering
\psfrag{J0}{\small $J_0$}
\psfrag{J1}{\small $J_1$}
\psfrag{J2}{\small $J_2$}
\psfrag{J3}{\small $J_3$}
\psfrag{J4}{\small $J_4$}
\psfrag{J5}{\small $J_5$}
\psfrag{J6}{\small $J_6$}
\psfrag{J7}{\small $J_7$}
\psfrag{J8}{\small $J_8$}
\psfrag{U}[bl]{\BRed $\boldsymbol{\sigma_u}$}
\psfrag{A}[rc]{\red  \textbf{A}}
\psfrag{B}[rc]{\red   \textbf{B}}
\psfrag{C}[cr]{\red   \textbf{C}}
\psfrag{D}[tc]{\red   \textbf{D}}
\psfrag{D0}{\red \small $\Delta_R(t_0)$}
\psfrag{Dt}{\red \small $\Delta_R(t)$}
\psfrag{T}[cc]{\Mblue \small \txt{Robot \\ Trajectory}}
\psfrag{T1}[tl]{\Mblue \small Target Cell}
\psfrag{T2}[cr]{\BRed \small \txt{Current Location $\phantom{.}$\\ (time = $t_0$)}}
\subfigure[]{ \includegraphics[width=2in]{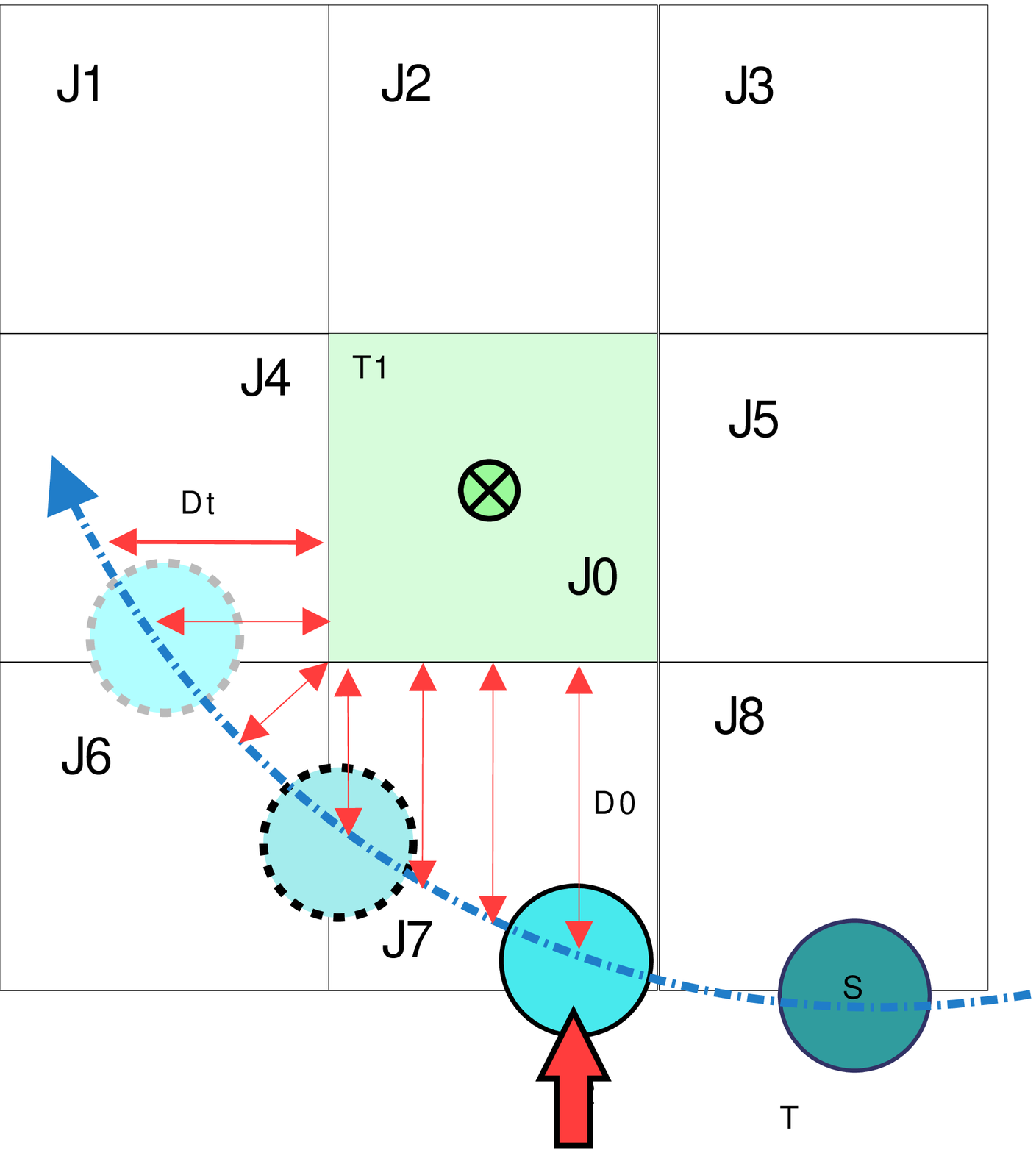}}
\subfigure[]{ \includegraphics[width=2in]{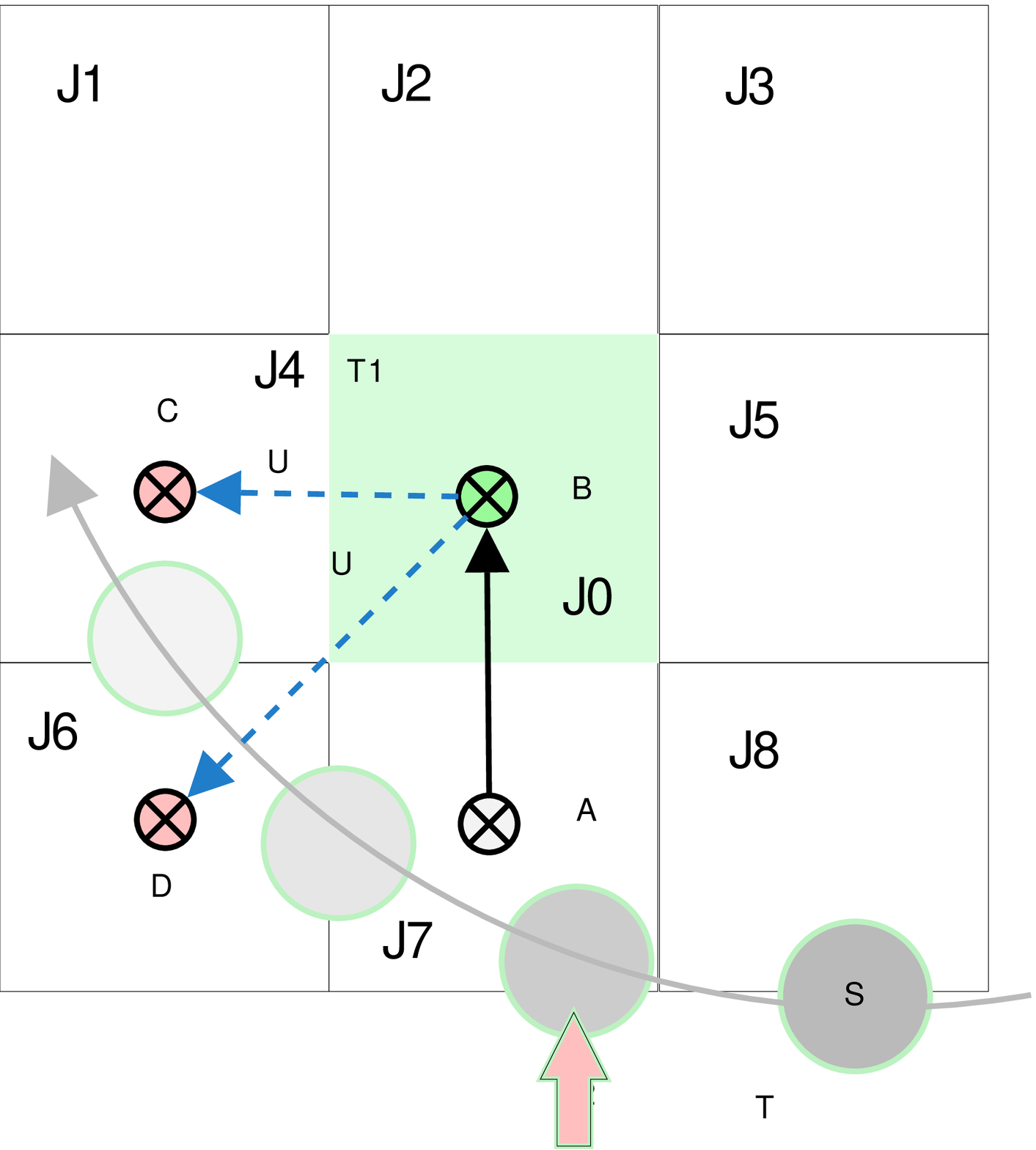}}
%
\caption{Illustration for computation of amortized dynamic uncertainty parameters}\label{figdev1}
\end{figure}
%
%
\begin{defn}
 The raw deviation $\Delta_R(t)$ as a function of the operation time $t$ is defined as 
follows:
\begin{gather}
 \Delta_R(t) = \Theta(  p(t) , \zeta(t) )
\end{gather}
where $p(t)$ is the current location of the robot in the workspace coordinates, 
$\zeta(t)$ is the location of the point within the current target cell which is nearest to the
robot position $p(t)$ (See Figure~\ref{figdev1}), and $\Theta(\cdot,\cdot)$ is an appropriate 
distance metric in the configuration space.
\end{defn}
The robot will obviously take some time to reach the target cell, assuming it is actually able to do so.
We wish to eliminate the effect of this delay from our calculations, since a platform that
is able to sequentially reach each target cell, albeit with some delay, does not need the 
plan to be modified. Furthermore, unless velocity states are incorporated in the navigation automata, 
the plan cannot be improved for reducing this delay. We note that the raw deviation $\Delta_R(t)$ incorporates the effect of this possibly 
variable delay and needs to be corrected for. We do so by introducing the delay corrected deviation $\Delta(t)$ as follows:
%
\begin{defn}
 The delayed deviation $\Delta_d(t,\eta)$ is defined as:
\begin{gather}
 \Delta_d(t,\eta) = \Theta(  p(t+\eta(t)) , \zeta(t) )
\end{gather}
where $\eta(t)$ is some delay function satisfying $\forall \tau \in \mathbb{R}, \eta(\tau) \geq 0$.
\end{defn}

\begin{defn}\label{defcdel}
 The delay corrected deviation $\Delta(t)$ as a function of the operation time $t$ is defined as:
\begin{gather}
 \forall t \in \mathbb{R} , \ \Delta(t) = \mathop{arginf}_{\displaystyle \Delta_d(t,\eta) : \forall \tau \in \mathbb{R}, \eta(\tau) \geq 0} \big \vert  \big \vert \Delta_d(t,\eta) \big \vert  \big \vert
\end{gather}
\end{defn}
Note that Definition~\ref{defcdel} incorporates the possibility that the delay may vary in the course of mission execution.
\textit{We will make the assumption that although the delay may vary, it does so slowly enough to be approximated as a constant function over relatively short intervals.}

If we further assume that we can make observations only at discrete intervals, we can approximately compute $\Delta(t)$ over a short interval $I = [t_{init},t_{final}]$ as follows:
\begin{gather}
\forall t \in I, \Delta(t) = \mathop{argmin}_{ \Theta(  p(t+\eta) , \zeta(t) ) :\eta \in \mathbb{N}\cup \{0\}} \big \vert  \big \vert\Theta(  p(t+\eta) , \zeta(t) )\big \vert  \big \vert
\end{gather}
Furthermore, the approximately constant average delay $\eta^\star$ over the interval $I$ can be expressed as:
\begin{gather}
 \eta^\star = \mathop{argmin}_{\eta \in \mathbb{N}\cup \{0\}}\big \vert  \big \vert\Theta(  p(t+\eta) , \zeta(t) )\big \vert  \big \vert
\end{gather}
Since the delay may  vary slowly, the computed value of $\eta^\star$ may vary from one observation interval to another. For each interval $I_k \in \{I_1,\cdots,I_M\}$, one can obtain
the approximate probability distribution of the delay corrected deviation $\Delta(t)$, which is denoted as $\DST^{[k]}$.
Therefore, from a computational point of view, $\DST^{[k]}$ is just a histogram constructed from the $\Delta(t)$ values for the interval $I_k$ (for a set of appropriately chosen histogram bins or intervals).
For a sufficiently large number of observation intervals $\{I_1,\cdots,I_M\}$, one can capture the deviation dynamics of the
robotic platform by computing the expected distribution of $\Delta(t)$, $i.e.$ computing $\DST$, which can be estimated simply by:
\begin{gather}
\DST = \frac{1}{M} \sum_{k=1}^{M} \DST^{[k]}
\end{gather}
Once the distribution for the delay corrected deviation has been computed, we can proceed to estimate the probabilities of the uncontrollable transitions, as described before.
Determination of the uncertainty parameters in the navigation model then allows us to use the proposed optimization to compute 
optimal plans which the robot can execute. We summarize the sequential steps in the next section.
\section{Summarizing $\nustar$ Planning \& Subsequent Execution}\label{secsummalgo}
The complete approach is summarized in Algorithm~\ref{AlgorithmH}.
The planning and plan assembly steps (Lines 2 \& 3) are to be done either offline
or from time to time when deemed  necessary in the course of mission execution. Replanning may be necessary
if the dynamic model changes either due to change in the environment or due to variation in the 
operational parameters of the robot itself, $e.g.$, unforeseen faults that alter or restrict the kinematic or dynamic degrees of freedom.
Onwards from Line 4 in Algorithm~\ref{AlgorithmH} is the set of instructions needed for mission execution.
Line 5 computes the set of states to which the robot can possibly move from the current state. 
%
\begin{algorithm}[!hb]
 \footnotesize \SetLine
\linesnumbered
  \SetKwInOut{Input}{input}
  \SetKwInOut{Output}{output}
  \SetKw{Tr}{true}
   \SetKw{Tf}{false}
  \caption{Summarized Planning \& Mission Execution}\label{AlgorithmH}
\Input{Model $\Gnm$}
\Begin{ 
\tcc*[h]{{\red \bf $\longleftarrow$ Planning \& Plan Assembly $\longrightarrow$ }}\;
Compute decomposed plans $\boldsymbol{\nu_\#}^{[k]}$ \tcc*[l]{{\color{DeepSkyBlue4} Algorithm~\ref{figflow}} }
Compute assembled plan $\boldsymbol{\nu_\#}^{\mathbf{A}}$ \tcc*[l]{ {\color{DeepSkyBlue4} Algorithm~\ref{Algorithm_assembly}} }
\tcc*[h]{{\bf \color{red} $\longleftarrow$ Mission Execution $\longrightarrow$} }\;
\While{\Tr }{
Find set of neighbors: $N(q_i) = \big \{q \in Q: \exists \sigma \in \Sigma_C \ \mathrm{s.t.} \  q_i \xrightarrow{\sigma} q \big  \}$\;
Compute $Q_{next} = \big \{ q_j \in N(q_i): \forall q_k \in N(q_i), \  \boldsymbol{\nu_\#}^{\mathbf{A}}\vert_j > \boldsymbol{\nu_\#}^{\mathbf{A}}\vert_k \big \}$\;
{\color{Green4} Choose one state $q_{next}$ from set $Q_{next}$}\;
Attempt to move to $q_{next}$\tcc*[l]{{\color{DeepSkyBlue4} May be unsuccessful }}
Read current state $q_i$\tcc*[l]{ {\color{DeepSkyBlue4}Possibly $q_i\neq q_{next}$ }}
\eIf{$q_i == q_\Gl$}
{
Mission Successful \textbf{\color{Green4} Terminate Loop}\;
}
{
\If{$\boldsymbol{\nu_\#}^{\mathbf{A}}\vert_i \leqq 0$}
{
Mission Failed \textbf{\red Terminate Loop}\;
}
}
}
 }
\end{algorithm}
\begin{figure*}[!ht]
\centering
\psfrag{gamma = 0.75}[lc]{$\gamma = 0.75$}
\psfrag{5}[cc]{\small $5$}
\psfrag{10}[cc]{\small $10$}
\psfrag{15}[cc]{\small $15$}
\psfrag{20}[cc]{\small $20$}
\psfrag{25}[cc]{\small $25$}
\psfrag{30}[cc]{\small $30$}
\psfrag{35}[cc]{\small $35$}
\psfrag{40}[cc]{\small $40$}
\psfrag{45}[cc]{\small $45$}
\psfrag{50}[cc]{\small $50$}
\subfigure[$\beta=0$]{ \includegraphics[width=1.7in]{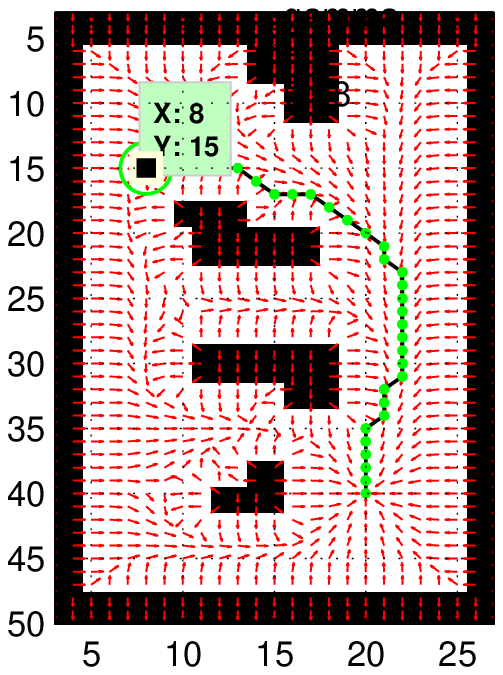}}
\subfigure[$\beta=0.35$]{ \includegraphics[width=1.7in]{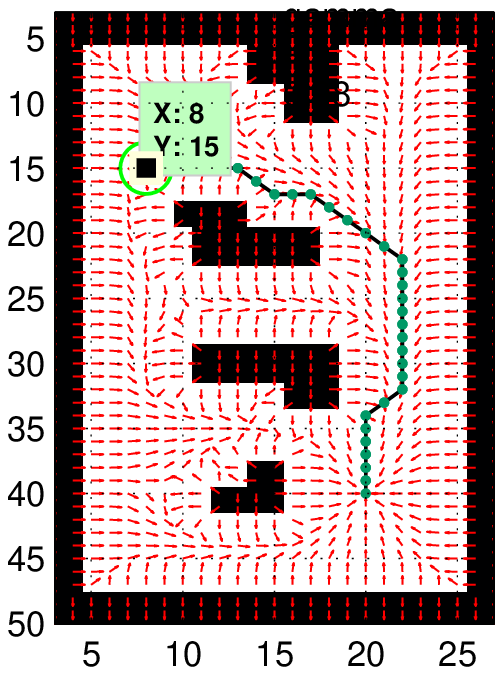}}
\subfigure[$\beta=0.75$]{ \includegraphics[width=1.7in]{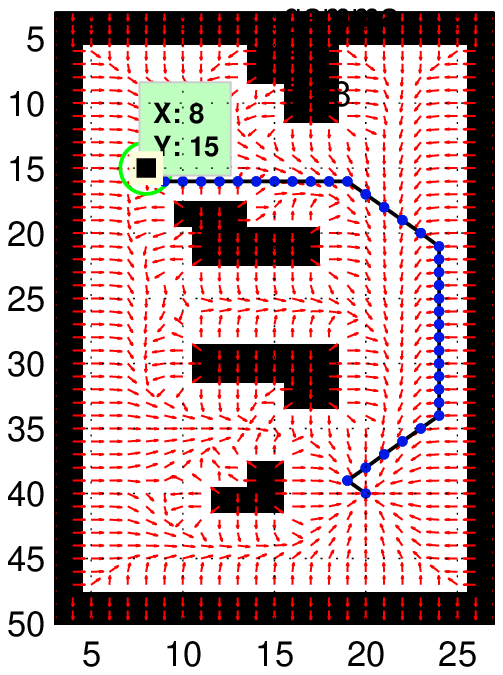}}
\subfigure[$\beta=1$]{ \includegraphics[width=1.7in]{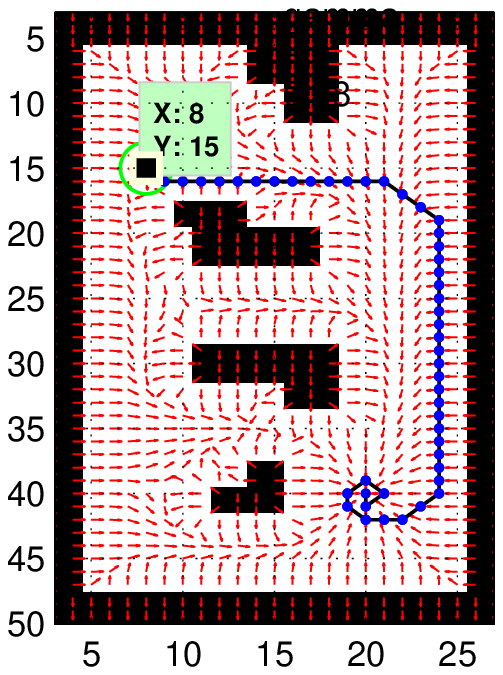}}
\caption{Effect of increasing turning penalty $\beta$ on making turns in choice of local transitions post $\nustar$-optimization.
Note that the \textit{potential gradient} is identical in all four cases. The start state is $(8,15)$ as shown. The gradient shown (by the arrows) is for the computed measure vector 
$\boldsymbol{\nu_\#}^{\mathbf{A}}$, and hence is identical for all four cases
 }\label{figpenalty0}
\end{figure*}
We select one state from the
set of possible next states which have a strictly higher measure compared to the current state in the computed plan. It is possible that the set of such states $Q_{next}$ (See Line 6) has more than one entry. Choice of any element in $Q_{next}$ as the next desired state  is optimal in the sense of Proposition~\ref{propoptinterpret} . However, one may use some additional criteria to aid this choice to yield plans suited to the application at hand, $e.g.$, to minimize change of current direction of travel.
For example one may choose the state from $Q_{next}$ that requires least deviation from the current direction of movement, to minimize control effort. In general, we can penalize turning 
using a specified penalty $\beta \in [0,1]$ as follows:
\begin{defn}\label{defnturnpenalty}
Given a turning penalty $\beta \in [0,1]$, the turn penalized measure values on the set of possible next states $Q_{next}$  is computed as follows:
\begin{gather}
 \forall q\in Q_{next},  \nu^\beta(q) = (1-\beta) \boldsymbol{\nu_\#}^{\mathbf{A}}(q) + \beta\cos(h(q))
\end{gather}
where $h(q)$ is the heading correction required for transitioning to $q \in Q_{next}$, which for 2D circular robots is calculated as the angular correction 
between the line joining the center of  the current state to that of $q$, and the line joining the center of the last state with that of the current one.
the direction of the last state.
The robot then chooses $q_{next}$ as the state  $q \in Q_{next}$ which has the maximum value for $\nu^\beta(q)$.
\end{defn}

The effect of penalizing turns is shown in the Figure~\ref{figpenalty0}. Note that for maximum turn penalty, the computed plan
is almost completely free from kinks. Also, note that the $\nustar$ optimization ensures that all these plans have the same probability of 
success and collision.

As stated in Line 8, the robot may not be successful to actually transition to the next chosen state due to dynamic effects. In particular, if the state that the robot actually ends up in has a
non-positive measure (due to uncertainties and execution errors), then execution is terminated and the mission is declared unfeasible from that point (See Line 14).

It is important to note that if a particular configuration maps to a navigation state with non-positive measure, then no feasible path to goal exists from that configuration, \textit{irrespective 
of uncertainty effects}. This underscores the property of the proposed algorithm that it finds  optimal feasible paths; even if the only feasible path is very unsafe, it still is the 
\textit{only} feasible path; and is therefore the optimal course of action (See Proposition \ref{propassembled} Statement 3).

\section{Verification \& Validation}\label{secvv}
In this section we validate the proposed planning algorithm via detailed high fidelity simulation studies
and in experimental runs on a heavily instrumented SEGWAY RMP 200 at the robotic testbed at the Networked Robotics \& Systems Laboratory (NRSL), Pennstate.
The results of these experiments adequately verify the theoretical formulations and the key
claims made in the preceding sections.

\begin{rem}\label{rempath}
In depicting simulation results in the sequel, we refer to ``computed paths/plans''. It is important to clarify, what we mean by such a computed or simulated path. The computed path 
is the sequence of configuration states that the robot would enter, if the uncertainties do not force it to deviate, $i.e.$, the path depicts the 
best case scenario under the uncertainty model. Thus, the depictions merely give us a feel for the kind of paths the robot would take; in actual implementation, the trajectories would differ between runs. Also, when we refer to lengths of the computed paths, we are referring to the lengths of the paths in the best case scenario, $i.e.$, the tight lower bound on the 
path length that will actually be encountered.
\end{rem}

\subsection{Simulation Results for Circular Robots}
\begin{figure*}[!ht]
\centering
\psfrag{Probability}[c]{\footnotesize  Probability}
\psfrag{0.4}[c]{\scriptsize  0.4}
\psfrag{0.35}[c]{\scriptsize  0.35}
\psfrag{0.3}[c]{\scriptsize  0.3}
\psfrag{0.25}[c]{\scriptsize  0.25}
\psfrag{0.2}[c]{\scriptsize  0.2}
\psfrag{0.15}[c]{\scriptsize  0.15}
\psfrag{0.1}[c]{\scriptsize  0.1}
\psfrag{0.05}[c]{\scriptsize  0.05}
\psfrag{0}[c]{\scriptsize  0}
\psfrag{R1}[lc]{\footnotesize  Run 1}
\psfrag{R2}[lc]{\footnotesize  Run 2}
\psfrag{R3}[lc]{\footnotesize  Run 3}
\psfrag{D}[rb]{\footnotesize  Deviation $\Delta$}
\subfigure[]{ \includegraphics[width=2.3in]{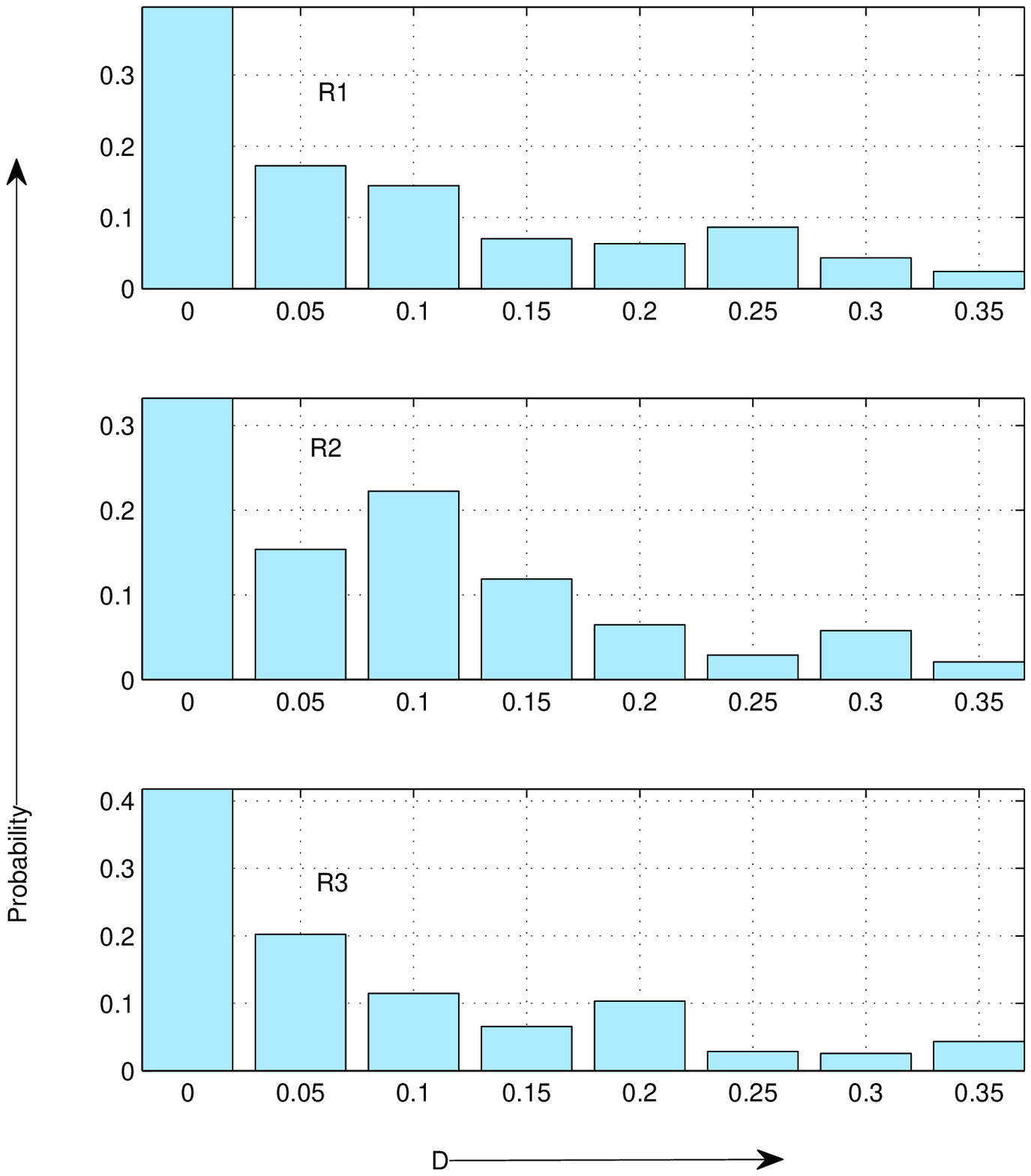}}
\psfrag{D}[rt]{\footnotesize  $\eta^\star$}
\psfrag{R}[rb]{\footnotesize  Run Number}
\psfrag{N  }{\footnotesize $\eta^\star\phantom{x}$ }
\psfrag{Probability}[c]{\footnotesize  Probability}
\psfrag{0.5}[c]{\footnotesize  0.5}
\psfrag{0.4}[c]{\footnotesize  0.4}
\psfrag{0.3}[c]{\footnotesize  0.3}
\psfrag{0.2}[c]{\footnotesize  0.2}
\psfrag{0.1}[c]{\footnotesize  0.1}
\psfrag{0}[c]{\footnotesize  0}
\psfrag{16}[c]{\footnotesize  16}
\psfrag{17}[c]{\footnotesize  17}
\psfrag{18}[c]{\footnotesize  18}
\psfrag{19}[c]{\footnotesize  19}
\psfrag{25}[c]{\footnotesize  25}
\psfrag{5}[c]{\footnotesize  5}
\psfrag{10}[c]{\footnotesize  10}
\psfrag{15}[c]{\footnotesize  15}
\psfrag{20}[c]{\footnotesize  20}
\subfigure[]{ \includegraphics[width=2.1in]{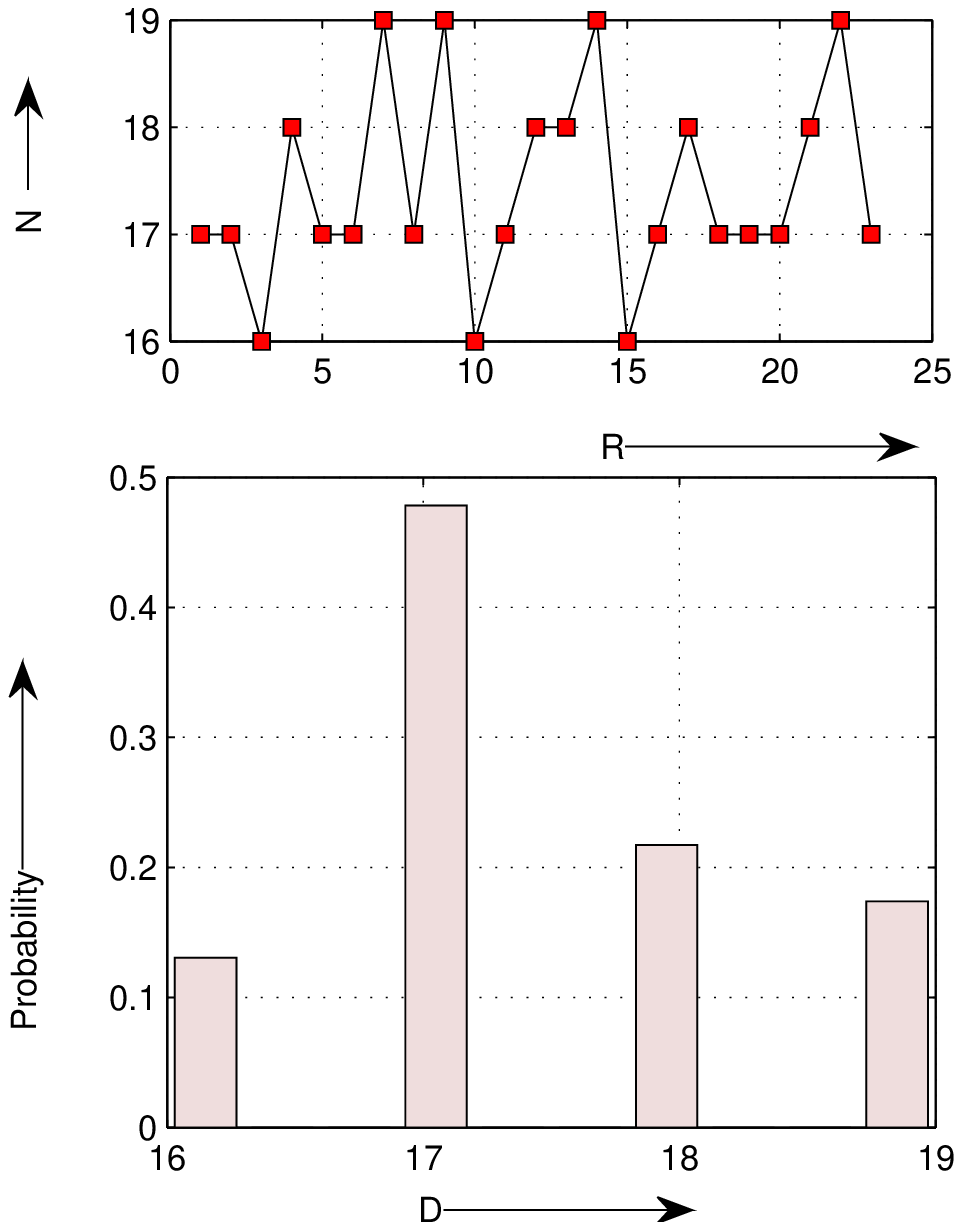}}
\psfrag{p}[lb]{\Mblue \footnotesize  \txt{$\gamma=$\\$.93$}}
\psfrag{0.004}[lb]{\Mblue \footnotesize  $.003$}
\psfrag{0.035}[lb]{\Mblue \footnotesize  $.015$}
\psfrag{D}[r][b]{\footnotesize  $\overline{\Delta}$}
\psfrag{0.4}[c]{\footnotesize  0.4}
\psfrag{0.35}[c]{\footnotesize  0.35}
\psfrag{0.3}[c]{\footnotesize  0.3}
\psfrag{0.25}[c]{\footnotesize  0.25}
\psfrag{0.2}[c]{\footnotesize  0.2}
\psfrag{0.15}[c]{\footnotesize  0.15}
\psfrag{0.1}[c]{\footnotesize  0.1}
\psfrag{0.05}[c]{\footnotesize  0.05}
\psfrag{0}[c]{\footnotesize  0}
\psfrag{Probability }[c]{\footnotesize  Probability}
\psfrag{Expected Deviation Histogram}[c][c]{\footnotesize  $\phantom{-}$ $\mathbf{E}(\DST)$ Histogram}
\subfigure[]{\includegraphics[width=2.3in]{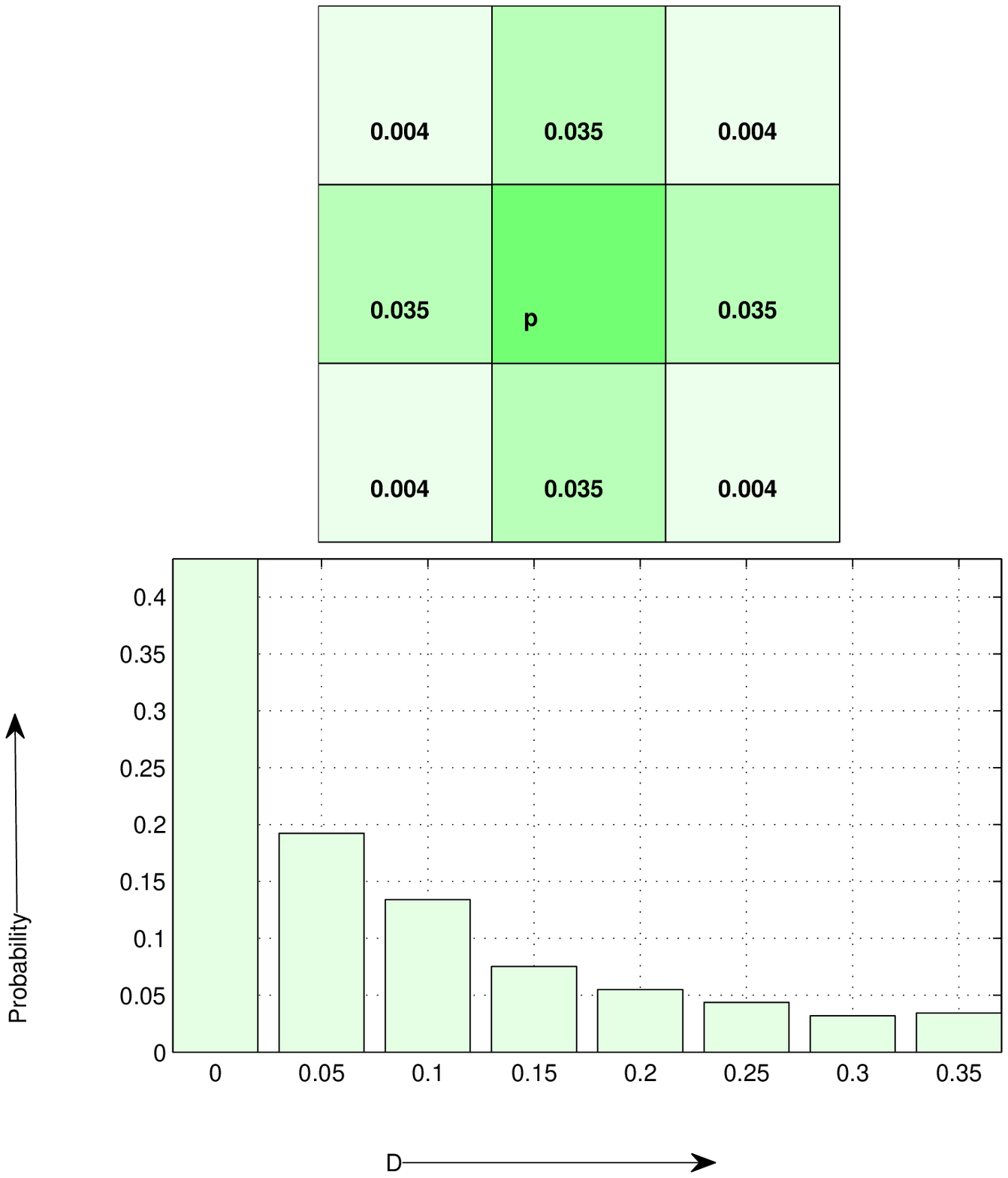}}
\caption{Computation of dynamic uncertainty parameters from observed dynamics for SEGWAY RMP 200 at NRSL, Pennstate: (a) shows the 
observed distribution $ \DST$ of the deviation $\Delta$ for different runs, (b) shows the distribution of the delay $\eta^\star$ for the various runs, Lower  plate in (c) 
illustrates the expected distribution $\mathbf{E}(\DST)$ for deviation $\Delta$ while the upper plate in (c) enumerates the probabilities of the uncontrollable transitions to the neighboring cells, and the coefficient of dynamic deviation $\gamma(\Gnm)$}\label{fighistex1}
\end{figure*}
\begin{figure*}[!ht]
\centering
\psfrag{gamma = 1}[cb]{}
\psfrag{gamma = 0.98}[cb]{}
\psfrag{gamma = 0.9}[cb]{}
\psfrag{gamma = 0.8}[cb]{}
\psfrag{5}[rt]{\small $5$}
\psfrag{10}[rt]{\small $10$}
\psfrag{15}[rt]{\small $15$}
\psfrag{20}[rt]{\small $20$}
\psfrag{25}[rt]{\small $25$}
\psfrag{30}[rt]{\small $30$}
\psfrag{35}[rt]{\small $35$}
\psfrag{40}[rt]{\small $40$}
\psfrag{45}[rt]{\small $45$}
\psfrag{50}[rt]{\small $50$}\vspace{5pt}
\subfigure[$\gamma(\Gnm)=1$]{\includegraphics[width=2.25in]{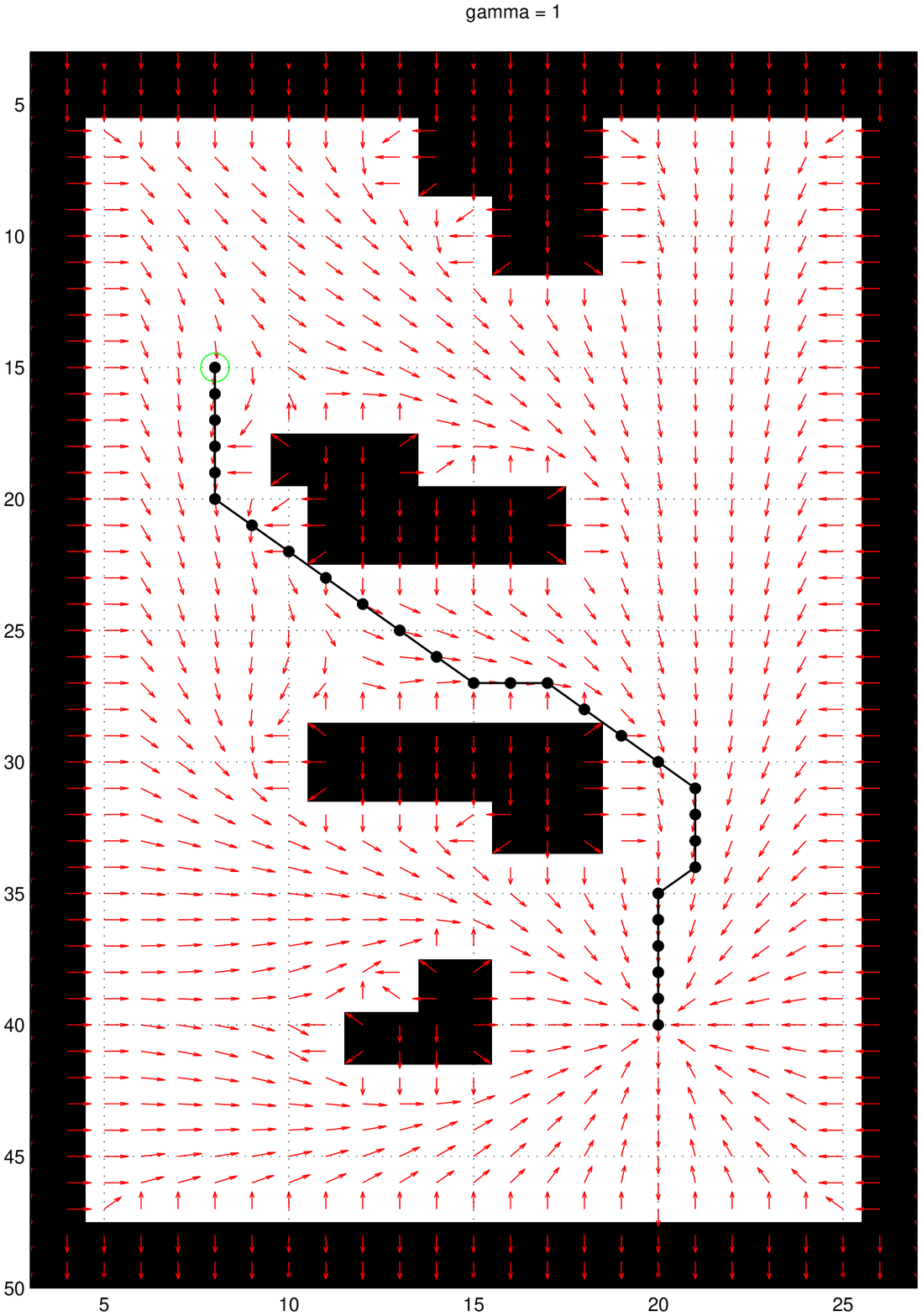}}
\subfigure[$\gamma(\Gnm)=0.98$]{\includegraphics[width=2.25in]{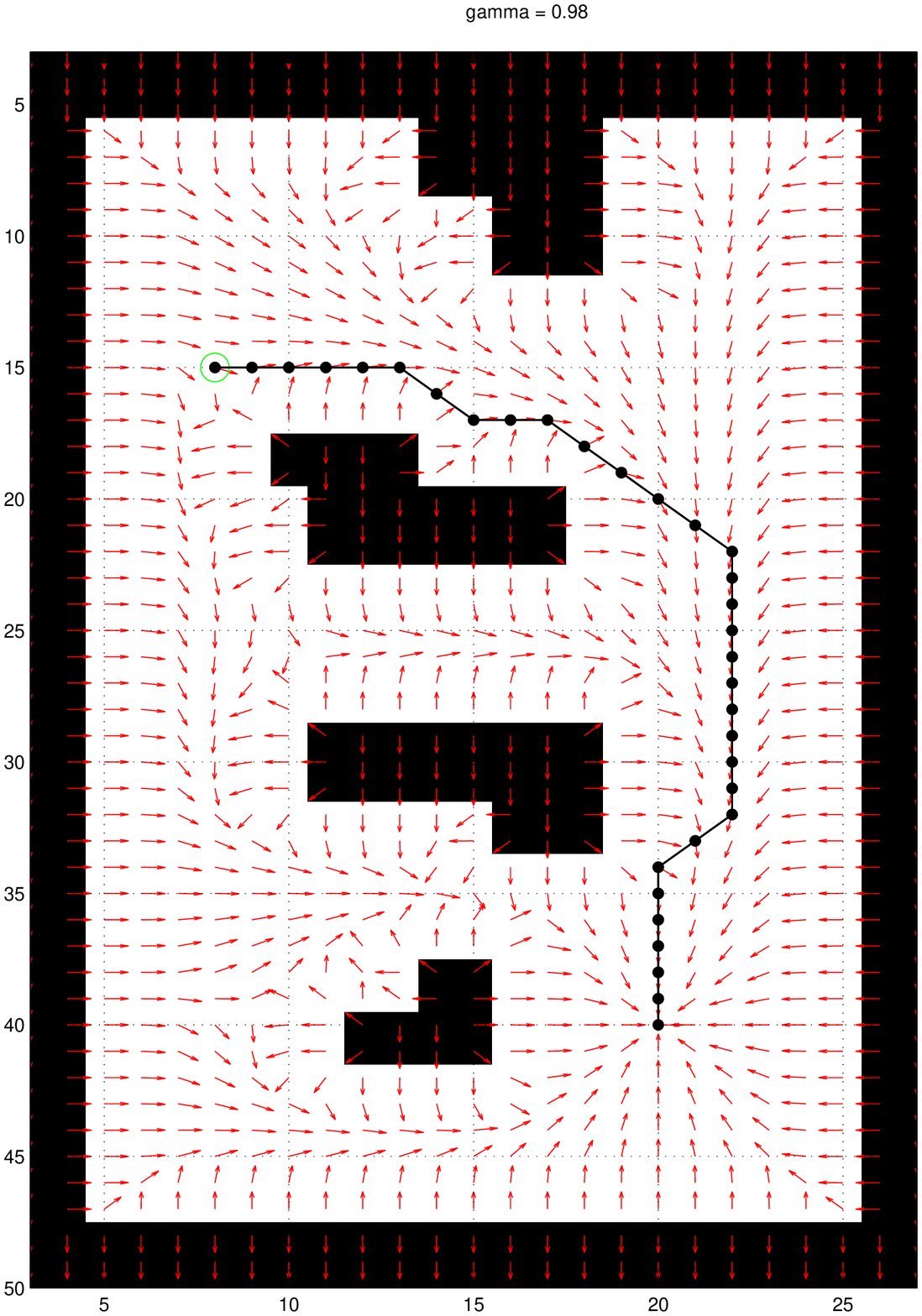}}
\subfigure[$\gamma(\Gnm)=0.9$]{\includegraphics[width=2.25in]{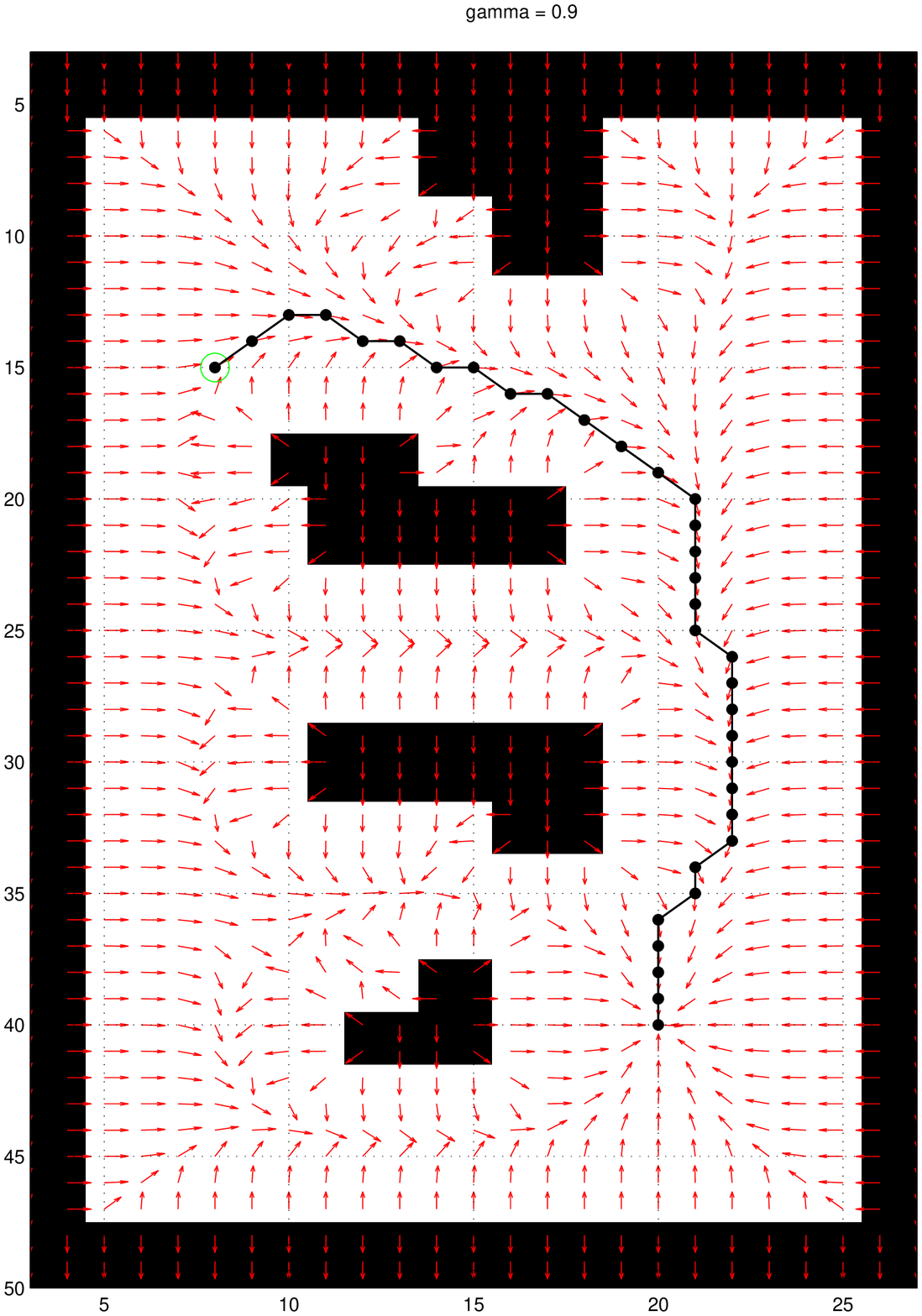}}
\caption{Simulation results with a circular robot model and different values of the coefficient of dynamic deviation $\gamma(\Gnm)$.
Note the response of the navigation gradient as $\gamma(\Gnm)$ is decreased.
}\label{figsimul1}
\end{figure*}
%
The recursive version of the modified $\nustar$ planning algorithm presented in this paper (See Algorithm~\ref{AlgorithmH})
 is first validated in a detailed simulation example as illustrated in Figures~\ref{fighistex1}(a-c) and \ref{figsimul1}(a-c).
The workspace is chosen to be a $50 \times 30$ grid, with obstacles placed at shaded locations, as illustrated.
The size of the workspace is chosen to correspond with the 
size of the actual test-bed, where  experimental runs on the robotic platform would be performed subsequently.
Plates (a)-(c) in Figure~\ref{figsimul1} illustrates the 
gradient of the 
optimized measure vector (by short arrows) and a sample optimal path from location $(15, 10)$ (upper,left) to the 
goal $(40,20)$ (down,right). We note that the ``potential field'' defined by the measure gradient
converges ($i.e.$ has an unique sink) at the goal.  Also, 
note that the coefficient of dynamic deviation $\gamma(\Gnm)$ is decreased, the 
algorithm responds by altering the optimal route to the goal. In particular, the optimal path for smaller values of $\gamma(\Gnm)$
stay further away from the obstacles. The key point to note here is that the the proposed algorithm guarantees that 
this lengthening of the route to account for dynamic uncertainty is \textit{optimal}, $i.e.$, further lengthening by staying even further 
from the obstacles yields no advantage in a probabilistic sense. This point has a direct practical implication; one that 
can be verified experimentally as follows. Let us assume that we have  a real-world robot equipped with on-board reactive collision avoidance, by which we can ensure that  
the platform does not \textit{actually collide} with obstacles under dynamic uncertainty, but executes corrections dictated by  local reactive avoidance. Then, the preceding result would imply that 
a using the correct value of dynamic deviation (for the specific platform) in the planning algorithm would result in routes that
require the least number of local corrections; which in turn ensures minimum time route traversals on the average.

It is important to note that the assumption of a circular robot poses no critical restrictions. Similar results can be obtained for more complex 
models as well, $i.e.$ rectangular platform with constrained turn radius. However, extension to multi-body motion planning would require 
addressing the algorithmic complexity issues that become important even for the recursive $\nustar$ for very large configuration spaces, and is a topic of future work.
%
\subsection{Simulation Results for Non-symmetric Uncertainty}
As stated in the course of the theoretical development, it is possible to choose the degree of amortization or averaging 
that one is willing to allow in the specification of the navigation automaton. As a specific example, 
one may choose to compute the probabilities of uncontrollable transitions with respect to some length of 
trajectory history; the simplest case is using the previous state information to yield non-symmetric 
deviation contours (See Figure~\ref{fignoncirc}). The particular type of 
deviation contours illustrated in Figure~\ref{fignoncirc} is obtained if the platform 
has a large stopping distance and inertia, and the heading and positional estimates are more or less accurate, $i.e.$, the 
uncontrollability in the model is a stronger function of the dynamic response, rather than the estimation errors.
A typical scenario is the SEGWAY RMP 200 with good global positioning capability, in which factoring in the dynamic response is 
important due to the inverted-pendulum two-wheel kinematics. For this simulation, 
we use a navigation automaton obtained from discretizing an essentially 4D underlying continuous 
configuration space. Each state (except the obstacle state) in the navigation automaton maps to a discretized pair of locations, reflecting the 
current robot location and the one from which it transitioned to the current location. We call this the \textit{4D Model} to distinguish it from the 
the significantly smaller and simpler 2D model. Note that the 2D model can be obtained from the 4D model by merging  states with the same current location via averaging the 
probabilities of uncontrollable transitions over all possible previous states. Also note that (as stated earlier), in the absence of uncertainty, the 4D formulation 
adds nothing new; explicitly encoding the previous location in the automaton state gives us no new information. Table ~\ref{tabcomp} enumerates the comparative model sizes.
\begin{table}[!hb]
\centering
\caption{Comparison of 4D and 2D Models}\label{tabcomp}
 \begin{tabular}{||c||c|c|c|}
  \hline
& Map Size & No. of States & Alphabet Size \\\hline
2D Model & $40\times 40$ & $1600$ & $8$ \\\hline
4D Model & $40\times 40$ & $256\times 10^4$ & $8$ \\\hline
 \end{tabular}

\end{table}

\begin{figure}[!ht]
\centering
\begin{minipage}{3.5in}
\centering
\psfrag{U}[cc]{\bf \footnotesize \DGreen \txt{Deviation Contour}}
\psfrag{T}[lt]{\bf \footnotesize \Mblue Trajectory}
\psfrag{A}[lt]{ \small  (a)}
\psfrag{B}[lt]{ \small  (b)}
\psfrag{C}[lt]{\small  (c)}
 \includegraphics[width=3.25in]{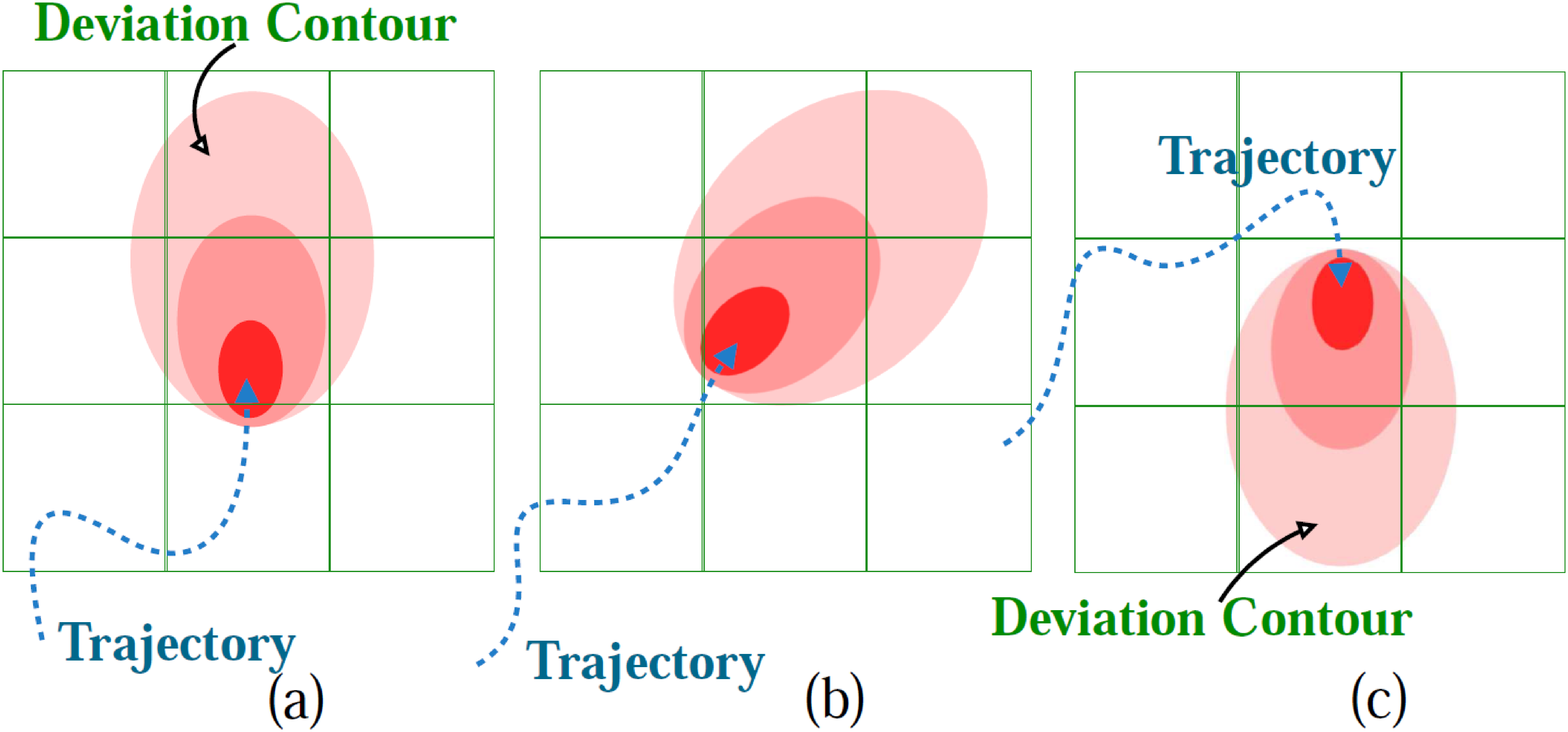}
\caption{Non-symmetric deviation contours arising from explicit dependence on the 
last discrete position for the robot}\label{fignoncirc}
\end{minipage}
\begin{minipage}{3.5in}
\centering
\psfrag{G}[cb]{\footnotesize GOAL}
\psfrag{I}[cl]{\footnotesize INIT}
\psfrag{A}[cl]{\footnotesize \color{Red4}\bf 4D model}
\psfrag{S}[cl]{\footnotesize \color{Green4} \bf 2D model}
\psfrag{123456789123456P}[cl]{\footnotesize \color{blue} No uncertainty}
 \includegraphics[width=3in]{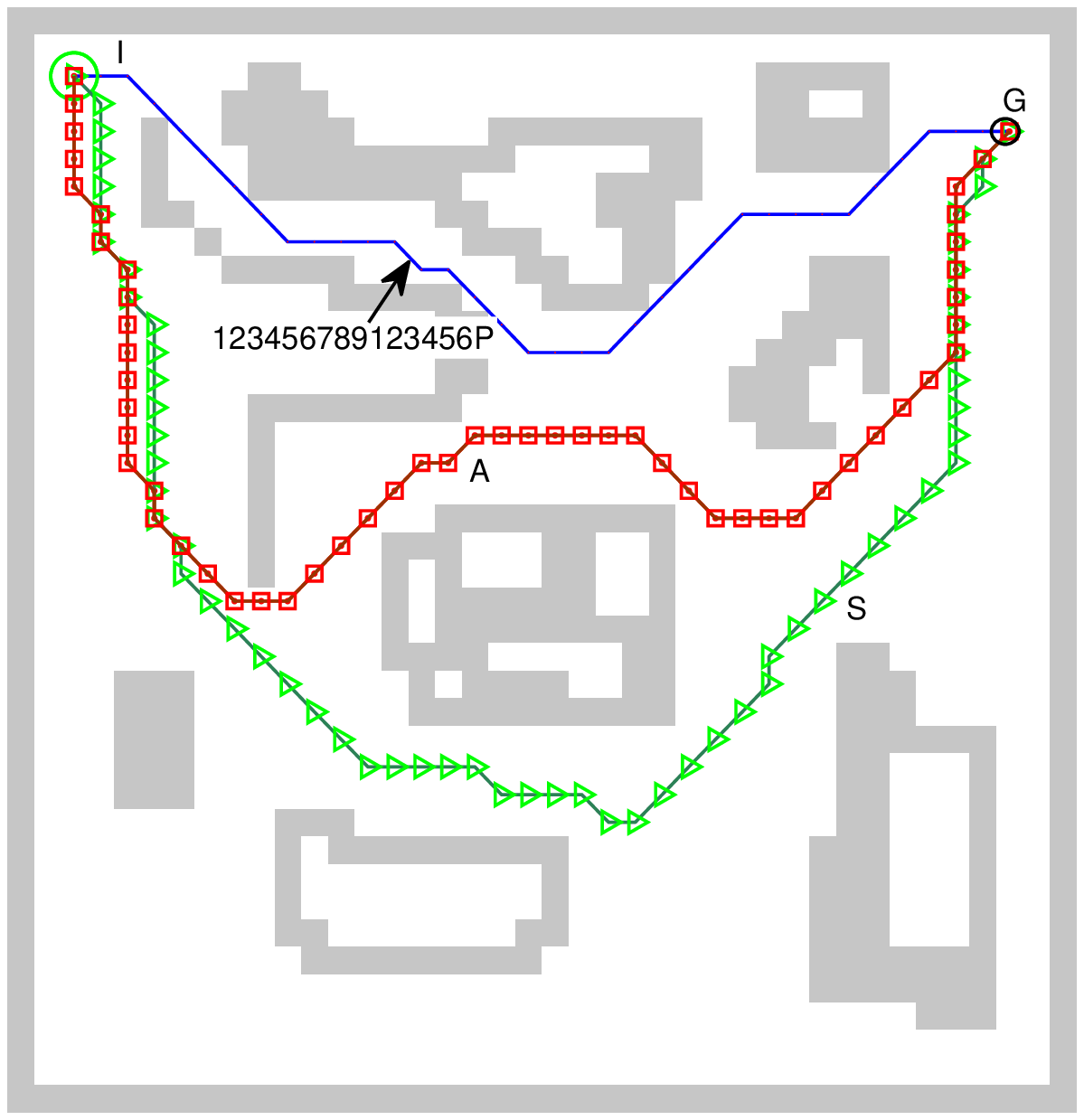}
\caption{Comparative pathlengths for the chosen initial and goal configurations: 65 steps for 2D model,  51 steps for 4D model, 36 steps for
the case with no uncertainty}\label{figcompass}
\end{minipage}
\end{figure}

Comparisons of computed plans for a particular set of initial and goal configurations is shown in Figure~\ref{figcompass}.
To accentuate the differences in the computed plans, the deviation contours were chosen so that the probability of uncontrollable transition in the current
direction of travel is significantly more compared to that of deviating to left or right, $i.e.$, the contours are really narrow ellipses. Under such a scenario, the platform is 
more capable of navigating narrow corridors as compared to the amortized 2D counterpart. This is reflected in the paths shown in Figure~\ref{figcompass}, where the path for the 4D model 
is shorter, and goes through some of the narrow bottlenecks, while the path for the 2D model takes a safer path. Note, the path for the no-uncertainty case is even more aggressive, and shorter.
In practical implementation, when the uncontrollable probabilities are identified from observed dynamics or pre-existing continuous models, the 
differences in the two cases are often significantly less.
%
\subsection{Simulation Results for Rectangular Robots}
The proposed planning algorithm is next applied to the case of rectangular robots, specifically 
ones that have a minimum non-zero turn radius. We further impose the constraint that the platform cannot travel backwards, which is 
a good assumption for robots that have no ranging devices in the rear, and also for 
aerial vehicles (UAVs). Even assuming  planar operation, this problem is essentially 3D, with the 
navigation automaton reflecting the underlying configuration states of the form $(x,y,h)$ where $h$ is the current heading, which can no longer be neglected 
due to the inability of the patform to turn in place. A visual comparison of the models for the circular and rectangular cases is shown in Figure~\ref{figrectang}(e).
The heading is discretized at $15^\circ$ intervals, implying we have $24$ discrete headings. This also means that for the same planar workspace, the number of states in the rectangular model is
about $24$ times larger the number of states for the circular model. Also, while in the circular case, we had $8$ neighbors, the number of neighboring configurations increases to $8\times 24 + 24 = 216$ However, not all neighbors can be reached via controllable transitions due to the restriction on the turn radius; we assume a maximum turn of $\pm 45^\circ$ in the model considered 
for the simulation. As explained earlier, all the neighbors may be reachable via uncontrollable transitions, which reflects uncertainties in estimation (See Figure~\ref{figruc}).

We test the algorithm with different values of $\gamma(\Gnm)$ as illustrated in Figure~\ref{figrectang}(a-d). Note the trajectories become more rounded and less aggressive (as expected) as the uncertainty is increased. Also note that the heading at the goal is different for the different cases. This is because, in the model , we specified as goals any state that maps to the 
goal location in the planar grid irrespective of the heading, $i.e.$, the problem was solved with essentially $24$ goals. Although for simplicity, the theoretical  development was presented assuming a single goal, the results can be trivially shown to extend to such scenarios. The trajectories however, will be significantly different if we insist on having a particular heading 
at the goal (See Figure~\ref{figcompdir}).
\begin{figure*}[!ht]
\centering
\subfigure[$\gamma(\Gnm)=1$]{\includegraphics[width=2.7in]{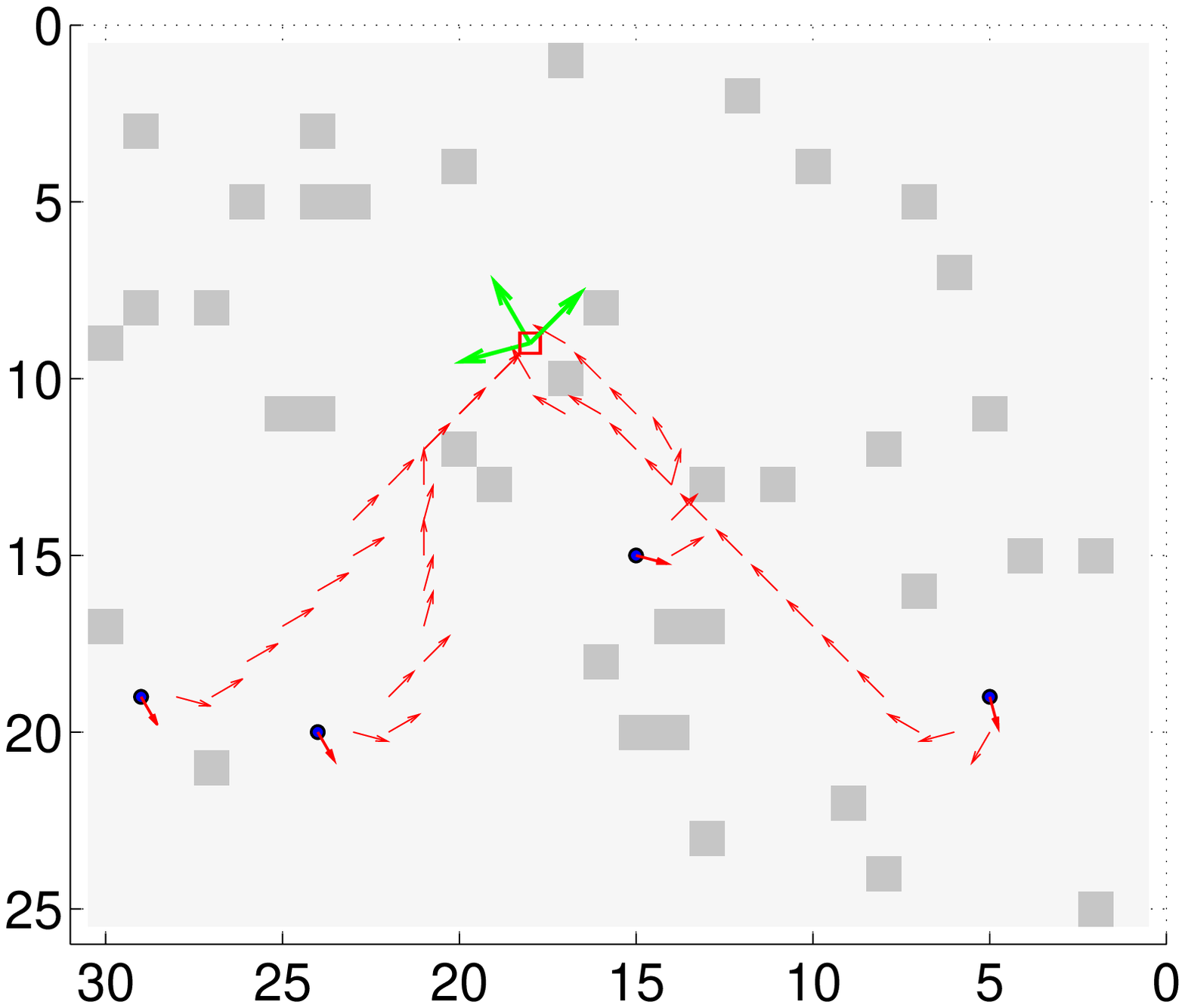}}
\subfigure[$\gamma(\Gnm)=0.8$]{\includegraphics[width=2.7in]{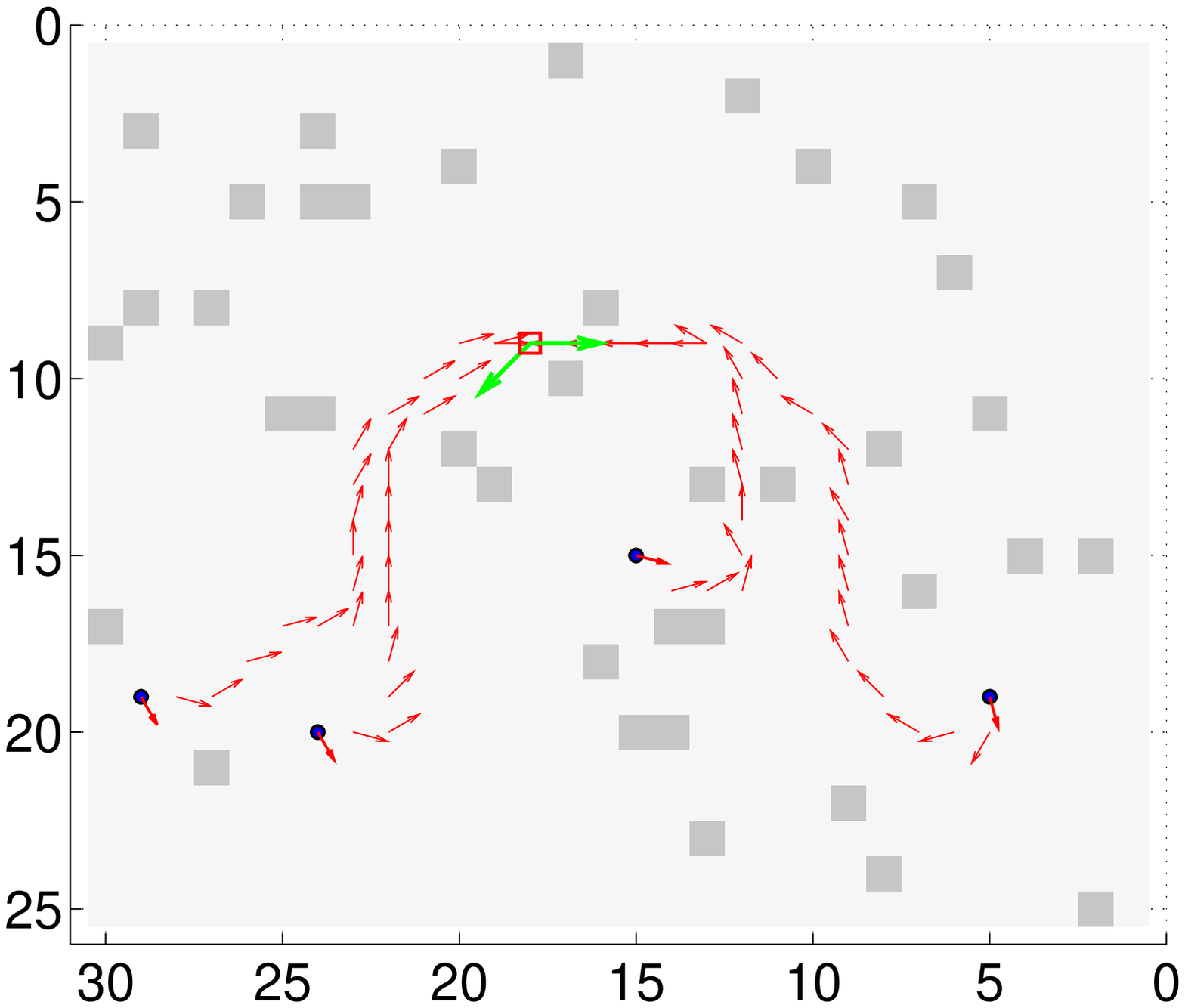}}
\subfigure[$\gamma(\Gnm)=0.5$]{\includegraphics[width=2.7in]{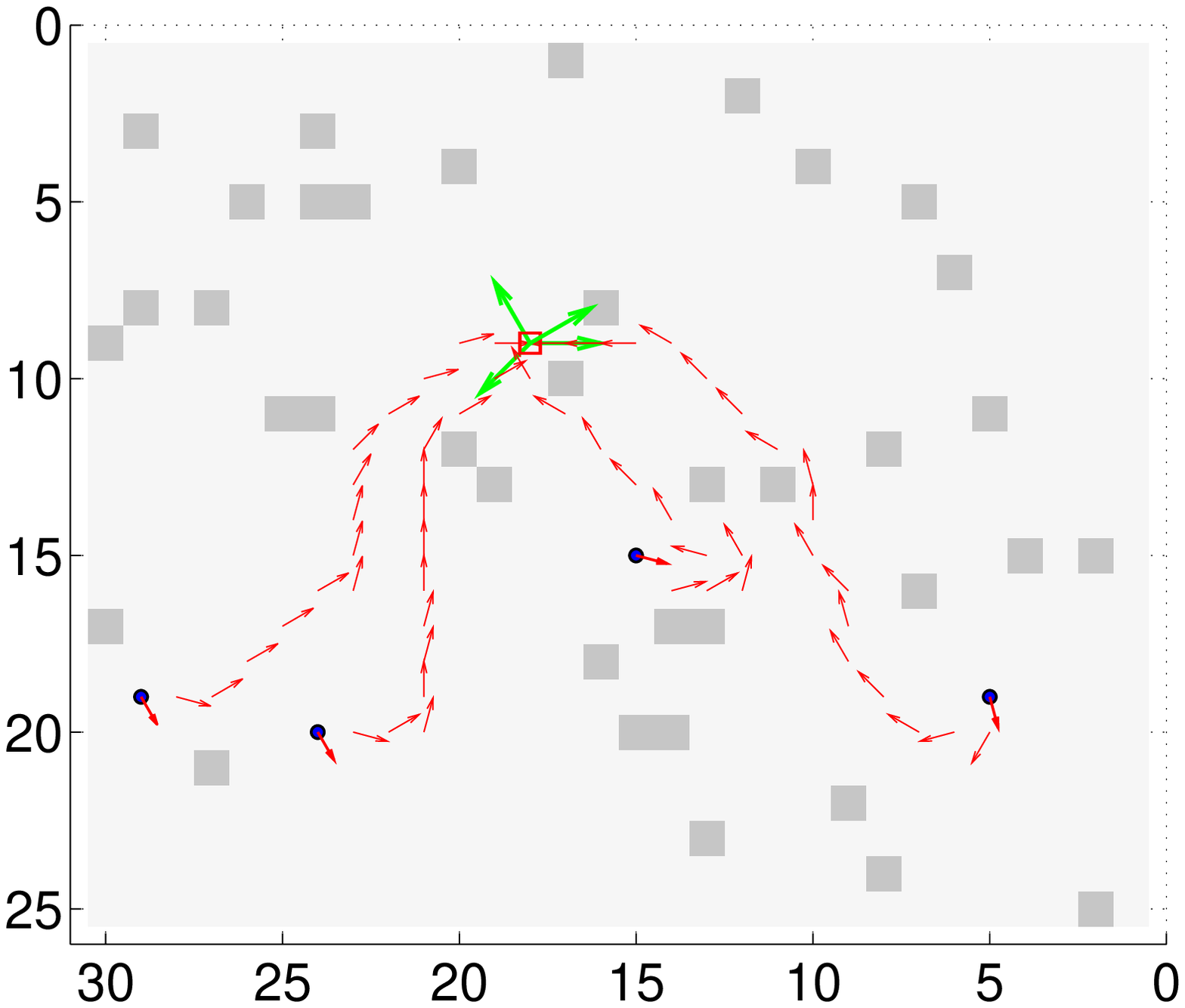}}
\subfigure[$\gamma(\Gnm)=0.1$]{\includegraphics[width=2.7in]{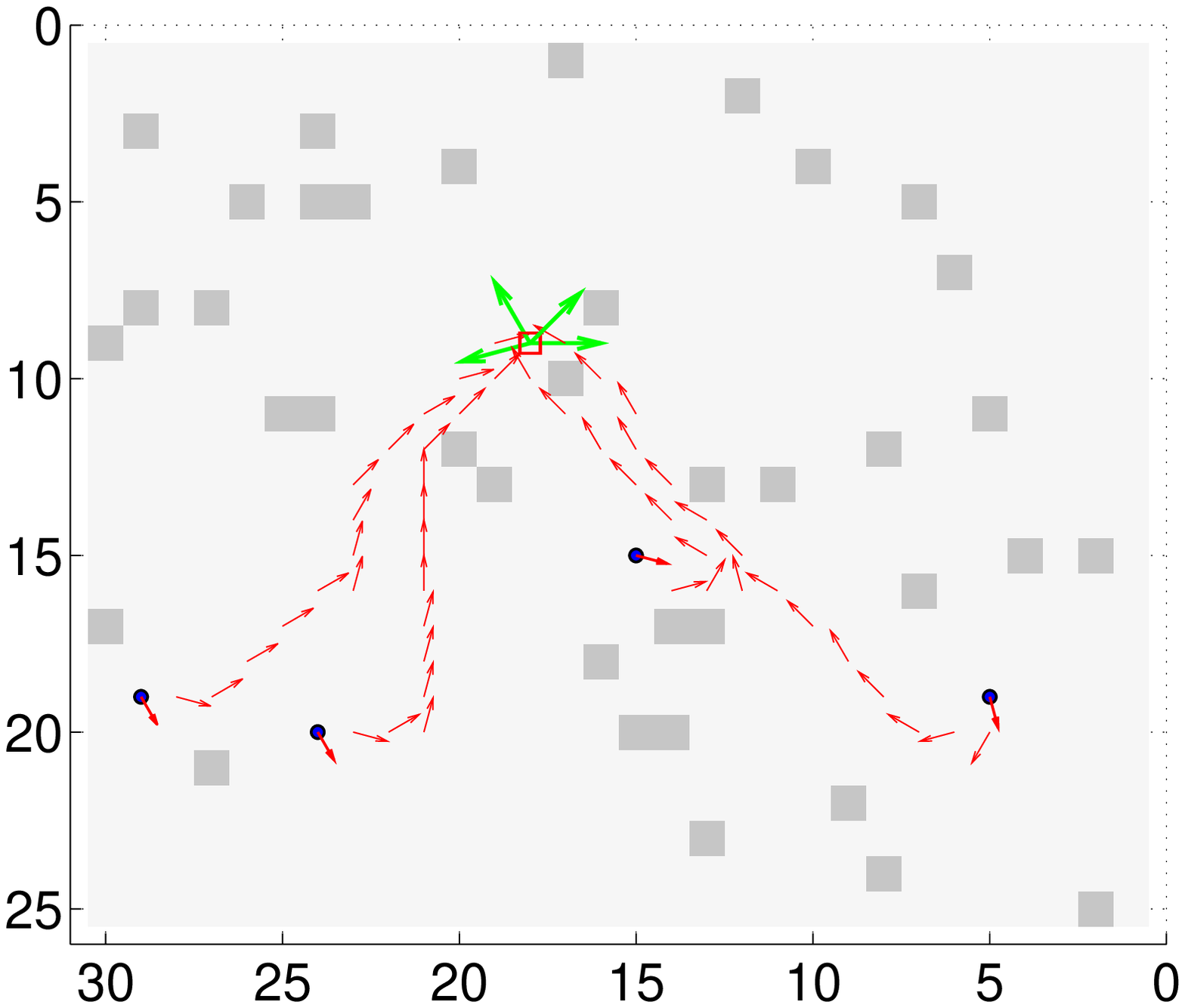}}
\hspace{10pt}
\psfrag{d}[lt]{ \small  $15^\circ$}
\psfrag{L}[lt]{ }
\psfrag{D}[cb]{ \color{Red4} \footnotesize  \txt{Discretized \\ Neighboring \\ Configurations}}
\psfrag{M1}[ct]{ \small   Circular Model}
\psfrag{M2}[ct]{ \small   Rectangular Model}
\subfigure[]{ \includegraphics[width=3in]{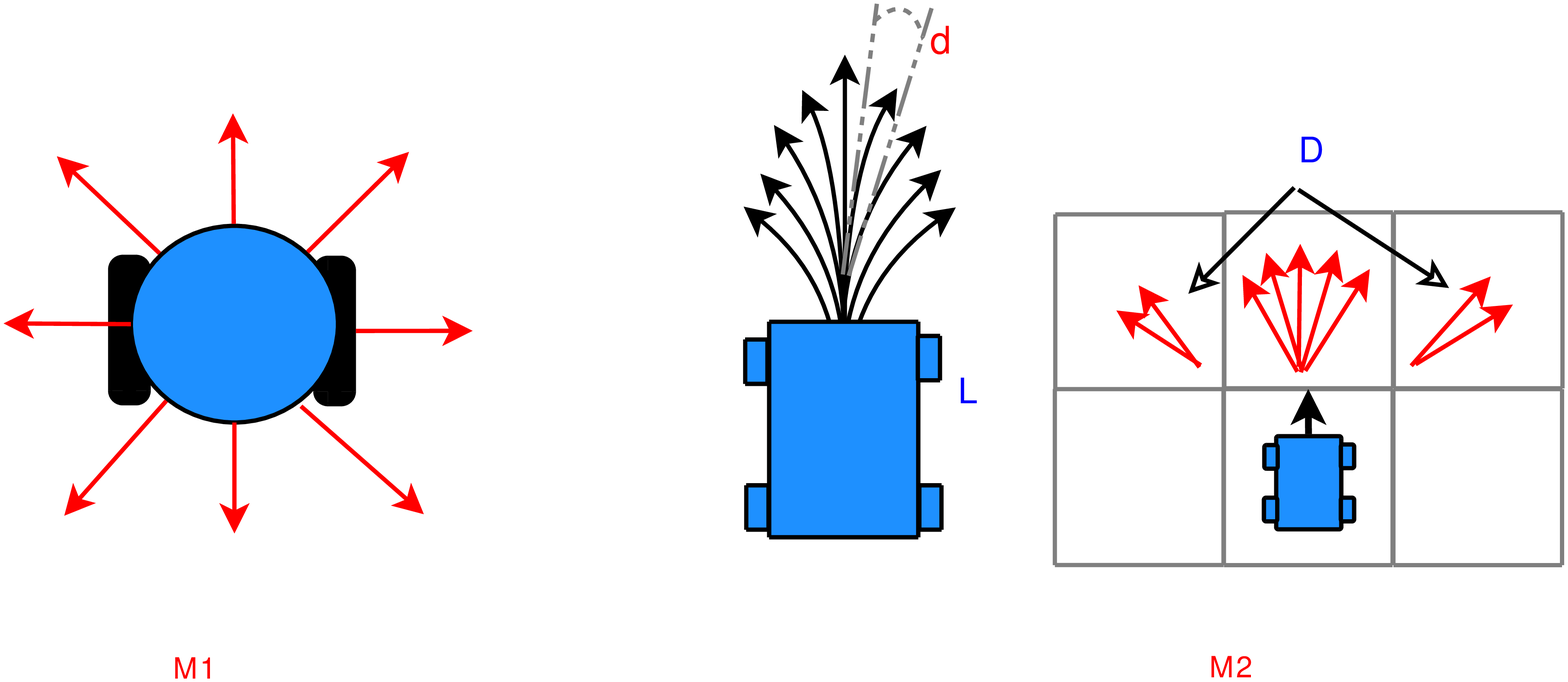}} 
\caption{Comparative trajectories for rectangular model}\label{figrectang}
\end{figure*}
\begin{figure*}[!ht]
\centering
\psfrag{G}[cb]{\bf \footnotesize \color{Red4} Goal Location}
\psfrag{D}[cr]{\bf \footnotesize \color{Green4} \txt{Goal Heading\\ $\phantom{X}$}}
\psfrag{N}[cr]{\bf \footnotesize \color{DeepSkyBlue4} Note Difference}
\subfigure[Strict Goal Heading Requirement]{\includegraphics[width=2.7in]{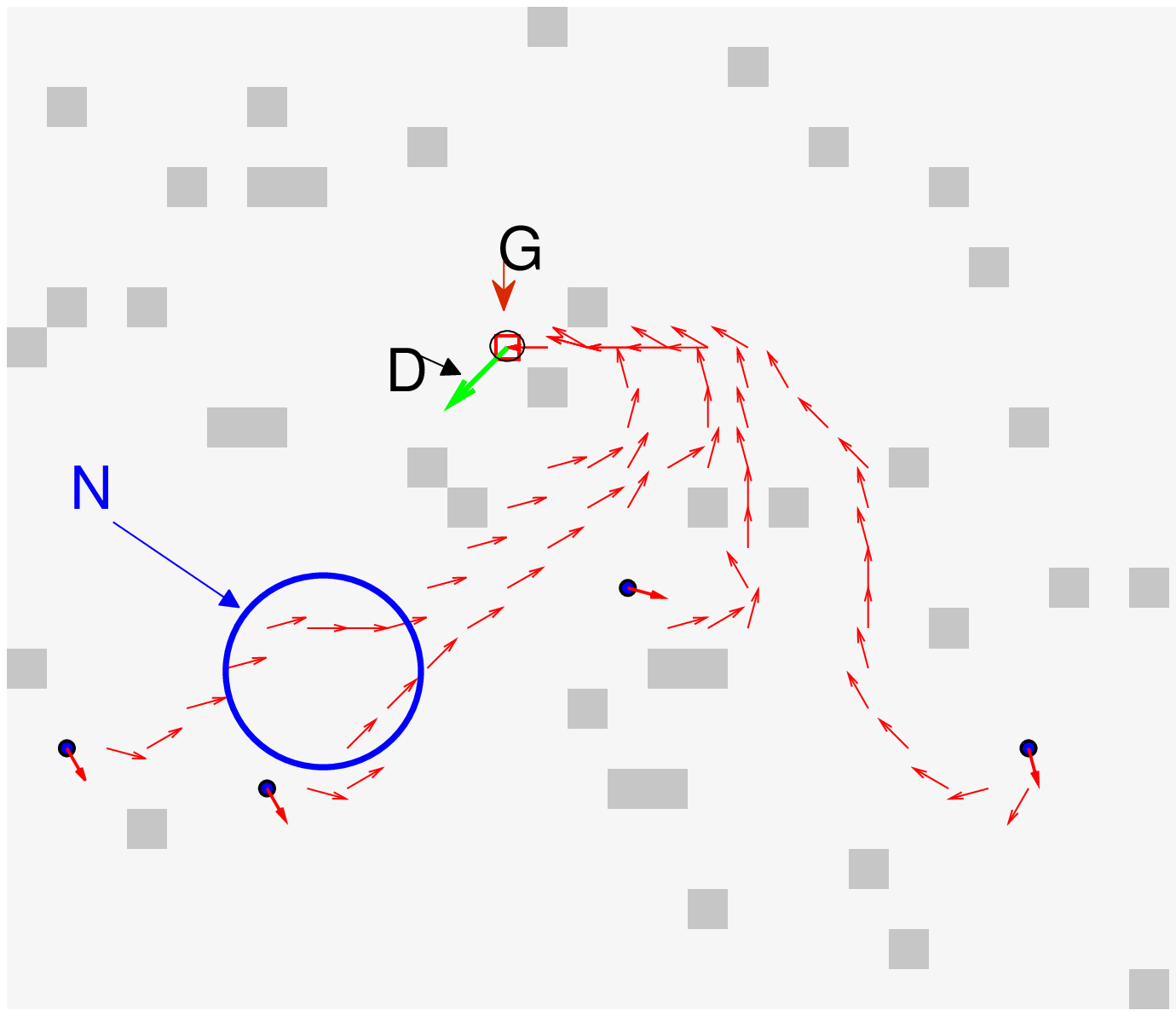}}\hspace{20pt}
\subfigure[No Heading Requirement]{\includegraphics[width=2.7in]{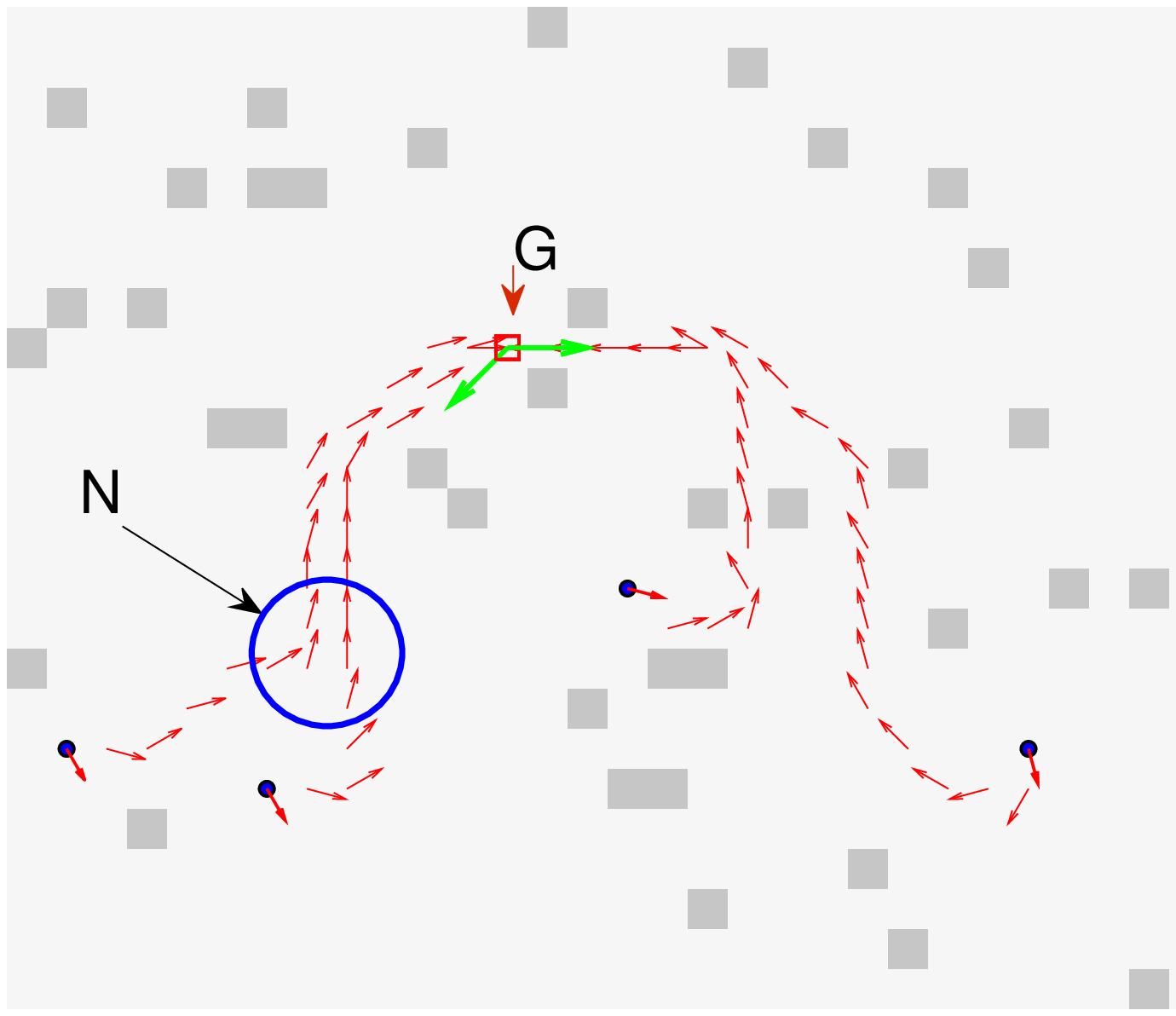}}
\caption{Comparative trajectories for rectangular model for $\gamma(\Gnm)=0.8$: Case (a) we demand that the robot reach the goal 
with a specified heading ($-150^\circ$). Case (b): Any heading at the goal is acceptable}\label{figcompdir}
\end{figure*}
\begin{figure*}[!ht]
\centering
\psfrag{0}[ct]{ }
\psfrag{5}[ct]{ }
\psfrag{10}[ct]{ }
\psfrag{15}[ct]{ }
\psfrag{20}[ct]{ }
\psfrag{25}[ct]{ }
\psfrag{30}[ct]{ }
\psfrag{C}[cr]{\bf \footnotesize \color{DeepSkyBlue4}Circular Robot}
\psfrag{R}[cl]{\bf \footnotesize \color{Red4} Rectangular Robot}
\psfrag{N2}[bl]{\bf \footnotesize \color{blue} Note }
\psfrag{N}[br]{\bf \footnotesize \color{Red4} Note }
\subfigure[$\gamma(\Gnm)=1$]{\includegraphics[width=2.7in]{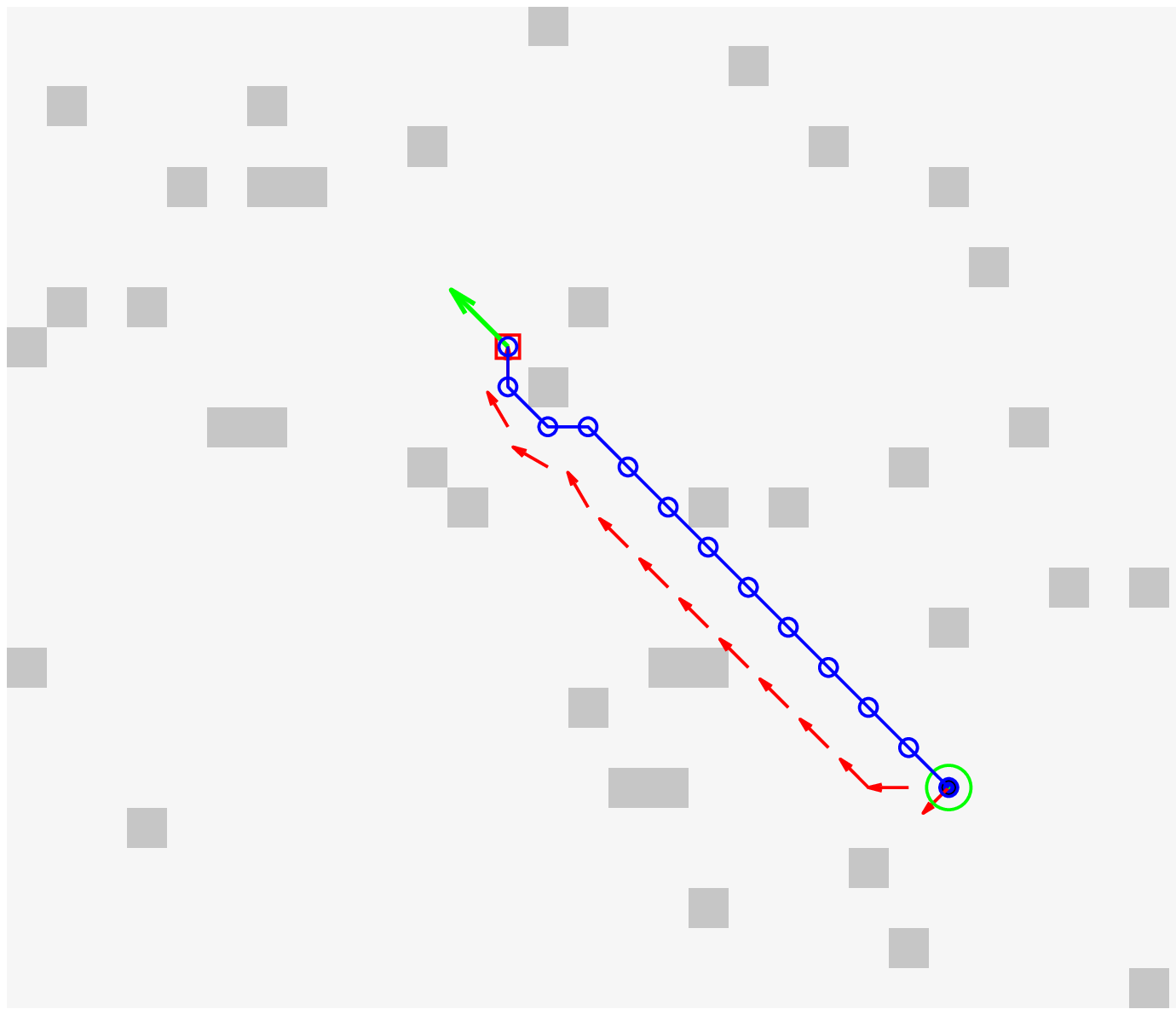}}\hspace{20pt}
\subfigure[$\gamma(\Gnm)=0.8$]{\includegraphics[width=2.7in]{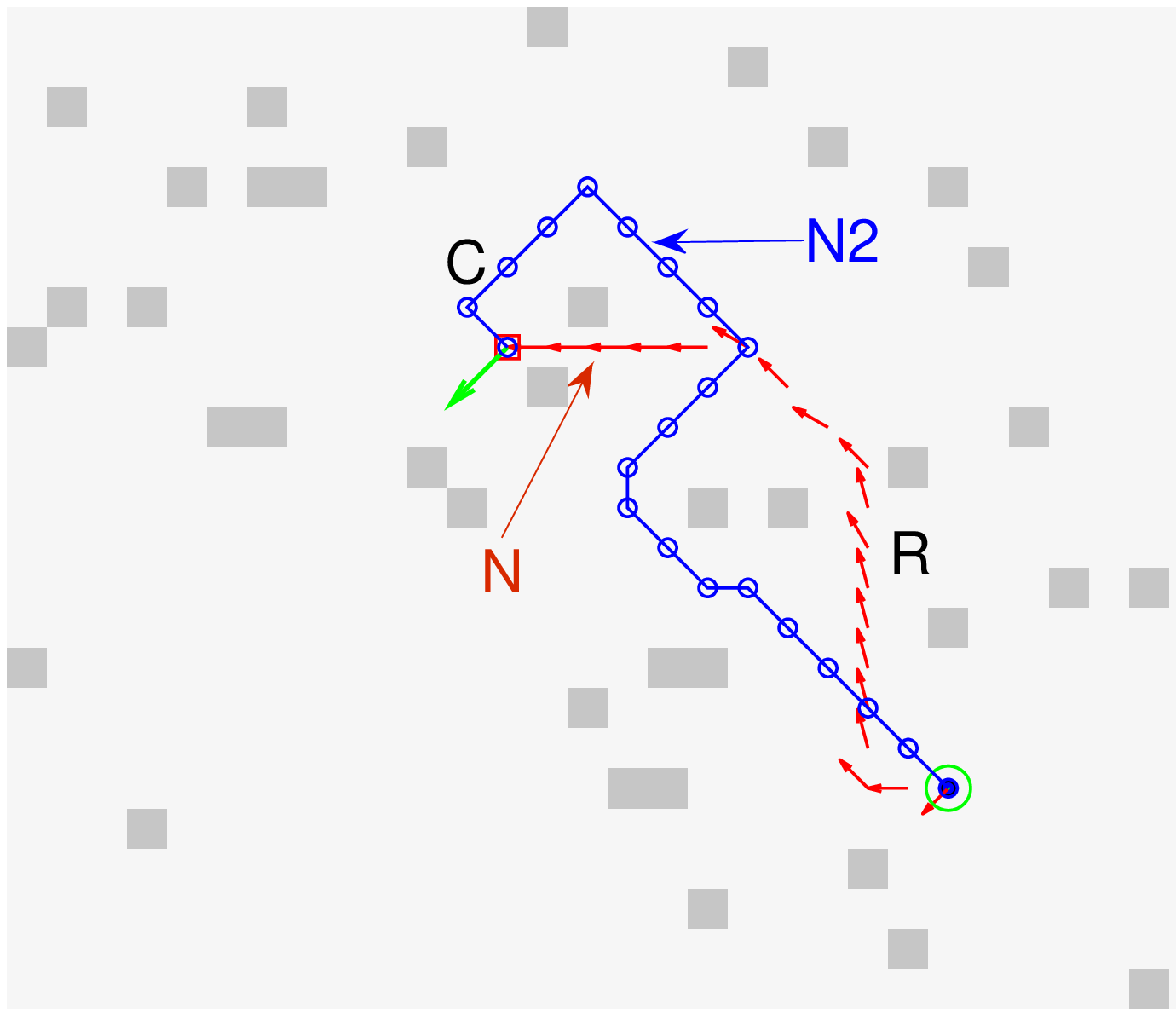}}
\caption{Comparative trajectories for rectangular model and circular model to illustrate the effect of different uncertainty assumptions: (a) Trajectories in the absence of uncertainty (b) Trajectories with $\gamma(\Gnm)=0.8$.}\label{figrect2}
\end{figure*}

In the simulations for the rectangular model, we deliberately assume that any neighbor that cannot be reached via a controlled move is not reachable by an uncontrollable transition as well.
Although this is not what we expect to encounter in field, the purpose of this assumption is to bring out an interesting consequence that we illustrate in Figures~\ref{figrect2}(a-b). As explained above, this assumption implies that we have little or no uncertainty in local heading estimations (since the robot cannot turn in place, so there is no uncontrollable transition that alters 
heading in place). It therefore follows, that under this scenario, the platform would find it relatively safe to navigate narrow passages. This is exactly what we see in Figure~\ref{figrect2}(b), where the circular robot with same coefficient of dynamic uncertainty, really goes out of way to avoid the narrow passage, while the rectangular robot goes through.
%
\subsection{Simulation Results for Mazes}\label{subsecmaze}
We simulate planning in a maze of randomly placed static obstacles.
A sample case with optimal paths computed for different coefficients of dynamic deviation is illustrated 
in Figure~\ref{figpenalty}(a). A key point to note is that the optimal path is lengthens first as $\gamma(\Gnm)$ is decreased, and then 
starts shortening again, which may seem paradoxical at first sight. However, this is exactly what we expect. Recall that
the proposed algorithm minimizes the probability of collision. Also note that 
there are two opposing effects in play here; while a longer path that stays away from the 
obstacles influences to decrease the collision probability, the very fact that the path is longer has an increasing influence arising from the increased
 probability that an uncontrollable sequence would execute 
from some point in the path that leads to a collision. 
At relatively high values of $\gamma(\Gnm)$, the first effect dominates and  we can effectively
decrease the collision probability by staying away from the obstacles thereby increasing the path length.
 However, at low  values of $\gamma(\Gnm)$, the latter effect dominates, implying that  increased path lengths are no longer advantageous.
This interesting phenomenon is illustrated in Figure~\ref{figpenalty}(b), where we clearly see that the path lengths peak in the
  $\gamma(\Gnm)=0.72$ to $\gamma(\Gnm)=0.85$ range (for the maze considered in Figure~\ref{figpenalty}(a)). Also note that the 
configuration space has to be sufficiently complex to actually see this effect; which is why we do not see this phenomenon in the 
simulation results presented in Figures~\ref{figsimul1}(a-c).
%
%
%
%
\begin{figure*}[!ht]
\centering
\psfrag{gamma = 0.99}[cb]{$\gamma = 1$}
\psfrag{gamma = 0.9}[cb]{$\gamma = 0.9$}
\psfrag{gamma = 0.8}[cb]{$\gamma = 0.8$}
\psfrag{gamma = 0.6}[cb]{$\gamma = 0.6$}
\psfrag{5}[cc]{\small $5$}
\psfrag{10}[cc]{\small $10$}
\psfrag{15}[cc]{\small $15$}
\psfrag{20}[cc]{\small $20$}
\psfrag{25}[cc]{\small $25$}
\psfrag{30}[cc]{\small $30$}
\psfrag{35}[cc]{\small $35$}
\psfrag{40}[cc]{\small $40$}
\psfrag{45}[cc]{\small $45$}
\psfrag{50}[cc]{\small $50$}
\psfrag{60}[cc]{\small $60$}
\psfrag{70}[cc]{\small $70$}
\psfrag{80}[cc]{\small $80$}
\psfrag{G}[cc]{\bf \large\hspace{1pt}G}
\psfrag{AAAAA}[cc]{\bf \scriptsize $\phantom{XX}\boldsymbol{1}$}
\psfrag{BBBBB}[cc]{\bf \scriptsize $\phantom{XX}\boldsymbol{0.8}$}
\psfrag{CCCCC}[cc]{\bf \scriptsize $\phantom{XX}\boldsymbol{0.4}$}
\psfrag{DDDDD}[cc]{\bf \scriptsize $\phantom{XX}\boldsymbol{0.1}$}
\psfrag{X: 15}[lc]{\color{Green4} \bf S}
\psfrag{Y: 80}{}
\psfrag{X: 80}[lc]{\color{OrangeRed4}\bf G}
\psfrag{Y: 20}{}
\subfigure[]{ \includegraphics[width=2.5in]{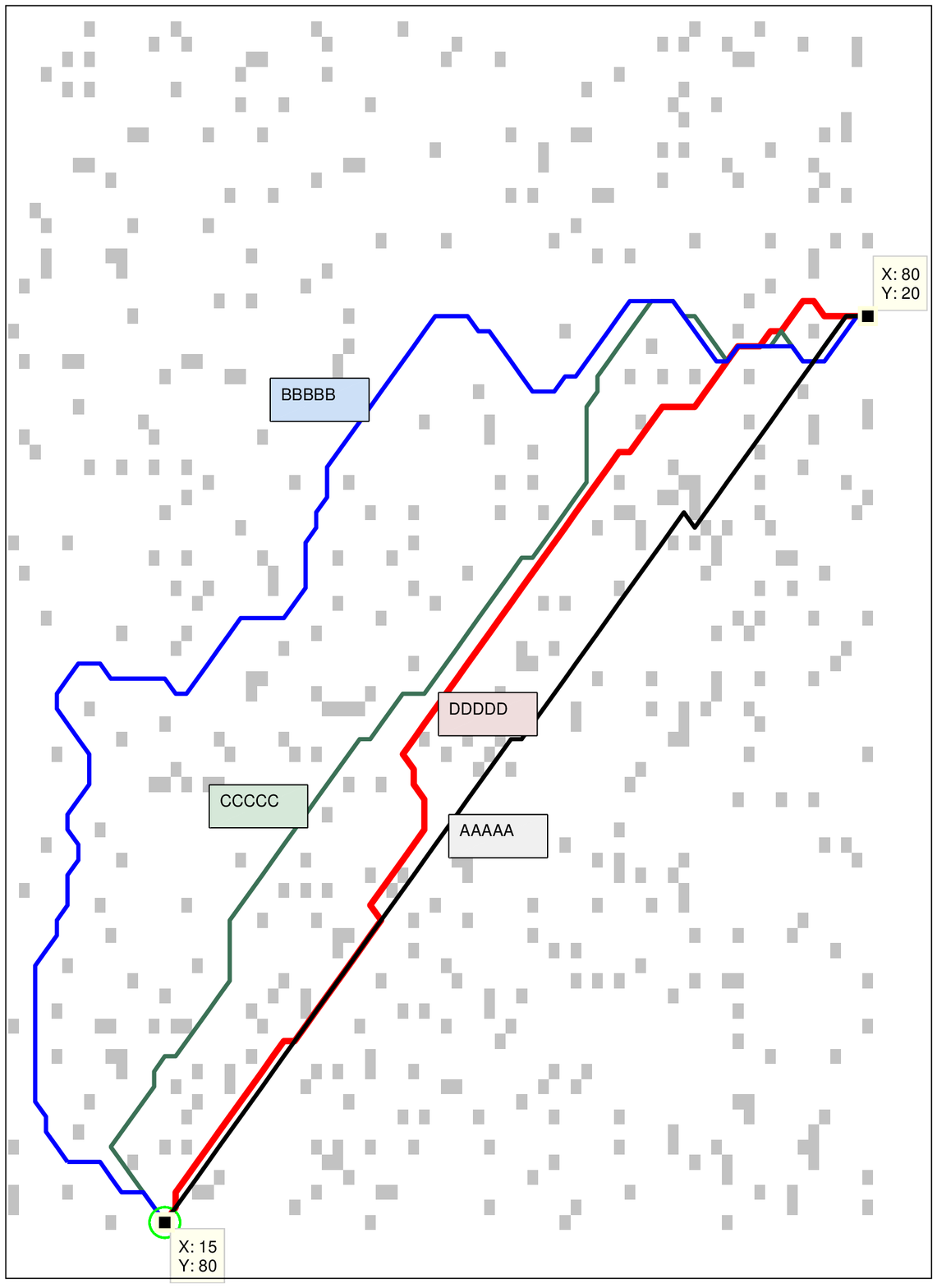}\vspace{10pt}}
\hspace{20pt}
\psfrag{g}[rc]{\Mblue \footnotesize  \txt{$\gamma(\Gnm)$}}
\psfrag{P}[rc]{\Mblue \footnotesize  \txt{Path Lengths}}
\psfrag{S}[lc]{\Mblue \footnotesize   \txt{Simulated Data}}
\psfrag{Fit}[cl]{\Mblue \footnotesize  \txt{\\ Polynomial Fit}}
\psfrag{                                                  A}[cl]{}
\psfrag{B}[cl]{}
\psfrag{0}[c]{\footnotesize  }
\psfrag{1}[c]{\footnotesize  $1$}
\psfrag{0.95}[c]{\footnotesize  $0.95$}
\psfrag{0.9}[c]{\footnotesize  $0.9$}
\psfrag{0.85}[c]{\footnotesize  $\phantom{0}$.85}
\psfrag{0.8}[c]{\footnotesize  $0.8$}
\psfrag{0.75}[c]{\footnotesize  0.75}
\psfrag{0.7}[c]{\footnotesize  $0.7$}
\psfrag{0.2}[c]{\footnotesize  $0.2$}
\psfrag{0.6}[c]{\footnotesize  $0.6$}
\psfrag{0.1}[c]{\footnotesize  $0.1$}
\psfrag{0.5}[c]{\footnotesize $0.5$ }
\psfrag{0.4}[c]{\footnotesize $0.4$ }
\psfrag{0.3}[c]{\footnotesize $0.3$  }
\psfrag{90}[c]{\footnotesize  $90$}
\psfrag{70}[c]{\footnotesize  $70$}
\psfrag{100}[c]{\footnotesize  $100$}
\psfrag{105}[c]{\footnotesize  $105$}
\psfrag{110}[c]{\footnotesize  $110$}
\psfrag{80}[c]{\footnotesize  $80$}
\psfrag{120}[c]{\footnotesize  $120$}
\subfigure[]{\includegraphics[width=3in]{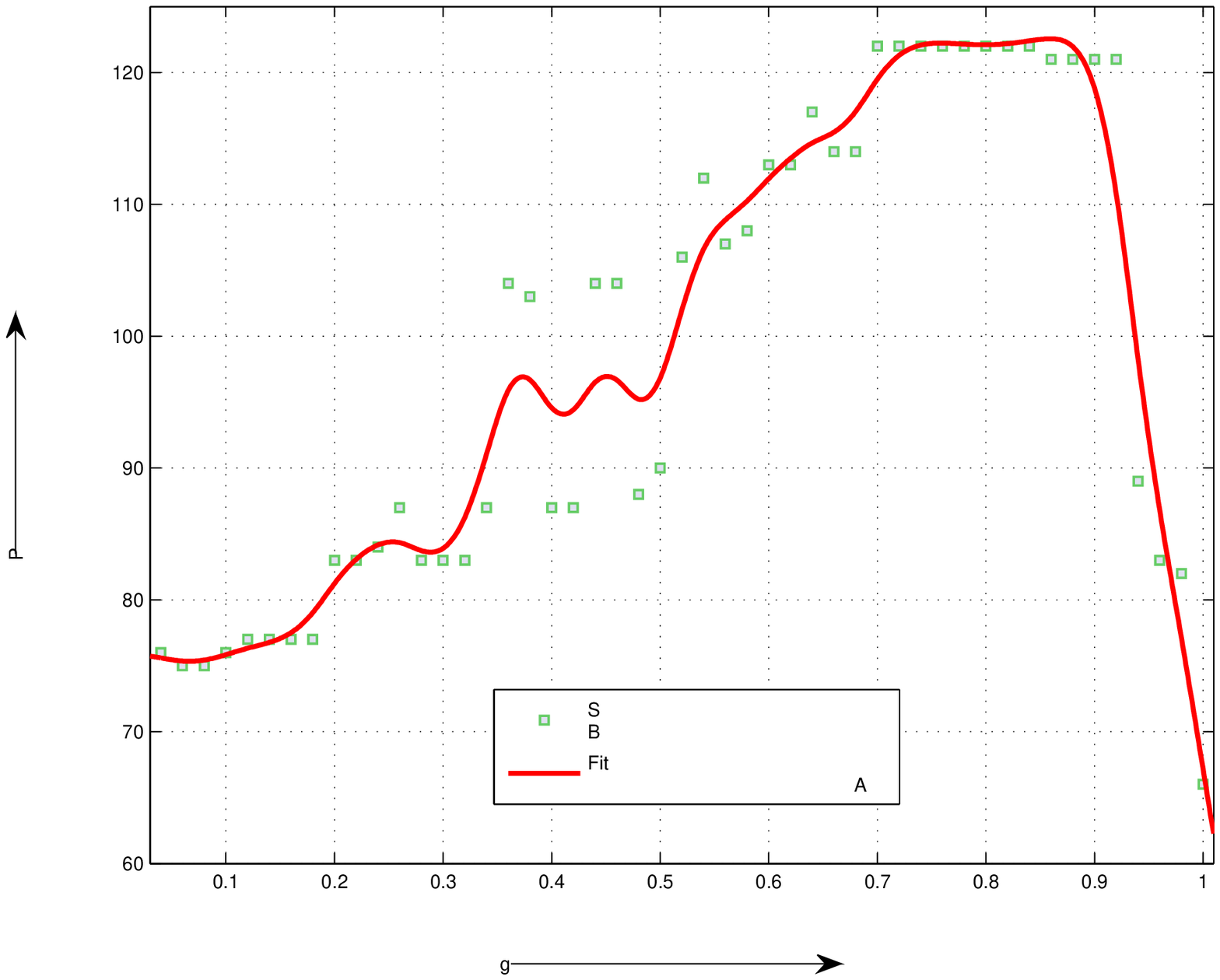}}
\caption{Effect of dynamic uncertainty on the optimal path lengths computed by $\nustar$. Plate (a) illustrates the optimal paths for
$\gamma(\Gnm) = 1.0,0.8,0.4,0.1$ from the start location marked by $S$ and the goal marked by $G$. Plate (b) illustrates the variation of the length of the optimal paths 
as a function of $\gamma(\Gnm)$ for the maze illustrated in (a). }\label{figpenalty}
\end{figure*}
\subsection{Experimental Runs on SEGWAY RMP 200}
The proposed algorithm  is validated on a SEGWAY RMP 200 which is a two-wheeled robot
with significant dynamic uncertainty. In particular, the inverted-pendulum dynamics prevents
the platform from halting instantaneously, and making sharp turns at higher velocities. At low velocities, however, the platform can make zero radius turns.
The global positional fix is provided via an (in-house developed) over-head multi-camera vision system, which identifies the position and orientation of the 
robot in the testbed. The vision system yields a positional accuracy of $\pm 7.5 \ \mathsf{cm}$, and a heading accuracy or $\pm 0.1 \ \mathsf{rad}$ for a stationary robot. The accuracy deteriorates 
significantly for a mobile target, but noise correction is intentionally not applied to simulate a high noise uncertain work environment.
Furthermore, the cameras communicate over a shared  wireless network and randomly suffers from  communication delays from time to time, leading to delayed positional 
 updates to the platform. In the experimental runs conducted
at NRSL  the workspace discretized into
a $53 \times 29$ grid. Each grid location is about $4$ sq. ft. allowing the SEGWAY to fit completely  inside
each such discretized positional state which justifies the simplified \textit{circular robot} modeling. The runs are illustrated in Figure~\ref{figsimres}.
The robot was run at various allowed top speeds ($v_{max}$) ranging from $0.5 \ \mathsf{mph}$ to over $2.25 \ \mathsf{mph}$. Only the extreme cases are illustrated in the figure.
For each speed, the uncertainty parameters were estimated using the formulation presented in Section~\ref{secuncertain}. The sequence of computational steps 
for the low velocity case ($v_{max} = 0.5 \ \mathsf{mph}$) are shown in Figure~\ref{fighistex1}. Note the coefficient of dynamic deviation for the low velocity case 
turns out to be $\gamma^{low}=0.973$. For the high velocity case, ($v_{max} = 2.25 \ \mathsf{mph}$), the coefficient is computed to have a value of $\gamma^{high}=0.93$ (calculation not shown).
Also, the robot is equipped with an on-board low-level reactive collision avoidance algorithm, which ensures that the platform does not \textit{actually} collide due 
path deviations; but executes local reactive corrections when faced with such situations. The platform is equipped with multiple high frequency sonars, infra-red range finders and a high-precision SICK LMS 200 laser range finder. The data from these multiple ranging devices, mounted at various key locations on the platform, must be fused to obtain correct situational awareness. In this 
paper, we skip the details of this on-board information processing for the sake of brevity. The overall scheme is illustrated in Figure~\ref{figexptsetup}

In the experimental runs, we choose two waypoints (marked $A$ and $B$) in Figure~\ref{figsimres} (plates a,b,d,e), and the mission is to  plan and execute the optimal routes in sequence 
from $A$ to $B$, back  to $A$ and repeat the sequence a specified number of times (thirty). 
This particular mission is executed for each top speed for a range of $\gamma$ values, namely with $\gamma \in [0.75, 1]$ with increments of  $0.1$. The expectation is that using the correct coefficient of dynamic deviation (as computed above for the two chosen speeds) for the given speed,
would result in the minimum number of local corrections, leading to minimum average traversal times over thirty laps.

The results are summarized in plates (c) (for the low velocity case) and (f) (for the high velocity case) in Figure~\ref{figsimres}. Note the fitted curve in both cases attain the minimal point very close to the corresponding computed $\gamma$ values, namely, $0.97$ for $v_{max} = 0.5 \ \mathsf{mph}$ and $0.92$ for $v_{max} = 2.25 \ \mathsf{mph}$.
A visual comparison of the trajectories in the plates (a) and (d) clearly reveal that the path execution has significantly more uncertainties in the high velocity case. Also
note, that the higher average speed leads to repeated loss of position fix information in locations around $(row=20,column=35)$. Plates (b) and (e) illustrate the 
sequence of waypoints invoked by the robot in the two cases, being the centers of the states in the navigation automaton that the robot visits during mission execution. Note that in the high velocity case, the variance of the trajectories is higher leading to a larger set of waypoints been invoked.
Note that three distinct zones (denoted as Zone A, Zone B and Zone C) can be identified in the plates (e) and (f) of Figure~\ref{figsimres}. Zone A reflects the 
operation when $\gamma$ is (incorrectly assumed to be) too large, leading to too many corrections, and hence execution time can be reduced by reducing $\gamma$.
In Zone B, reducing $\gamma$ increases execution time, since now the trajectories becomes unnecessarily safe, $i.e.$ stays away from obstacles way more than necessary leading to longer than required paths and hence increased execution time. Zone C represents a sort of saturation zone where reducing $\gamma$ has no significant effect, arising from the fact that 
the paths cannot be made arbitrarily safe by increasing path lengths. Although the experimental runs were not done for smaller values of $\gamma$, we can say from the experience with maze simulations (See Section~\ref{subsecmaze}), that the execution times will start reducing again as $\gamma$ is further reduced.

These results clearly show that the approach presented in this paper successfully integrates amortized dynamical uncertainty  with autonomous planning, and 
establishes a computationally efficient framework to cyber-physical motion planning.

\begin{figure}[!ht]
\centering
\includegraphics[width=3in,angle=0]{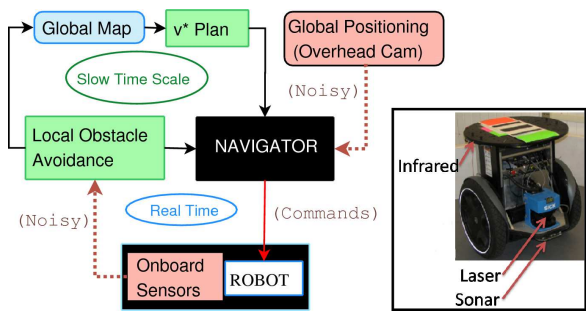}
\caption{Autonomous navigation scheme with $\nustar$ implementation  on heavily instrumented Segway RMP (shown in inset). 
}
\label{figexptsetup}
\end{figure}
\begin{figure*}[!ht]
\centering
\psfrag{A}[bc]{\bf \color{blue}A}
\psfrag{B}[cc]{\bf \color{blue}B}
\psfrag{5}[cc]{\small $5$}
\psfrag{10}[cc]{\small $10$}
\psfrag{15}[cc]{\small $15$}
\psfrag{20}[cc]{\small $20$}
\psfrag{25}[cc]{\small $25$}
\psfrag{30}[cc]{\small $30$}
\psfrag{35}[cc]{\small $35$}
\psfrag{40}[cc]{\small $40$}
\psfrag{45}[cc]{\small $45$}
\psfrag{50}[cc]{\small $50$}
\psfrag{60}[cc]{\small $60$}
\psfrag{70}[cc]{\small $70$}
\psfrag{80}[cc]{\small $80$}
\subfigure[]{\includegraphics[width=1.5in]{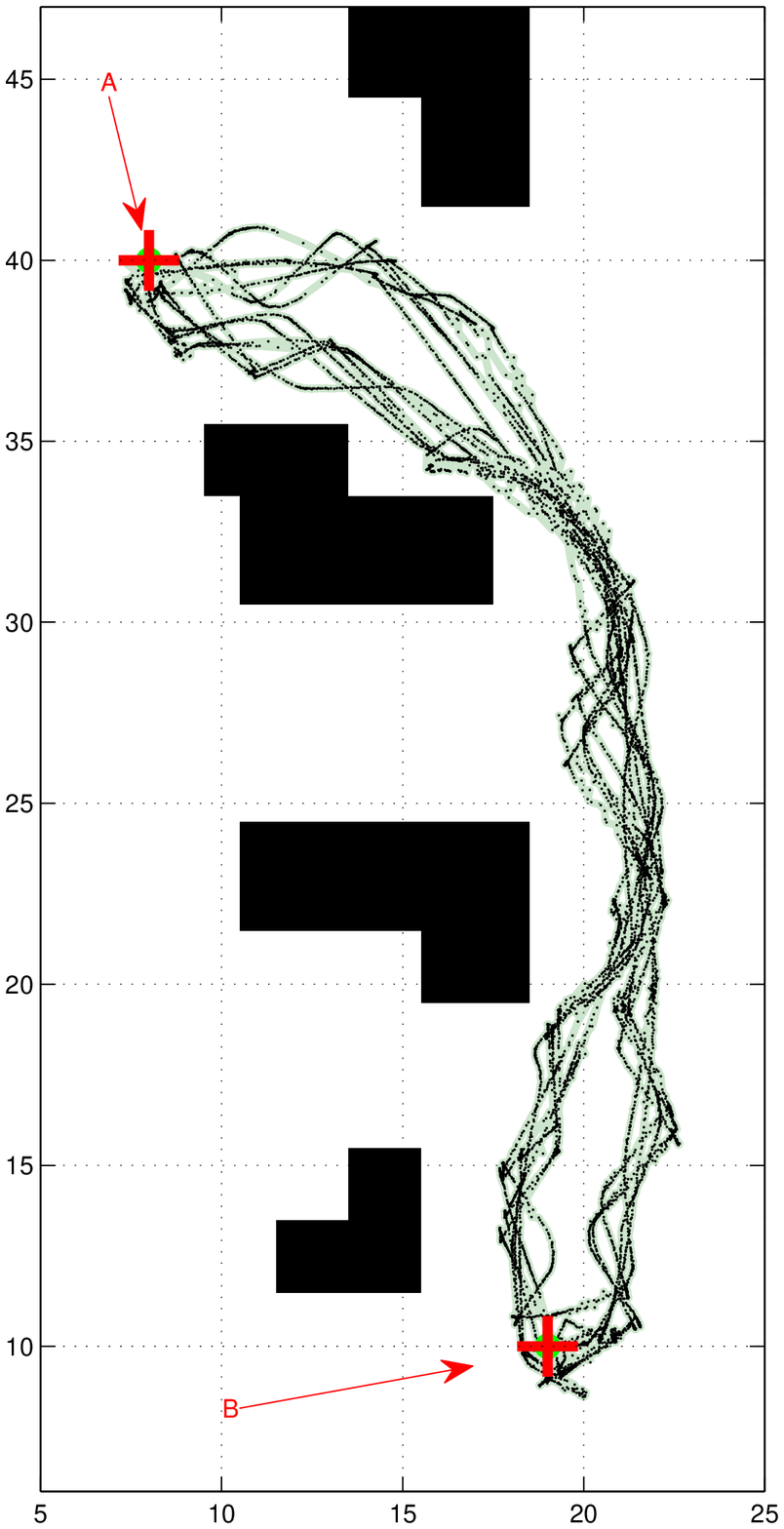}}
\subfigure[]{\includegraphics[width=1.5in]{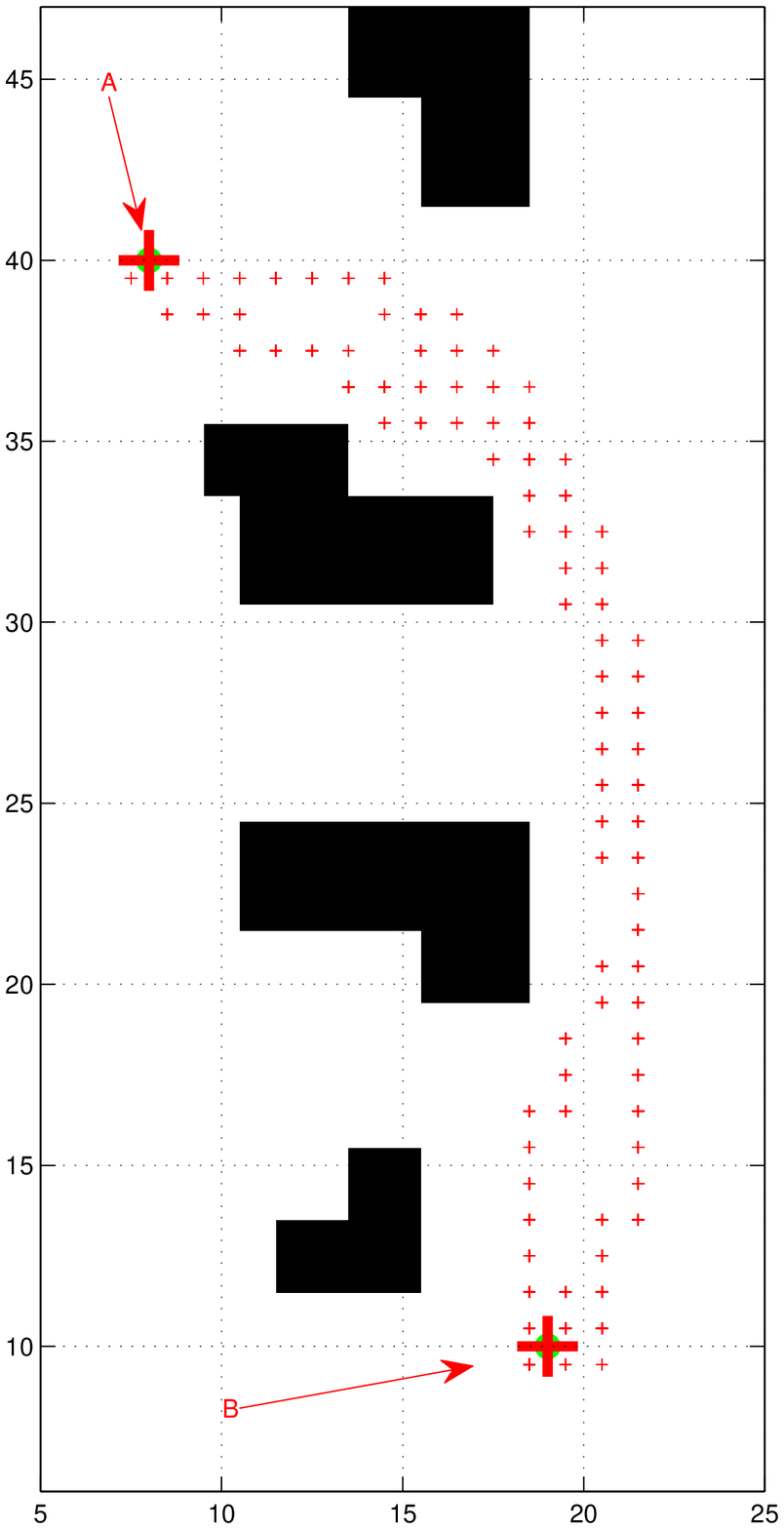}}
\hspace{20pt}
\subfigure[]{\includegraphics[width=1.5in]{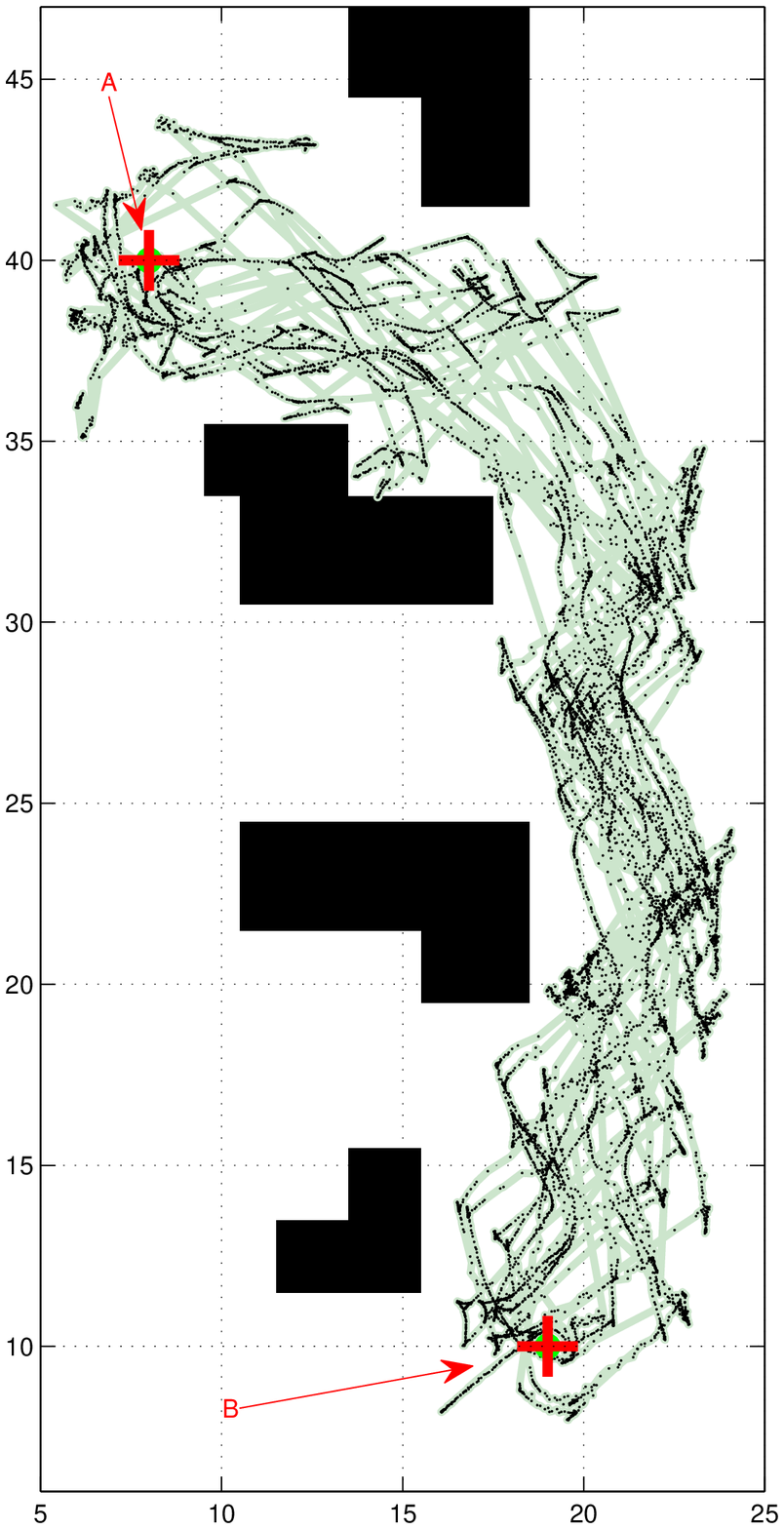}}
\subfigure[]{\includegraphics[width=1.5in]{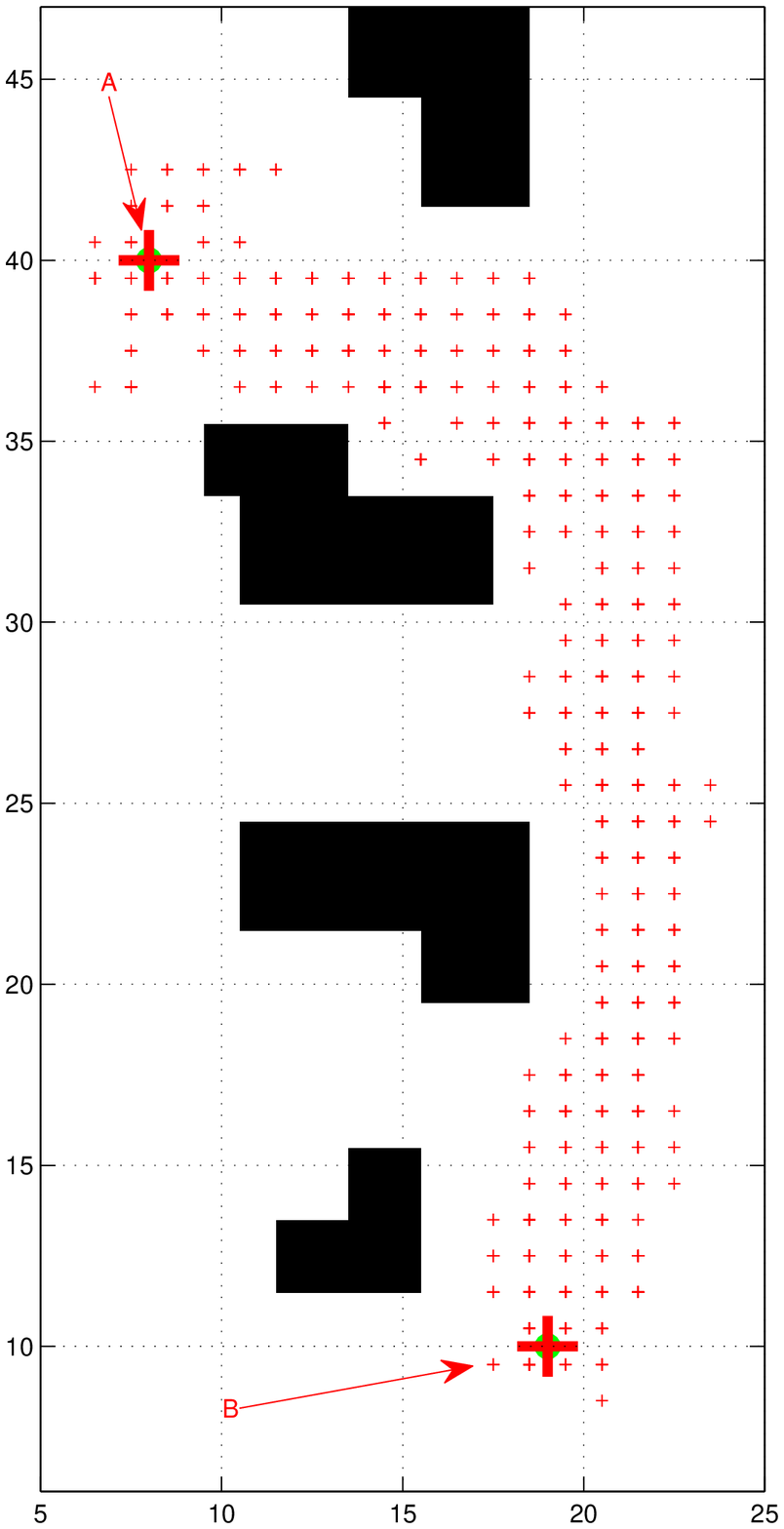}}\\
\psfrag{0.75}[lc]{\small $0.75$}
\psfrag{0.8}[lc]{\small $0.8$}
\psfrag{0.85}[lc]{\small $0.85$}
\psfrag{0.9}[lc]{\small }
\psfrag{0.91}[lc]{\small $0.91$}
\psfrag{0.92}[lc]{\small }
\psfrag{0.93}[lc]{\small $0.93$}
\psfrag{0.94}[lc]{\small }
\psfrag{0.95}[lc]{\small $0.95$}
\psfrag{0.96}[lc]{\small }
\psfrag{0.97}[lc]{\small $0.97$}
\psfrag{0.98}[lc]{\small }
\psfrag{0.99}[lc]{\small $0.99$}
\psfrag{145}[cc]{\small $145$}
\psfrag{150}[cc]{\small $150$}
\psfrag{155}[cc]{\small $155$}
\psfrag{160}[cc]{\small $160$}
\psfrag{165}[cc]{\small $165$}
\psfrag{170}[cc]{\small $170$}
\psfrag{175}[cc]{\small $175$}
\psfrag{180}[cc]{\small $180$}
\psfrag{185}[cc]{\small $185$}
\psfrag{190}[cc]{\small $190$}
\psfrag{195}[cc]{\small $195$}
\psfrag{200}[cc]{\small $200$}
\psfrag{205}[cc]{\small $205$}
\psfrag{210}[cc]{\small $210$}
\psfrag{Z1}[lc]{\BRed\bf\txt{Zone\\A}}
\psfrag{Z2}[cc]{\BRed\bf\txt{Zone\\B}}
\psfrag{Z3}[lc]{\BRed\bf\txt{Zone\\C}}
\psfrag{               D}[bc]{\Mblue \scriptsize Mean Runtime}
\psfrag{               F                    }[bc]{\BRed \scriptsize  Smooth Spline}
\psfrag{D               }[bl]{\Mblue \scriptsize Mean Runtime}
\psfrag{F                         }[bl]{\BRed \scriptsize Smooth Spline}
\psfrag{D}[rc]{\Mblue \footnotesize  \txt{$\gamma(\Gnm)$}}
\psfrag{T}[rb]{\Mblue \footnotesize  \txt{Time (sec)}}
\hspace{10pt}
 \subfigure[]{\includegraphics[width=2.65in]{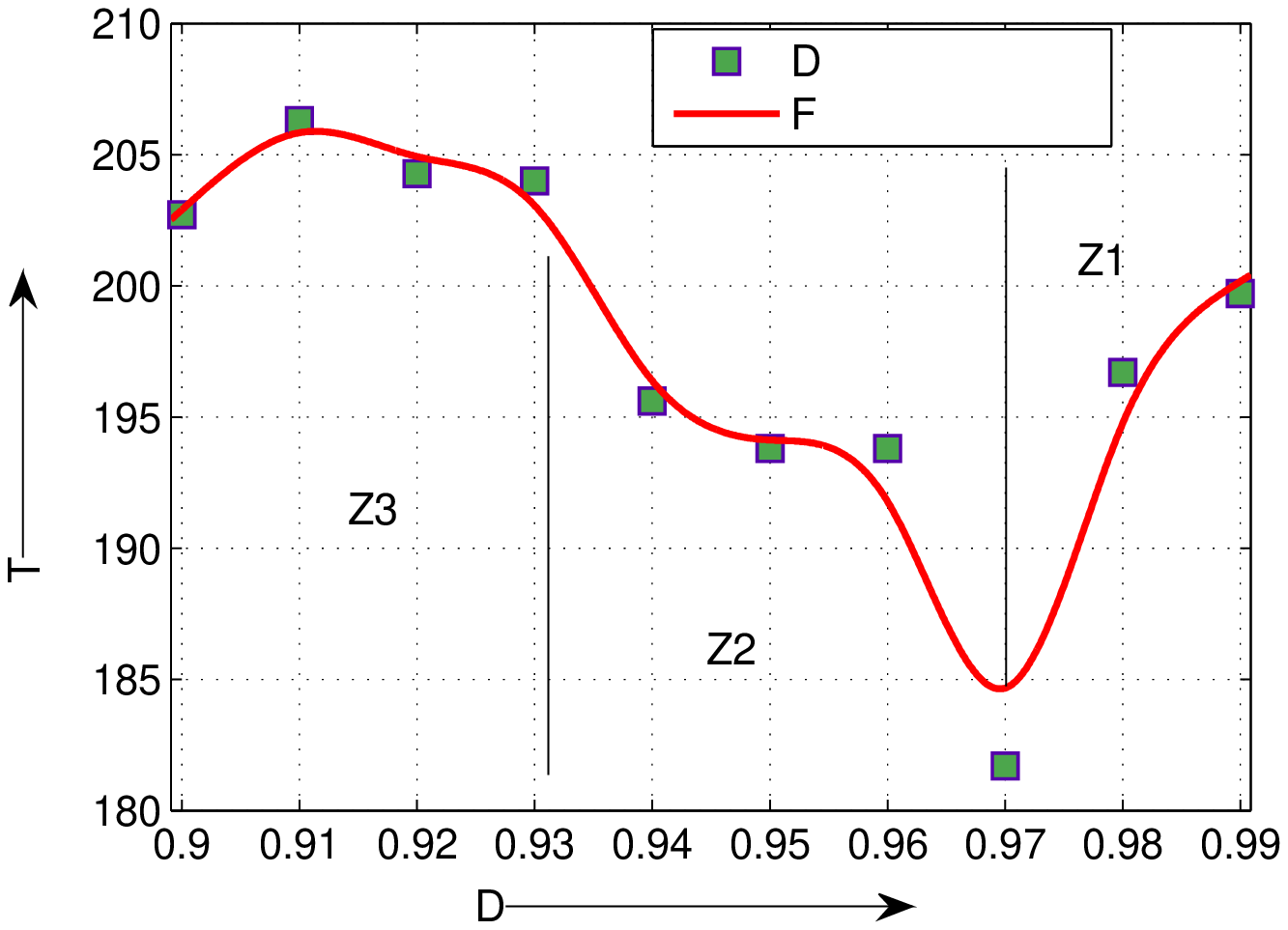}}
\psfrag{0.9}[lc]{\small $0.9$}
 \hspace{20pt}
\subfigure[]{\includegraphics[width=2.65in]{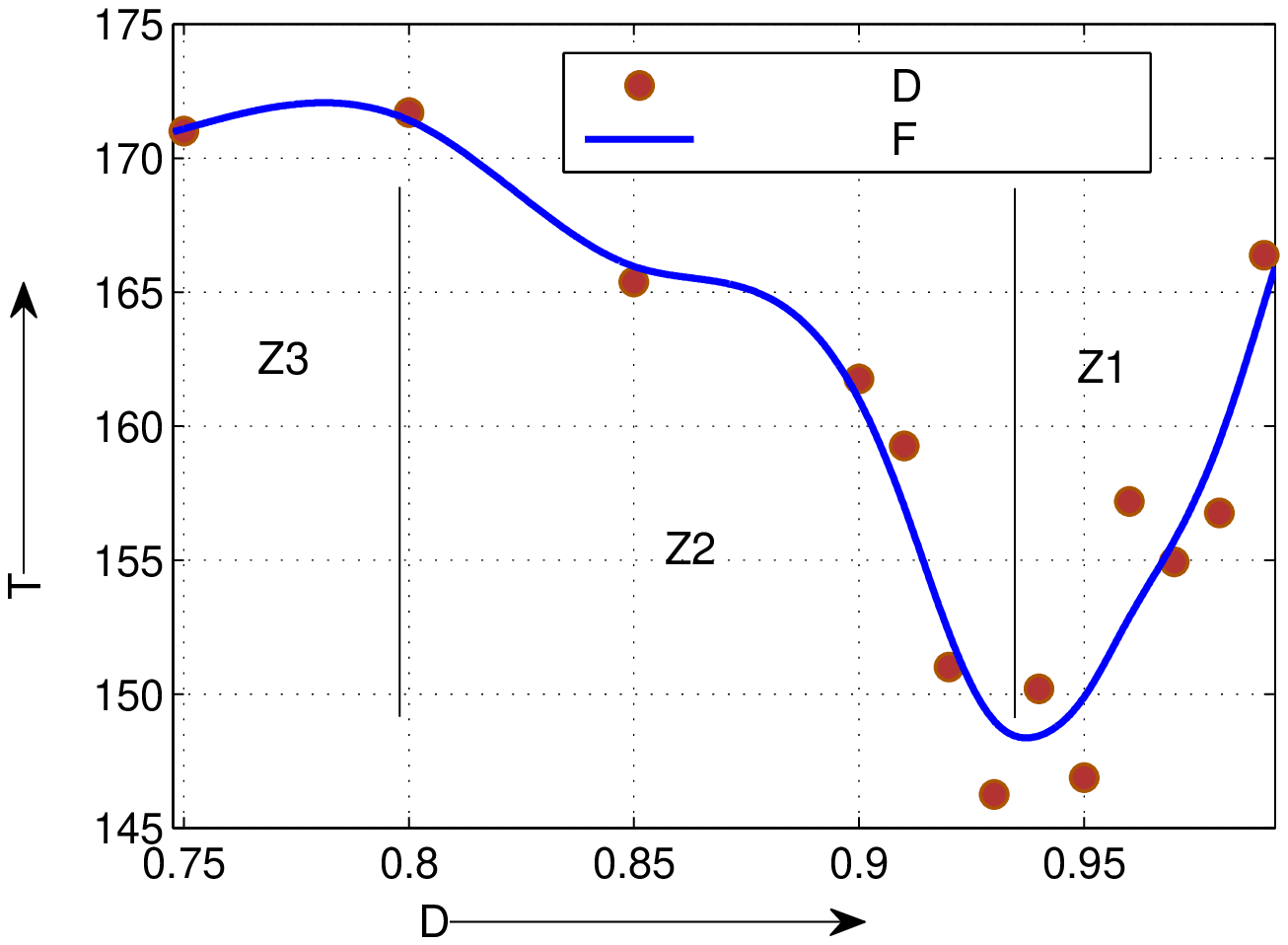}}
\caption{Experimental runs on SEGWAY RMP 200: (a)-(c) Low speed runs and (d)-(f) High speed runs:
 Plates (a) and (d) shows the trace of the robot positions as read by the overhead vision system at NRSL for
the low and high speed runs respectively. (b) and (e) shows the waypoints invoked by the robot in course 
of executing the specified mission in the low and high speed cases respectively. Plates (c) and (f) illustrate the 
variation of the mean mission execution times with the coefficient of dynamic deviation used fro planning in the 
low and high speed cases respectively.
}\label{figsimres}
\end{figure*}
\section{Summary \& Future Research}\label{secsummary}
The recently proposed PFSA-based path planning algorithm $\nustar$ is 
generalized to handle amortized dynamic uncertainties in plan execution, arising from the physical limitations of 
sensing and actuation, and the inherent dynamic response of the physical platforms. The key to this generalization is the 
introduction of uncontrollable transitions in the modified navigation automaton, and showing that $\nustar$ can be implemented in a recursive fashion to 
guarantee plan optimality under such circumstances. The theoretical algorithmic results is verified in detailed high-fidelity simulations and subsequently validated 
in experimental runs on the SEGWAY RMP 200 at NRSL, Pennstate.
\vspace{0pt}
\subsection{Future Work}
Future work will extend the language-measure theoretic planning algorithm to address the following problems:
\begin{enumerate}
\item \textbf{Multi-robot coordinated planning: } Run-time complexity grows exponentially with the number of agents if one attempts to solve the full Cartesian product problem. However $\nustar$ can be potentially used to plan individually followed by an intelligent assembly of the plans to take interaction into account.
\item \textbf{Hierarchical implementation to handle very large workspaces:} Large workspaces can be solved more efficiently if planning is done when needed  rather than solving the whole problem at once; however care must be taken to ensure that the computed solution is not too far from the optimal one.
One the areas of current research is an algorithmic decomposition of the configuration space such that individual blocks are solved in parallel on communicating processors, with
the interprocessor communication ensuring close-to-global optimality. We envision such an approach to be ideaally suited to scenarios involving multiple agents distributed over a large workspace which
cooperatively solve the global planning problem in an efficient resource-constrained manner.
\item \textbf{Handling partially observable dynamic events:} In this paper all uncontrollable transitions are assumed to be perfectly observable.
 Physical errors and onboard sensor failures may need to be modeled as unobservable transitions and will be addressed in future publications. A generalization of the measure-theoretic optimization technique under partial observation has been already reported~\cite{CR09p}. The future goal in this direction is to incorporate the modifications to allow $\nustar$ handle loss 
of observation and feedback information.
\end{enumerate}
}       

\section*{References}
\bibliographystyle{elsarticle-num}
\bibliography{BibLib1}
\end{document}